\documentclass{article}
\PassOptionsToPackage{dvipsnames}{xcolor}

\usepackage{microtype}
\usepackage{graphicx}
\usepackage{subcaption}
\usepackage{booktabs}
\usepackage{hyperref}

\usepackage[accepted]{icml2026}

\usepackage{amsmath}
\usepackage{amssymb}
\usepackage{mathtools}
\usepackage{amsthm}
\usepackage{enumerate}
\usepackage{color}
\usepackage{bm}
\usepackage{bbm}
\usepackage{xcolor}
\usepackage{wrapfig}
\usepackage{tikz}
\usetikzlibrary{positioning,arrows.meta,fit,calc}
\usetikzlibrary{fit}

\usepackage{tcolorbox}
\usepackage{listings}
\tcbuselibrary{listings}

\newtcblisting{qabox}[1][]{
  title={#1},
  fonttitle=\small,
  fontupper=\small,
  left=2mm,
  right=2mm,
  top=1mm,
  bottom=1mm,
  listing only,
  listing options={
    breaklines=true,
    breakatwhitespace=false,
    breakindent=0pt,     
    columns=fullflexible,
    keepspaces=true,
    showstringspaces=false,
    basicstyle=\ttfamily\small
  },
}

\usepackage[capitalize,noabbrev]{cleveref}

\theoremstyle{plain}
\newtheorem{theorem}{Theorem}[section]
\newtheorem{proposition}[theorem]{Proposition}
\newtheorem{lemma}[theorem]{Lemma}

\theoremstyle{definition}
\newtheorem{definition}[theorem]{Definition}

\newtheorem{axiom}[theorem]{Axiom}
\theoremstyle{remark}
\newtheorem{remark}[theorem]{Remark}

\definecolor{owner}{HTML}{1F77B4}
\definecolor{anchor}{HTML}{FF7F0E}

\usepackage[disable,textsize=tiny]{todonotes}

\icmltitlerunning{Priority-Aware Shapley Value}

\usepackage{tabu}

\theoremstyle{plain}

\usepackage{accents}

\renewcommand{\tilde}{\widetilde}

\newcommand{\ep}{\mathbb{E}}
\newcommand{\pr}{\mathbb{P}}

\definecolor{mccolor}{rgb}{0.3010, 0.7450, 0.9330}
\definecolor{edgecolor}{rgb}{0, 0.5, 0}
\definecolor{respcolor}{rgb}{0.4940, 0.1840, 0.5560}
\definecolor{subscolor}{rgb}{0.8500, 0.3250, 0.0980}

\usepackage{hyperref}
\hypersetup{
    colorlinks=true,
    citecolor=blue,
    citebordercolor=red
}

\begin{document}

\twocolumn[
  \icmltitle{Priority-Aware Shapley Value}
  \begin{icmlauthorlist}
    \icmlauthor{Kiljae Lee}{osu}
    \icmlauthor{Ziqi Liu}{cmu}
    \icmlauthor{Weijing Tang}{cmu}
    \icmlauthor{Yuan Zhang}{osu}
  \end{icmlauthorlist}

  \icmlaffiliation{osu}{Department of Statistics, The Ohio State University, OH, USA}
  \icmlaffiliation{cmu}{Department of Statistics and Data Science, Carnegie Mellon University, PA, USA}

  \icmlcorrespondingauthor{Yuan Zhang}{yzhanghf@stat.osu.edu}

  \icmlkeywords{Shapley value, data valuation, feature attribution, priority, precedence, Machine Learning, ICML}

  \vskip 0.3in
]

\printAffiliationsAndNotice{} 

\begin{abstract}
    {
    Shapley values are widely used for model-agnostic data valuation and feature attribution, yet they implicitly assume contributors are interchangeable. 
    This can be problematic when contributors are dependent (e.g., reused/augmented data or causal feature orderings) or when contributions should be adjusted by factors such as trust or risk. 
    We propose {\bf Priority-Aware Shapley Value (PASV)}, which incorporates both hard precedence constraints and soft, contributor-specific priority weights.  
    PASV is applicable to general precedence structures, recovers precedence-only and weight-only Shapley variants as special cases, and is uniquely characterized by natural axioms.
    We develop an efficient adjacent-swap Metropolis–Hastings sampler for scalable Monte Carlo estimation and analyze limiting regimes induced by extreme priority weights.
    Experiments on data valuation (MNIST/CIFAR10) and feature attribution (Census Income) demonstrate more structure-faithful allocations and a practical sensitivity analysis via our proposed ``priority sweeping".
    }
\end{abstract}

\section{Introduction}

Recent advances in machine learning have fueled growing interest in \emph{data-centric AI} and \emph{explainable AI}, where a central goal is to quantify how much each training example (data valuation) or each input variable (feature attribution) contributes to a model's performance or predictions \citep{zha2025data,mersha2024explainable}.
Shapley-type values are a standard tool for \emph{model-agnostic} attribution \citep{shapley1953value, ghorbani2019data, lundberg2017unified, kwon2021beta, wang2023data}.
A key axiomatization choice behind the Shapley value is that it averages marginal contributions over \emph{all} player permutations, implicitly treating players as interchangeable.

\begin{figure}[t]
\centering
\resizebox{0.98\linewidth}{!}{%
\begin{tikzpicture}[
    >=Stealth,
    player/.style={
        circle,
        draw,
        thick,
        inner sep=0pt,
        minimum size=4.8mm,
        font=\scriptsize,
        fill=blue!10
    },
    smallplayer/.style={player, minimum size=2.5mm},
    medplayer/.style={player, minimum size=4.2mm},
    largeplayer/.style={player, minimum size=6mm},
    xlplayer/.style={player, minimum size=7.5mm},
    edge/.style={->, thick},
    edgedash/.style={->, thick, dashed}
]

\def\xgap{1.2}      
\def\ygap{0.4}   
\def\panelgap{2.2}  
\def\laby{-0.95} 

\begin{scope}[xshift=0cm, yshift=0cm]
    \node[medplayer] (a1) at (0,\ygap) {1};
    \node[medplayer] (a2) at (\xgap,\ygap) {2};
    \node[medplayer] (a3) at (0,-\ygap) {3};
    \node[medplayer] (a4) at (\xgap,-\ygap) {4};

    \node[font=\scriptsize] at (0.5*\xgap,\laby) {(a) No priority};
\end{scope}

\begin{scope}[xshift=\panelgap cm, yshift=0cm]
    \node[medplayer] (b1) at (0,\ygap) {1};
    \node[medplayer] (b2) at (\xgap,\ygap) {2};
    \node[medplayer] (b3) at (0,-\ygap) {3};
    \node[medplayer] (b4) at (\xgap,-\ygap) {4};

    \draw[edge] (b1) -- (b2);
    \draw[edge] (b3) -- (b2);
    \draw[edge] (b3) -- (b4);

    \node[font=\scriptsize] at (0.5*\xgap,\laby) {(b) Hard priority};
\end{scope}

\begin{scope}[xshift=2*\panelgap cm, yshift=0cm]
    \node[medplayer]    (c1) at (0,\ygap) {1};
    \node[xlplayer]     (c2) at (\xgap,\ygap) {2};
    \node[largeplayer]  (c3) at (0,-\ygap) {3};
    \node[smallplayer]  (c4) at (\xgap,-\ygap) {4};

    \draw[edgedash] (c1) -- (c2);
    \draw[edgedash] (c3) -- (c2);
    \draw[edgedash] (c1) -- (c4);
    \draw[edgedash] (c3) -- (c4);

    \node[font=\scriptsize] at (0.5*\xgap,\laby) {(c) Soft priority};
\end{scope}

\begin{scope}[xshift=3*\panelgap cm, yshift=0cm]
    \node[medplayer]    (d1) at (0,\ygap) {1};
    \node[xlplayer]     (d2) at (\xgap,\ygap) {2};
    \node[largeplayer]  (d3) at (0,-\ygap) {3};
    \node[smallplayer]  (d4) at (\xgap,-\ygap) {4};

    \draw[edge] (d1) -- (d2);
    \draw[edge] (d3) -- (d2);
    \draw[edge] (d3) -- (d4);

    \node[font=\scriptsize] at (0.5*\xgap,\laby) {(d) Proposed work};
\end{scope}

\end{tikzpicture}%
}
\caption{Illustration of priority structures.  Dashed arrows in (c): existing works on (c) require special precedence structures.}
\label{fig:priority}
\end{figure}

However, in many modern machine learning applications, this exchangeability is questionable for two distinct reasons.
First, players can exhibit \emph{precedence relations} in between.
For example, training data may be augmented, copied, or otherwise reused, so a downstream contribution can be largely derivative of an upstream source \citep{aas2021explaining}; likewise, domain or causal knowledge can impose that some features precede others, making certain orders scientifically meaningless \citep{frye2020shapley,janzing2020feature}.
Recent work further highlights that Shapley-type payoff methods can face \emph{copier attack}.
This has inspired replication-robust formulations \citep{han2022replication, Falconer2025-kj} and group-based approaches \citep{lee2025faithful} designed to mitigate such manipulations, reinforcing the need to explicitly incorporate precedence into valuation.

Second, practitioners often have \emph{trust/risk considerations}, e.g., some data points may be less trusted for their independence, more costly to curate, or carry higher legal/compliance risk \citep{wu2023variance,wang2023data}.
Therefore, attribution should reflect two types of \emph{priority}: (i) \emph{hard priority} that rules out orders violating known precedence, and (ii) \emph{soft priority} that adjusts the attribution computation between players under the hard priority infrastructure.

Existing Shapley variants typically address these two aspects in isolation.
Precedence-based methods restrict attention to precedence-feasible orders \citep{faigle1992shapley}, while weight-based methods bias the ordering distribution via player-specific weights, but exclusively for special precedence structures \citep{kalai1987weighted,nowak1995axiomatizations}.
Recent machine learning adaptations apply these ideas to data valuation and feature attribution \citep{zheng2025rethinking,frye2020asymmetric}, but commonly rely on coarse precedence specifications that are easy to handle computationally without offering a unified mechanism to combine \emph{general} precedence constraints with \emph{individual} priorities.

To this end, we propose the {\bf Priority-Aware Shapley Value (PASV)}, a single Shapley-type attribution method that simultaneously captures 
(i) \emph{hard} precedence constraints among players, and 
(ii) \emph{soft} player-specific priority weights (Figure~\ref{fig:priority}).
PASV computes each player's value by averaging its marginal contribution over precedence-feasible orders, while biasing the order distribution according to the specified priorities.
This unified view is designed to be both expressive (supporting general precedence constraints) and practically feasible (supporting scalable computation for large player sets).

Our main contributions are:
\begin{itemize}
    \item \textbf{Unified formulation.} We introduce PASV, a Shapley-type value that simultaneously incorporates general precedence constraints and soft priority weights within one coherent random-order framework.
    
    \item \textbf{Theory and interpretability.} We provide an axiomatic characterization that singles out PASV as a canonical precedence-respecting, weight-aware rule, and we analyze limiting regimes that clarify how extreme priorities can modify the underlying precedence structure.
    
    \item \textbf{Scalable computation.} We develop an efficient MCMC-based estimator for sampling precedence-feasible orders under non-uniform weighting, enabling Monte Carlo estimation under general precedence.
    
    \item \textbf{Practical diagnostic via priority sweeping.}
    Beyond producing a single allocation, PASV provides an actionable \emph{priority-sweeping} stress test: by varying one player (or group)’s weight while keeping others' weights and the precedence graph fixed, practitioners can quantify the sensitivity to trust/risk considerations, flag contributors whose credit is unstable and may significantly impact the result, and support more robust valuation/attribution decisions. 
\end{itemize}

\section{Preliminaries}
\label{sec:prelim}

\subsection{From Shapley Value to Random Order Values}
\label{sec:prelim_rov}

Let $[n]=\{1,\ldots,n\}$ denote the set of players (data points, features).
For each subset $S\subseteq[n]$, let $U(S)$ be the reward earned by the joint work of members in $S$: here $U: 2^{[n]}\to\mathbb{R}$ is called \emph{utility function}.
The classical royalty sharing problem asks for a principled method to split the total revenue $U([n])$ to each player, considering their contributions on a competitive basis.
\citet{shapley1953value} showed that the {\bf Shapley Value (SV)} $\psi_i^{\rm SV}$ defined as follows is the only payoff method that satisfies four axioms: \emph{efficiency, linearity, null player} and \emph{symmetry} (see Appendix \ref{app:shapley-axioms}):
$$
    \psi_i^{\mathrm{SV}}(U) 
    :=
    \sum_{S \subseteq [n]\setminus\{i\}}
    \frac{|S|!(n-|S|-1)!}{n!}
    \bigl[U(S \cup \{i\}) - U(S)\bigr].
$$
The Shapley value has a very intuitive alternative version, which this paper builds upon.
Imagine that the $n$ players join the game in some order $\pi\in \Pi$, where $\Pi$ is the set of all permutations of $[n]$.
Let $\pi^i$ be the set of players that arrived earlier than $i$.
Define the {\bf random order value (ROV)} as
\begin{equation}
    \psi_i(U)
    = \mathbb{E}_{\pi\sim p}\big[U(\pi^i\cup\{i\})-U(\pi^i)\big],
    \label{eqn:rov}
\end{equation}
where $p$ is a distribution on $\Pi$.
Going forward, we will represent each method we review or propose in terms of their $p$ without repeating \eqref{eqn:rov} (they only differ in their $p$'s).

\subsection{Hard Priority: Partial Order, Directed Acyclic Graph and Linear Extension}
\label{sec:prelim_poset}

The Shapley value $\psi_i^{\mathrm{SV}}(U)$ is the special case $p = \text{Uniform}(\Pi)$.
In presence of priority, we must use a different distribution $p$ that respects the priority relationships.

\begin{definition}[Partial order set (poset)]
    \label{def:poset}
    A binary relation $\preceq$ is a {\bf partial order} if it is
    \emph{reflexive} ($i \preceq i$),
    \emph{antisymmetric} ($i \preceq j$ and $j \preceq i$ imply $i=j$),
    and \emph{transitive} ($i \preceq j$ and $j \preceq k$ imply $i \preceq k$).
    The pair $([n],\preceq)$ is called a {\bf Partially Ordered Set (poset)}.
    Write $i \prec j$ if $i \preceq j$ and $i\neq j$.
    \footnote{It may be cleaner to solely define $\prec$.
    We define and base our narration on $\preceq$ just to better connect to existing literature.}
\end{definition}

A convenient way to represent $([n],\preceq)$ is via a {\bf Directed Acyclic Graph (DAG)}.
Formally, a DAG is a graph on the node set $[n]$ with directed edges, such that for any $(i,j)$ pair,  $i\prec j$ if and only if there exists a directed path on the DAG from $i$ to $j$.
Throughout the paper, we use the equivalent concepts poset and DAG interchangeably for expediency.

The poset/DAG determines which permutation $\pi$'s are valid.

\begin{definition}[Linear extension (LE)]
    \label{def:linear-extension}
    A permutation $\pi\in\Pi$ is a \emph{linear extension} (LE) of the poset $([n],\preceq)$ if
    $i\prec j$ implies that $i$ appears before $j$ in $\pi$.
    Let $\Pi^\preceq$ denote the set of all linear extensions on $[n]$.
\end{definition}

\paragraph{Precedence Shapley Value (PSV) \citep{faigle1992shapley}.}
The PSV replaces the $p\sim$~Uniform$(\Pi)$ in classical SV by $p\sim$~Uniform$(\Pi^\preceq)$, namely,
\begin{equation}
    p_{\mathrm{PSV}}(\pi)
    :=
    \mathbbm{1}_{[\pi \in \Pi^\preceq]}
    \big/
    |\Pi^{\preceq}|.
    \label{eqn:ps-dist}
\end{equation}

\subsection{Soft Priority: Individual (Soft) Priority Weights}
\label{sec:prelim_soft_priority}

While the PSV is a very natural approach to enforce priority, it equally weights all valid linear extensions.
In practice, individual players may vary in their trustworthiness, cost/convenience of data collection, potential legal risk of data use, etc.
We are inclined to include trusted/low-risk data earlier in $\pi$.
Notice that this consideration cannot be captured by the priority relationship (i.e., the DAG).
Therefore, \citet{kalai1987weighted} and \citet{nowak1995axiomatizations} introduced individual weights $\lambda:=(\lambda_1,\ldots,\lambda_n)$, $\lambda_i> 0, \forall i$ and \emph{Weighted Shapley Value (WSV)}.
WSV is defined for a particular class of DAGs, called \emph{ordered partition}.

\paragraph{Ordered Partition.}
Let $(B_1,\dots,B_m)$ be a partition of $[n]$, i.e., $B_i\cap B_j=\emptyset, \forall 1\leq i<j\leq m$ and $\cup_{i=1}^m B_i = [n]$.
The DAG consists of edges $i\prec j$ if and only if $i\in B_r$, $j\in B_{r+1}$ for some $r\in[m-1]$.

In WSV, the permutation $\pi\in \Pi^\preceq$ is sampled in a backward fashion.
To illustrate, suppose $B_1=\{1,2\}, B_2=\{3,4,5\}$.
First sample $\pi_5$ (the last element of $\pi$) from candidates $3,4,5$ with probabilities $(\lambda_3,\lambda_4,\lambda_5)/(\lambda_3+\lambda_4+\lambda_5)$.
Suppose $4$ is selected for $\pi_5$, then select $\pi_4$ among the remaining $3$ and $5$ w.p. $(\lambda_3,\lambda_5)/(\lambda_3+\lambda_5)$, and so on.
After finishing $B_2$, select $\pi_2$ between $1$ and $2$, which also decides $\pi_1$.

\begin{remark}
    \label{remark:lambda-effect}
    From this example, we see that $\lambda$ acts as a \emph{soft priority} parameter: a larger $\lambda_i$ increases the probability that $i$ appears later in $\pi$.
    This is meaningful in real applications because attribution is inherently \emph{context dependent}: the marginal gain of adding $i$ can be very different when the coalition already contains other (more trusted, less risky, or more ``original'') contributors.
    Thus, increasing $\lambda_i$ {\bf does not (imprudently) directly lower $i$'s payoff; instead it prefers to evaluate $i$ under richer contexts}, which can reduce spurious early credit for noisy/duplicated/high-risk contributors while still allowing genuinely complementary contributors to remain valuable even when delayed in $\pi$.
\end{remark}

To formalize this procedure, we define two concepts.

\begin{definition}[Feasible set]
    \label{def:feasible-set}
    A subset $S\subseteq[n]$ is \emph{feasible} if $i\in S$ and $j\preceq i$ imply $j\in S$.
    Denote the collection of feasible sets by $\mathcal{S}^{\preceq}:=\{S\subseteq[n] : S \text{ is feasible}\}$.
\end{definition}

From Definition \ref{def:feasible-set} onward, we slightly abuse notation: while $S$ still denotes a subset of $[n]$, it now conforms to $\preceq$.

\begin{definition}[Maximal elements]
    \label{def:max-elements}
    For $S\in \mathcal{S}^{\preceq}$, the set of \emph{maximal elements}, denoted by $\max_\preceq(S)$, is defined as
    $$
        \max\nolimits_\preceq(S):=\{i\in S: \nexists j\in S \text{ s.t. } i\prec j\}.
    $$
\end{definition}
\paragraph{Weighted Shapley Value (WSV)} \citep{kalai1987weighted,nowak1995axiomatizations}.
Write $\pi=(\pi_1,\dots,\pi_n)\in\Pi^{\preceq}$ and $S_t:=\{\pi_1,\dots,\pi_t\}$, define 
\begin{equation}
    p_{\mathrm{WSV}}(\pi)
    =
    \prod_{t=1}^n
    \frac{\lambda_{\pi_t}}{\sum_{j\in \max_{\preceq}(S_t)} \lambda_j}.
    \label{eqn:ws-dist}
\end{equation}

While WSV addresses the limitation of PSV, it only applies to special DAGs (ordered partitions), limiting its practicality.

\section{Our Method}
\label{sec:method}

\subsection{Desired Properties}
\label{sec:method_desiderata}

Section \ref{sec:prelim} reviewed mainstream existing methods.
They either (i) treat all linear extensions uniformly (PSV), capturing hard priority but no individual weighting; or (ii) incorporate weights under the strong structural assumption of ordered partitions (WSV).
Our goal is to design a general approach that combines their strengths while avoiding their limitations.
Specifically, we look for a $p^{(\preceq, \lambda)}$ that satisfies the following desiderata:

\begin{enumerate}
    \item {\bf Hard priority:} $p^{(\preceq, \lambda)}$ is supported on $\Pi^\preceq$.
    
    \item {\bf Soft priority:} when $i,j$ are both admissible, their selection probabilities should have ratio $\lambda_i:\lambda_j$.

    \item If $\lambda_i\equiv$~constant, $p^{(\preceq, \lambda)}$ reduces to PSV.

    \item If DAG is an ordered partition, $p^{(\preceq, \lambda)}$ reduces to WSV.

    \item $p^{(\preceq, \lambda)}$ should be computationally feasible.
\end{enumerate}

\subsection{Priority-Aware Shapley Value (PASV)}
\label{sec:method_PASV}

Now we propose a $p^{(\preceq, \lambda)}$ that simultaneously captures hard precedence and player-specific weights.
Our method is called {\bf Priority-Aware Shapley Value (PASV)}.
\begin{equation}
	p^{(\preceq, \lambda)}(\pi) \propto \prod_{t=1}^n \frac{\lambda_{\pi_t}|\max_{\preceq}(S_t)|}{\sum_{k\in\max_{\preceq}(S_t)} \lambda_k}, 
    \quad
    \text{ for }\pi\in\Pi^\preceq.
	\label{PASV}
\end{equation}
From \eqref{PASV}, we can see how PASV combines the hard and soft priorities -- while restricting $\pi$ to $\Pi^\preceq$ enforces the hard priority, the soft priorities of admissible candidates for each $\pi_t$ is adjusted by $\lambda_i$'s, thus reflecting the soft priority.
When $\lambda_i$'s are all equal, \eqref{PASV} becomes $p^{(\preceq, \lambda)}(\pi) \propto 1$, and PASV recovers PSV.
When the DAG is an ordered partition, it can be shown that 
$
    p^{(\preceq, \lambda)}(\pi)
    =
    \prod_{t=1}^n
    \frac{\lambda_{\pi_t}}{\sum_{k\in\max_{\preceq}(S_t)} \lambda_k}
$,
thus recovering WSV.
Formally, we have the following results.

\begin{proposition}[Reduction to PSV]
    \label{prop:pasv-to-ps}
    If $\lambda_1=\cdots=\lambda_n$, then 
    $
    p^{(\preceq,\lambda)}(\pi)=p_{\mathrm{PSV}}(\pi),
    \forall\pi\in\Pi^{\preceq}.
    $
\end{proposition}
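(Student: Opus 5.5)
The plan is to substitute the common value of the weights into the unnormalized PASV weight in \eqref{PASV} and observe that it collapses to a constant on $\Pi^\preceq$. Suppose $\lambda_1=\cdots=\lambda_n=c>0$. Fix $\pi\in\Pi^\preceq$ and $t\in[n]$, and set $S_t=\{\pi_1,\dots,\pi_t\}$.

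First I will check that each factor in \eqref{PASV} is well defined. Because $\pi$ is a linear extension, $S_t$ is a feasible set in the sense of Definition~\ref{def:feasible-set}: if $i\in S_t$ and $j\prec i$, then $j$ precedes $i$ in $\pi$, so $j\in S_t$. Moreover, $S_t$ is finite and nonempty, so $\max_\preceq(S_t)\neq\emptyset$ by acyclicity, and the denominator is strictly positive.

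The $t$-th factor then becomes
\[
\frac{c\,|\max_\preceq(S_t)|}{\sum_{k\in\max_\preceq(S_t)} c}
=\frac{c\,|\max_\preceq(S_t)|}{c\,|\max_\preceq(S_t)|}=1 .
\]
Hence the full product equals $1$ for every $\pi\in\Pi^\preceq$.

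Since $p^{(\preceq,\lambda)}$ is supported on $\Pi^\preceq$ and is proportional to this product, it is proportional to $\mathbbm{1}_{[\pi\in\Pi^\preceq]}$. Normalization gives $p^{(\preceq,\lambda)}(\pi)=1/|\Pi^\preceq|$, which is exactly $p_{\mathrm{PSV}}(\pi)$ in \eqref{eqn:ps-dist}. The normalization is valid because $\Pi^\preceq\neq\emptyset$: any finite DAG admits a topological order.

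There is no real obstacle here. The only points needing care are that the denominators do not vanish and that the support of PASV is nonempty, and both follow from the elementary facts about finite posets used above.
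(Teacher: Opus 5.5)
Your proof is correct and takes the same approach as the paper: with equal weights every factor in \eqref{PASV} equals $1$, so the PASV weight is constant on $\Pi^{\preceq}$ and normalizes to $p_{\mathrm{PSV}}$. Your extra checks that $\max_\preceq(S_t)\neq\emptyset$ and $\Pi^{\preceq}\neq\emptyset$ are sound, and the paper leaves them implicit.
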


\begin{proposition}[Reduction to WSV]
    \label{prop:pasv-to-ws}
    If the DAG is an ordered partition, then
    $
        p^{(\preceq,\lambda)}(\pi) = p_{\mathrm{WSV}}(\pi),
        \forall\pi\in\Pi^{\preceq}.
    $
\end{proposition}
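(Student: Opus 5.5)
The plan is to compare the two formulas factor by factor. By \eqref{PASV} and \eqref{eqn:ws-dist}, for every $\pi\in\Pi^{\preceq}$ the unnormalized PASV weight is
\[
\prod_{t=1}^n \frac{\lambda_{\pi_t}|\max_{\preceq}(S_t)|}{\sum_{k\in\max_{\preceq}(S_t)}\lambda_k}
= p_{\mathrm{WSV}}(\pi)\cdot \prod_{t=1}^n |\max\nolimits_{\preceq}(S_t)| .
\]
So it suffices to establish two facts. First, under an ordered partition the product $\prod_{t}|\max_{\preceq}(S_t)|$ is a constant $c$ not depending on $\pi\in\Pi^{\preceq}$. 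Second, $p_{\mathrm{WSV}}$ is already a probability distribution on $\Pi^{\preceq}$. Given both, $p^{(\preceq,\lambda)}\propto c\,p_{\mathrm{WSV}}$, and normalizing gives $p^{(\preceq,\lambda)}=p_{\mathrm{WSV}}$.

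For the first fact, I describe the prefixes of a linear extension. Under the ordered partition the order is the transitive closure of the consecutive-block edges, so $i\prec j$ exactly when $i\in B_r$ and $j\in B_s$ with $r<s$. Hence any $\pi\in\Pi^{\preceq}$ lists all of $B_1$, then all of $B_2$, and so on. Fix $t$ and let $r$ be the block containing $\pi_t$. Then
\[
S_t = B_1\cup\cdots\cup B_{r-1}\cup C_t ,
\]
where $C_t\subseteq B_r$ is nonempty and $|C_t| = t-\sum_{s<r}|B_s|$. Every element of the earlier blocks lies below the elements of $C_t$, and elements of $C_t$ are pairwise incomparable, so $\max_{\preceq}(S_t)=C_t$. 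Therefore the product over $t$ running through block $B_r$ equals $1\cdot 2\cdots |B_r| = |B_r|!$. This gives $c=\prod_{r=1}^m |B_r|!$, independent of $\pi$.

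For the second fact, I verify $\sum_{\pi\in\Pi^{\preceq}}p_{\mathrm{WSV}}(\pi)=1$ by summing out positions from the back. A sequence is a linear extension exactly when, for each $t$, the element $\pi_t$ is a maximal element of the feasible set $S_t$ and $S_{t-1}=S_t\setminus\{\pi_t\}$ is again feasible. Start from $S_n=[n]$. Summing the factor at $t=n$ over all choices $\pi_n\in\max_{\preceq}(S_n)$ gives $\sum_{j\in\max(S_n)}\lambda_j/\sum_{k\in\max(S_n)}\lambda_k=1$. Repeating this conditional summation for $t=n-1,\dots,1$ gives total mass $1$; this is exactly the backward sampling description of WSV.

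The main point requiring care is the identification $\max_{\preceq}(S_t)=C_t$. It relies on the transitive closure making every earlier block (not only $B_{r-1}$) lie below $C_t$, and on the fact that blocks $B_s$ with $s>r$ have not yet been entered. Once that is in place, the rest is bookkeeping.
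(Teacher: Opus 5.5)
Your proof is correct and follows essentially the same route as the paper: factor out $\prod_{t}|\max_{\preceq}(S_t)|$, show it equals the $\pi$-independent constant $\prod_{\ell}|B_\ell|!$ under an ordered partition, and conclude $p^{(\preceq,\lambda)}\propto p_{\mathrm{WSV}}$. Your added check that $p_{\mathrm{WSV}}$ has total mass one on $\Pi^{\preceq}$ makes explicit a normalization step the paper uses tacitly; phrase it as an induction on $|S|$ over feasible sets (equivalently, sum out $\pi_1$ first), because the factors for $t<n$ depend on the choice of $\pi_n$.
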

\begin{remark}
    When developing a general $p^{(\preceq,\lambda)}$, one may be tempted to directly apply the formulation of WSV \eqref{eqn:ws-dist} to general DAGs, but the resulting distribution does not reduce to PSV for equal weights.
    The $|\max_{\preceq}(S_t)|$ factor on the numerator of \eqref{PASV} is crucial for ensuring Proposition \ref{prop:pasv-to-ps}.
\end{remark}

PASV can also be efficiently computed.
We relegate the algorithm details to the dedicated Section \ref{sec:computation}.

\subsection{Axiomatization of PASV}
\label{sec:method_axiomatization}

Like SV, PSV and WSV, our PASV can also be naturally derived from a set of axioms.
Here, we provide readers an outline as a preview of this axiomatization.
Notice that it contains acronyms and jargon that will be defined later.

\begin{itemize}
    \item Axioms \hyperref[axiom:e]{E} + \hyperref[axiom:l]{L} + \hyperref[axiom:np]{NP} + \hyperref[axiom:m]{M} + \hyperref[axiom:ms]{MS} $\Rightarrow$ \hyperref[thm:prov-axiom]{PROV} (a big family)
    \item Focus on ``\hyperref[def:scf]{SCF} form'' to narrow down PROV
    \item Add axioms \hyperref[axiom:wp]{WP} + \hyperref[axiom:ewu]{EWU} to align with boundary cases
\end{itemize}
Finally, PROV + SCF + WP + EWU $\Rightarrow$ \hyperref[PASV]{PASV}.
Next, we carry out and elaborate this outline.

\subsubsection{Precedence Random Order Value (PROV)}
\label{sec:method_prov}

Recall that the classical Shapley value can be rewritten as a random order value (ROV).
In presence of priority, we should of course constrain our consideration to $p$ distributions supported not on all of $\Pi$, but on the subset $\Pi^{\preceq} \subseteq \Pi$ that respects priority.

\begin{definition}[Precedence random order value (PROV)]
    \label{def:prov}
    A value $\psi$ is called a
    {\bf precedence random order value (PROV)} if there exists some $p$ supported on $\Pi^\preceq$, s.t.
    $
      \psi_i(U) = \mathbb{E}_{\pi\sim p}\big[U(\pi^i\cup\{i\})-U(\pi^i)\big]
    $
    for all $U$ and $i$.
\end{definition}

While it may be natural to treat PROV as a basic assumption or an axiom, we will show that in fact PROV can be derived from five more basic axioms.
These axioms include three Shapley axioms: Efficiency (E), Linearity (L) and Null player (NP) (see Appendix \ref{app:shapley-axioms}); plus two more axioms: Monotonicity (M) and Maximal support (MS).
\begin{axiom}[Monotonicity]
\label{axiom:m}
    If $U$ is \emph{monotone}, i.e., $S \subseteq T \Rightarrow  U(S) \le U(T)$ for all $S,T \subseteq [n]$, then every player receives a nonnegative payoff: $\psi_i(U) \ge 0$ for all $i\in[n]$.
\end{axiom}
\citet{weber1988probabilistic} showed that axioms E + L + NP + M $\Rightarrow$ ROV.
Since PROV is a special ROV, it naturally inherits these axioms.
To present axiom MS, we first define \emph{elementary game} -- it has long been used for analyzing Shapley's values.

\begin{definition}[Elementary game]
    \label{def:elem-game}
    For a given subset $T\subseteq [n]$, the \emph{elementary game} $u_T\in\Gamma$ on $T$ is defined by
    $u_T(S) := \mathbbm{1}\{T \subseteq S\}$ for $S \subseteq [n]$.
\end{definition}

We focus on elementary games for feasible $T$, i.e., $T\in\mathcal{S}^{\preceq}$ (recall Definition \ref{def:feasible-set}).
The precedence relation singles out the \emph{maximal} elements of $T$ as sole the recipients of credit.
This motivates the following axiom.
\begin{axiom}[Maximal support]
\label{axiom:ms}
    For any elementary game $u_T$ with $T\in\mathcal{S}^{\preceq}$, if player $i$ is not maximal in $T$, i.e., $i\notin \max_{\preceq}(T)$, then $\psi_i(u_T) = 0$.
\end{axiom}

\begin{theorem}[Axiomatization of PROV]
    \label{thm:prov-axiom}
    A value $\psi$ is PROV $\Leftrightarrow$ it satisfies axioms E + L + NP + M + MS.
\end{theorem}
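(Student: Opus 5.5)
The plan is to prove the two directions separately. In both, Weber's characterization \citep{weber1988probabilistic} (E + L + NP + M $\Leftrightarrow$ ROV) handles the first four axioms, so the only real work is relating MS to the support of $p$.

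\emph{($\Rightarrow$)} Let $\psi$ be a PROV with distribution $p$ supported on $\Pi^\preceq$. It is in particular an ROV, so by Weber's result it satisfies E, L, NP and M. For MS, fix a feasible $T$ and a player $i$. If $i\notin T$, then $u_T(\pi^i\cup\{i\})=u_T(\pi^i)$, so $\psi_i(u_T)=0$. If $i\in T$, the marginal contribution $u_T(\pi^i\cup\{i\})-u_T(\pi^i)$ equals $\mathbbm{1}\{T\setminus\{i\}\subseteq\pi^i\}$, i.e.\ it is $1$ exactly when $i$ is the last element of $T$ to appear in $\pi$. Hence $\psi_i(u_T)=p(i\text{ is last in }T\text{ under }\pi)$. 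If $i\notin\max_\preceq(T)$, there is $j\in T$ with $i\prec j$. Every $\pi\in\Pi^\preceq$ places $i$ before $j$, so $i$ is never last in $T$ on the support of $p$, and therefore $\psi_i(u_T)=0$.

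\emph{($\Leftarrow$)} Assume E + L + NP + M + MS. By Weber's theorem there is a distribution $p$ on all of $\Pi$ that represents $\psi$ as an ROV. It suffices to show $p(\pi)=0$ for every $\pi\notin\Pi^\preceq$, because then $p$ is itself supported on $\Pi^\preceq$ and $\psi$ is a PROV.
\begin{itemize}
\item By the computation in the first direction, which holds for any ROV, we have $\psi_k(u_T)=p(k\text{ is last in }T)$ for all $k\in T$.
\item Take any $\pi\notin\Pi^\preceq$. Then there exist $i\prec j$ with $j$ appearing before $i$ in $\pi$.
\item Let $T:=\{k:k\preceq j\}$ be the down-closure of $j$. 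It is feasible by transitivity, and it contains $i$.
\item Let $k$ be the last element of $T$ under $\pi$. Since $i\in T$ comes after $j$, we have $k\neq j$, so $k\prec j$ and $k\notin\max_\preceq(T)$.
\item By MS, $p(k\text{ is last in }T)=\psi_k(u_T)=0$. Since $\pi$ lies in this event, $p(\pi)=0$.
\end{itemize}

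The argument needs no induction because $p$ is a discrete distribution on a finite set, so each bad permutation is handled individually. The only point needing care is that Weber's representation may not be unique. This is harmless: the argument applies to whichever $p$ represents $\psi$, and shows that this particular $p$ already lives on $\Pi^\preceq$. I expect the main (mild) obstacle to be this step, namely choosing $T$ as the down-closure of the violated upper element, so that MS applies to a feasible $T$ and forces the bad permutations to have probability zero.
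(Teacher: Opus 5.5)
Your proof is correct and follows essentially the same route as the paper's. Both arguments invoke the ROV characterization for E+L+NP+M, identify $\psi_k(u_T)$ with the probability that $k$ is last in $T$, and rule out each non-extension $\pi$ by applying MS to the down-closure $T=\{k : k\preceq j\}$ of the violated upper element, where $k$ is the last element of $T$ in $\pi$ (the paper's ``latest predecessor of $j$''). Your direct bound $p(\pi)\le\psi_k(u_T)=0$ is the paper's contradiction argument stated directly, and your remark that non-uniqueness of the ROV representation is harmless is a helpful clarification the paper leaves implicit.
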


\subsubsection{State-Choice Factorization (SCF)}
\label{sec:method_scf}

The PROV as defined in Definition~\ref{def:prov} is a huge family, and using the most general PROV directly is computationally challenging.
We therefore restrict attention to a broad and structured family of precedence-respecting order distributions that factorize along the chain of feasible sets induced by a linear extension. 
Conceptually, this is the precedence-constrained analogue of stage-wise models for rankings and permutations, such as the Plackett–Luce \citep{luce1959individual, plackett1975analysis} and multistage ranking families \citep{fligner1988multistage}.
To this end, we consider the family of distributions on $\Pi^\preceq$ of the form
\begin{align}
    p(\pi)
    \propto&~
    \prod_{t=1}^n w_{\preceq, \lambda}(\pi_t;S_t),
    \quad
    \text{for }\pi\in\Pi^\preceq
    \label{SCF:primal-form}
\end{align}
for some $w_{\preceq, \lambda}$.
This form \eqref{SCF:primal-form} is the precedence-constrained analogue of Plackett-Luce: the likelihood of some $\pi$ depends on $\pi$ only through the sequence of feasible candidates $\{S_t\}_{t=1}^n$, and the chosen candidate at this time.
It is often convenient to decompose $w$ as a \emph{scale} (``state''):
$
    s_{\preceq}(S_t)
    :=
    \sum_{k\in\max_\preceq(S_t)}w_{\preceq,\lambda}(k;S_t)
$,
where we shall further assume that $s_\preceq(S_t)$ does not depend on $\lambda$, which is true for WSV;
and a \emph{normalized priority weighting} (``choice''):
$
    c_{\preceq,\lambda}(i;S_t)
    :=
    {w_{\preceq, \lambda}(i;S_t)}\big/{s_{\preceq}(S_t)}.
$

\begin{definition}[State--Choice Factorization (SCF)]
    \label{def:scf}
    For a poset $([n], \preceq)$, define the prefix (remaining) set $S_t=\{\pi_1,\dots,\pi_t\}$ as a chain of feasible sets ($t=n,n-1,\ldots,1$).
    A {\bf state-choice factorization (SCF)} assigns an unnormalized weight to $\pi$ by multiplying:
    \begin{enumerate}[(i)]
        \item 
        a {\bf state factor} $s_{\preceq}(S_t)$ that 
        captures global bias induced by the precedence structure;
        
        \item 
        a {\bf choice factor} $c_{\preceq,\lambda}(\,\cdot\,;S)$ that encodes soft priority weighting of admissible candidates $\max_{\preceq}(S)$; it is the entry point for the individual weights $\lambda$.
    \end{enumerate}
    Formally, an SCF $p$ has the following form
    \begin{equation}
        p(\pi) \propto \prod_{t=1}^n s_{\preceq}(S_t)\,c_{\preceq,\lambda}(\pi_t;S_t),
        \quad
        \text{ for }\pi\in\Pi^{\preceq},
        \label{eqn:scf}
    \end{equation}
    where $\sum_{i\in \max_\preceq(S_t)} c_{\preceq,\lambda}(i; S_t) = 1$.
\end{definition}

The merits of SCF are three-fold:
\begin{itemize}
    \item It remains flexible enough to capture a broad range of precedence-respecting random-order models, containing PSV and WSV as special cases.
    \item It preserves a clear interpretation of $\lambda$ weights as a soft priority parameter;
    \item Its factorization form enables efficient MCMC sampling procedures for generating $\pi$.
\end{itemize}

\subsubsection{Boundary case axioms}

We desire our method to contain PSV and WSV as special cases.
This is characterized by the following two axioms.

\begin{axiom}[Weight Proportionality]
    \label{axiom:wp}
    For every feasible set $S\in\mathcal{S}^{\preceq}$ and every $i,j\in\max_{\preceq}(S)$, $\frac{c_{\preceq,\lambda}(i;S)}{c_{\preceq,\lambda}(j;S)} = \frac{\lambda_i}{\lambda_j}$.
\end{axiom}

Axiom WP requires that at each step the probability of choosing each admissible candidate $i$ is proportional to $\lambda_i$.

\begin{axiom}[Equal-Weight Uniformity]
    \label{axiom:ewu}
    If $\lambda_i\equiv c$ for all $i\in[n]$, then the induced distribution is uniform on $\Pi^{\preceq}$.
\end{axiom}

Axiom \ref{axiom:wp} recovers WSV and Axiom \ref{axiom:ewu} recovers PSV.

\subsubsection{Axiomatization of PASV}
\label{sec:method_pasv}

\begin{theorem}
    \label{thm:pasv-uniqueness}
    Suppose $p$ is a PROV as in Definition \ref{def:prov} satisfying the SCF form as in \eqref{eqn:scf}.
    Then the only $p$ that also satisfies Axioms \ref{axiom:wp} and \ref{axiom:ewu} is PASV as defined in \eqref{PASV}.
\end{theorem}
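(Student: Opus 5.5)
The plan is to pin down the choice factor from Axiom~\ref{axiom:wp}, then the state factor from Axiom~\ref{axiom:ewu}.

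\textbf{Step 1: the choice factor.} Fix a feasible $S\in\mathcal{S}^{\preceq}$. Axiom~\ref{axiom:wp} says the ratios $c_{\preceq,\lambda}(i;S)/c_{\preceq,\lambda}(j;S)=\lambda_i/\lambda_j$ hold for all $i,j\in\max_\preceq(S)$. Combined with the normalization $\sum_{i\in\max_\preceq(S)}c_{\preceq,\lambda}(i;S)=1$, this forces
\[
c_{\preceq,\lambda}(i;S)=\frac{\lambda_i}{\sum_{k\in\max_\preceq(S)}\lambda_k}.
\]
Hence every admissible SCF has
\[
p(\pi)\propto \prod_t s_\preceq(S_t)\,\frac{\lambda_{\pi_t}}{\Lambda(S_t)},
\qquad \Lambda(S):=\sum_{k\in\max_\preceq(S)}\lambda_k .
\]
The PROV requirement is then automatic: the support is $\Pi^\preceq$, and Theorem~\ref{thm:prov-axiom} is not needed beyond this.

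\textbf{Step 2: the state factor, using that it does not depend on $\lambda$.} Set $\lambda_i\equiv c$. The choice factor becomes $1/|\max_\preceq(S_t)|$. Axiom~\ref{axiom:ewu} then requires
\[
F(\pi):=\prod_{t=1}^n \frac{s_\preceq(S_t)}{|\max_\preceq(S_t)|}
\]
to be constant over $\pi\in\Pi^\preceq$. Because $s_\preceq$ is $\lambda$-free, this same product appears for general $\lambda$. We can therefore factor
\[
p(\pi)\propto F(\pi)\prod_t\frac{\lambda_{\pi_t}|\max_\preceq(S_t)|}{\Lambda(S_t)}.
\]
Since $F$ is constant on $\Pi^\preceq$, it is absorbed into the normalizing constant, and this is exactly \eqref{PASV}.

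\textbf{Step 3: existence.} Conversely, PASV itself is of SCF form. Take $s_\preceq(S)=|\max_\preceq(S)|$, which is $\lambda$-free, together with the choice factor from Step 1. It satisfies Axiom~\ref{axiom:wp} by construction, and it satisfies Axiom~\ref{axiom:ewu} by Proposition~\ref{prop:pasv-to-ps}. So PASV is the unique such $p$.

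\textbf{Main obstacle.} The delicate point is Step 2. We do not claim that $s_\preceq(S)$ must equal $|\max_\preceq(S)|$ pointwise; other state functions could make $F$ constant, for instance via telescoping factors $g(S_t)/g(S_{t-1})$. We only need that the induced distribution is unique. The argument hinges on the assumption that $s_\preceq$ is independent of $\lambda$, so that the constancy of $F$ established at equal weights transfers to all $\lambda$. I would state this explicitly and note that uniqueness holds at the level of $p$, not of the individual factors.
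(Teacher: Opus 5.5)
Your proposal is correct and follows the paper's proof step for step. WP plus normalization pins down the choice factor, and EWU at constant weights makes $\prod_t s_\preceq(S_t)/|\max_\preceq(S_t)|$ constant on $\Pi^\preceq$. The $\lambda$-independence of $s_\preceq$ then carries this to general $\lambda$, and your explicit choice $s_\preceq(S)=|\max_\preceq(S)|$ in the converse just makes concrete the paper's ``appropriate state factor.''

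One minor aside on your closing remark: since $S_{t-1}=S_t\setminus\{\pi_t\}$ depends on the choice $\pi_t$, a telescoping factor $g(S_t)/g(S_{t-1})$ is a legitimate state factor only when $g(S\setminus\{x\})$ is the same for all $x\in\max_\preceq(S)$. An example is $s_\preceq(S)=|\max_\preceq(S)|\,h(|S|)$. This does not affect the proof.
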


\subsection{Limiting Cases of the Priority-Aware Distribution}
\label{sec:method_limit}

This section studies what happens to the PASV order distribution when a player weight becomes extreme.
The goal is to clarify how the \emph{soft} priorities encoded by $\lambda$ interact with the \emph{hard} precedence constraints $\preceq$ (equivalently, the DAG).
In PASV \eqref{PASV}, $\lambda$ affects the distribution only through the choice factor $c_{\preceq,\lambda}(i;S)$ over the currently admissible (maximal) set $\max_{\preceq}(S)$.
For $i\in \max_{\preceq}(S)$, we have
\begin{align}
    c_{\preceq,\lambda}(i;S)
    =
    \frac{\lambda_i}{\lambda_i+\sum_{k\in \max_{\preceq}(S)\setminus\{i\}}\lambda_k}.
    \label{eqn:limiting-case:choice-factor}
\end{align}
If $\max_\preceq(S)=\{i\}$, changing $\lambda_i$ has no effect.
When $|\max_\preceq(S)|\geq 2$ all other weights are held fixed,
\eqref{eqn:limiting-case:choice-factor} implies $c_{\preceq,\lambda}(i;S)\to 1$ as $\lambda_i\to\infty$ and $c_{\preceq,\lambda}(i;S)\to 0$ as $\lambda_i\to 0^+$.
In other words, \textbf{extreme weights induce an (almost) deterministic tie-break among simultaneously admissible players}:
$\lambda_i\to\infty$ selects $i$ whenever it is maximal, i.e., pushing $i$ as \emph{late} as allowed in $\pi$ (in the forward order), while $\lambda_i\to 0^+$ pulls $i$ as \emph{early} as possible in $\pi$.

Interestingly, in some cases, extreme weights can simply translate into new DAG edges.
We showcase two scenarios: (i) the extreme-weight player is globally maximal, and (ii) DAG is an ordered partition.
Figure \ref{fig:limit-cases} illustrates two examples.
Importantly, we emphasize the following takeaways.
\begin{itemize}
    \item These new edges will {\bf not} create cycles; and
    \item Setting weights in whatever way will {\bf not} delete, rewire or reverse existing edges in the original DAG.
\end{itemize}

\begin{figure}[h!]
    \centering
    
    \begin{tikzpicture}[
      node/.style={draw,circle,fill=blue!10,inner sep=1.2pt,minimum size=12pt,font=\scriptsize},
      edge/.style={->,>=Stealth,line width=0.6pt},
      trans/.style={->,>=Stealth,line width=0.5pt,double,double distance=2pt},
      lab/.style={font=\small}
    ]
      \pgfmathsetmacro{\Wpanel}{1.4} 
      \pgfmathsetmacro{\W}{\Wpanel}
      \pgfmathsetmacro{\H}{1.10}
      \pgfmathsetmacro{\gap}{0.55}
      \pgfmathsetmacro{\A}{0.80}
      \pgfmathsetmacro{\shift}{\W+\gap+\A+\gap}
    
      \node[node] (o1) at (0,\H) {$1$};
      \node[node] (o2) at (\W,\H) {$2$};
      \node[node] (o3) at (0,0) {$3$};
      \node[node] (o4) at (\W,0) {$4$};
    
      \draw[edge] (o1) -- (o2);
      \draw[edge] (o3) -- (o2);
      \draw[edge] (o3) -- (o4);
    
      \draw[trans] (\W+\gap,0.5*\H) -- (\W+\gap+\A,0.5*\H);
    \node[below=2pt, font=\tiny] at (\W+\gap+0.5*\A,0.5*\H) {$\lambda_4\to\infty$};
    
      \begin{scope}[xshift=\shift cm]
        \node[node] (m1) at (0,\H) {$1$};
        \node[node] (m3) at (0,0) {$3$};
        \node[node] (m2) at (0.5*\W,0.5*\H) {$2$};
        \node[node] (m4) at (\W,0.5*\H) {$4$};
    
        \draw[edge] (m1) -- (m2);
        \draw[edge] (m3) -- (m2);
        \draw[edge] (m2) -- (m4);
      \end{scope}
    
    \end{tikzpicture}
    
    \vspace{0.8em}
    
    \begin{tikzpicture}[
      node/.style={draw,circle,fill=blue!10,inner sep=1.2pt,minimum size=12pt,font=\scriptsize},
      edge/.style={->,>=Stealth,line width=0.6pt},
      trans/.style={->,>=Stealth,line width=0.5pt,double,double distance=2pt},
      box/.style={draw,densely dashed,rounded corners=2pt,line width=0.6pt,inner sep=2.0pt},
      boxone/.style={draw,densely dashed,rounded corners=2pt,line width=0.6pt,inner sep=1.2pt},
      lab/.style={font=\small}
    ]
      \pgfmathsetmacro{\Wpanel}{2.8}
      \pgfmathsetmacro{\W}{\Wpanel}
      \pgfmathsetmacro{\H}{1.10}
      \pgfmathsetmacro{\gap}{0.55}
      \pgfmathsetmacro{\A}{0.80}
      \pgfmathsetmacro{\shift}{\W+\gap+\A+\gap}
      \pgfmathsetmacro{\xmid}{\W/2}
      \pgfmathsetmacro{\xone}{\W/3}
    
      \node[node] (o1) at (0,\H) {$1$};
      \node[node] (o2) at (0,0) {$2$};
      \node[node] (o3) at (\xmid,\H) {$3$};
      \node[node] (o4) at (\xmid,0) {$4$};
      \node[node] (o5) at (\W,\H) {$5$};
      \node[node] (o6) at (\W,0) {$6$};
    
      \draw[edge] (o1) -- (o3);
      \draw[edge] (o1) -- (o4);
      \draw[edge] (o2) -- (o3);
      \draw[edge] (o2) -- (o4);
    
      \draw[edge] (o3) -- (o5);
      \draw[edge] (o3) -- (o6);
      \draw[edge] (o4) -- (o5);
      \draw[edge] (o4) -- (o6);
    
      \node[box, fit=(o1)(o2)] {};
      \node[box, fit=(o3)(o4)] {};
      \node[box, fit=(o5)(o6)] {};
    
      \draw[trans] (\W+\gap,0.5*\H) -- (\W+\gap+\A,0.5*\H);
    \node[below=2pt, font=\tiny] at (\W+\gap+0.5*\A,0.5*\H) {$\lambda_3\to\infty$};
    
      \begin{scope}[xshift=\shift cm]
        \node[node] (m1) at (0,\H) {$1$};
        \node[node] (m2) at (0,0) {$2$};
    
        \node[node] (m3) at (\xone,0.5*\H) {$4$};
        \node[node] (m4) at (2*\xone,0.5*\H) {$3$};
    
        \node[node] (m5) at (\W,\H) {$5$};
        \node[node] (m6) at (\W,0) {$6$};
    
        \draw[edge] (m1) -- (m3);
        \draw[edge] (m2) -- (m3);
        \draw[edge] (m3) -- (m4);
        \draw[edge] (m4) -- (m5);
        \draw[edge] (m4) -- (m6);
    
        \node[box, fit=(m1)(m2)] {};
        \node[boxone, fit=(m3)] {};
        \node[boxone, fit=(m4)] {};
        \node[box, fit=(m5)(m6)] {};
      \end{scope}
    
    \end{tikzpicture}
    
    \caption{
    Extreme $\lambda$ may translate into DAG edges: 
    Top: sending $\lambda_4\to\infty$ adds an edge $2\to4$ (on the right we omitted the edge $3\to 4$ as it is implied by the path $3\to 2\to 4$).
    Bottom: sending $\lambda_3\to\infty$ adds an edge $4\to 3$, forming a refined layer structure.}
    \label{fig:limit-cases}
\end{figure}

Formally, we have the following theoretical results.

\begin{proposition}[]
    \label{prop:limit-maximal-node}
    Fix a poset $([n],\preceq)$ and $i\in\max_{\preceq}([n])$, let $\preceq'$ be the partial order obtained by adding $j\prec i$ for all $j\in\max_{\preceq}([n])\setminus\{i\}$.
    Then for any $\lambda'$ satisfying $\lambda'_j=\lambda_j$ for all $j\neq i$ (with arbitrary $\lambda'_i>0$), and for every $\pi\in\Pi$,
    $$
        \lim_{\lambda_i\to\infty} p^{(\preceq,\lambda)}(\pi)=p^{(\preceq',\lambda')}(\pi)
        \cdot
        \mathbbm{1}_{[\pi \in \Pi^{\preceq'}]}.
    $$
\end{proposition}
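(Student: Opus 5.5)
The plan is to work directly with the unnormalized PASV weight
$$
W_\lambda(\pi)=\prod_{t=1}^n \frac{\lambda_{\pi_t}\,|\max_{\preceq}(S_t)|}{\sum_{k\in\max_{\preceq}(S_t)}\lambda_k},\qquad \pi\in\Pi^{\preceq}.
$$
I will determine its order of growth in $\lambda_i$ for each $\pi$, and then pass to the limit in the ratio $W_\lambda(\pi)/\sum_{\sigma\in\Pi^\preceq}W_\lambda(\sigma)$. Since $\Pi^\preceq$ is finite, the limit of this ratio is determined by the leading terms.

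\textbf{Step 1: identify $\Pi^{\preceq'}$.}
First I check that $\preceq'$ is a partial order: adding $j\prec i$ for every $j\in\max_\preceq([n])\setminus\{i\}$ cannot create a cycle, because $i$ has no successors. I then take the transitive closure. Every $k\neq i$ satisfies $k\preceq j$ for some maximal $j$. If $j\neq i$, then $j\prec' i$; if $j=i$, then $k\prec i$ already. Hence $k\prec' i$ for all $k\neq i$. The only new relations are of the form $k\prec' i$, so
$$
\Pi^{\preceq'}=\{\pi\in\Pi^\preceq:\pi_n=i\}.
$$

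\textbf{Step 2: asymptotics of each factor.}
Let $\tau$ be the position of $i$ in $\pi\in\Pi^\preceq$. Because $i$ is maximal in $[n]$, it is maximal in every feasible set containing it. Therefore $i\in\max_\preceq(S_t)$ if and only if $t\ge\tau$. I split the factors into three groups.
\begin{itemize}
\item For $t<\tau$, the factor does not involve $\lambda_i$.
\item For $t=\tau$, the factor equals $\lambda_i|\max_\preceq(S_\tau)|/(\lambda_i+\cdots)$, which tends to $|\max_\preceq(S_\tau)|$.
\item For $t>\tau$, the factor is $\Theta(1/\lambda_i)$.
\end{itemize}
Hence $W_\lambda(\pi)=\Theta(\lambda_i^{-(n-\tau)})$. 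Every $\pi$ with $\pi_n=i$ has a positive finite limit weight, while every other $\pi$ has weight tending to $0$. Dividing numerator and denominator by the finite sum over $\{\pi_n=i\}$ shows that the limiting distribution vanishes off $\Pi^{\preceq'}$. On $\Pi^{\preceq'}$ it is proportional to $\lim W_\lambda(\pi)$.

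\textbf{Step 3: match the limit weights with PASV under $(\preceq',\lambda')$.}
For $\pi$ with $\pi_n=i$, the last factor tends to $|\max_\preceq([n])|$, which is a constant independent of $\pi$ and is absorbed by normalization. Under $\preceq'$ the corresponding factor is $\lambda'_i\cdot 1/\lambda'_i=1$, since $\max_{\preceq'}([n])=\{i\}$; in particular the value of $\lambda'_i$ is irrelevant. For $t<n$ we have $i\notin S_t$. Because $\preceq'$ only adds relations into $i$, it follows that $\max_{\preceq'}(S_t)=\max_\preceq(S_t)$, and $\lambda'=\lambda$ on $S_t$. So these factors coincide exactly with those of $p^{(\preceq',\lambda')}$. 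The limit is therefore proportional to $p^{(\preceq',\lambda')}(\pi)\mathbbm{1}_{[\pi\in\Pi^{\preceq'}]}$. Both sides are probability distributions supported on $\Pi^{\preceq'}$, so the proportionality constant is $1$.

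The only real obstacle is the bookkeeping in Step 2: verifying that $i$ stays admissible from position $\tau$ onward, and hence that $i$ in the last position is the unique dominant configuration. This uses exactly the global maximality of $i$. Without that hypothesis, $i$ could leave the maximal set, and the exponents would not separate as cleanly.
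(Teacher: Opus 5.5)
Your proposal is correct and follows essentially the paper's route: you work with the unnormalized PASV weights, show that those with $\pi_n=i$ converge to a common constant multiple of the $(\preceq',\lambda')$ weights while all others vanish (you via the exponent $n-\tau$, the paper via a single factor at a globally maximal $j$ placed after $i$), and then normalize. Your handling of the last factor is in fact more accurate than the paper's, which claims it tends to $1$ rather than $|\max_{\preceq}([n])|$; as you observe, this constant is shared by all of $\Pi^{\preceq'}$ and cancels in the normalization.
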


\begin{proposition}
    \label{prop:limit-block-subset}
    Assume $\preceq$ is an ordered partition $(B_1,\dots,B_m)$.
    Fix a layer $B\in\{B_1,\dots,B_m\}$ and a nonempty subset $G\subseteq B$.
    Fix weights $(\lambda_j)_{j\notin G}$ and consider the regime in which
    $\lambda_i=\bar{\lambda}$ for all $i\in G$ while $\bar{\lambda}\to\infty$.
    Let $\preceq'$ represent the refined ordered partition obtained by splitting $B$ into two ordered, consecutive layers $(B\setminus G,\,G)$.
    Then for any $\lambda'$ satisfying $\lambda'_j=\lambda_j$ for all $j\notin G$ and $\lambda_i=\tilde{\lambda}$ for $i\in G$ with arbitrary $\tilde{\lambda}>0$, and for every $\pi\in\Pi$,
    $$
        \lim_{\bar\lambda\to\infty} p^{(\preceq,\lambda)}(\pi)=p^{(\preceq',\lambda')}(\pi)
        \mathbbm{1}_{[\pi \in \Pi^{\preceq'}]}.
    $$
\end{proposition}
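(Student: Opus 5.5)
Our approach is to rely on Proposition~\ref{prop:pasv-to-ws}: for an ordered partition, PASV coincides with WSV, so
\[
p^{(\preceq,\lambda)}(\pi)=\prod_{t=1}^n \frac{\lambda_{\pi_t}}{\sum_{k\in\max_\preceq(S_t)}\lambda_k}
\]
for $\pi\in\Pi^\preceq$, and $p^{(\preceq,\lambda)}(\pi)=0$ otherwise. The refined structure $\preceq'$, obtained by splitting $B=B_r$ into $(B\setminus G,\,G)$, is again an ordered partition, so $p^{(\preceq',\lambda')}$ is also given by the WSV product. In the degenerate case $G=B$ we have $\preceq'=\preceq$, there is nothing to split, and the claim is that the distribution does not depend on the common weight. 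This holds because within layer $B$ all weights are equal, so the within-layer factors are uniform. We therefore assume $\emptyset\neq B\setminus G$. Note also that $\Pi^{\preceq'}\subseteq\Pi^\preceq$, so for $\pi\notin\Pi^\preceq$ both sides are zero.

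For $\pi\in\Pi^\preceq$, we split the WSV product according to the layer containing $S_t$. Because $\pi$ is a linear extension of an ordered partition, the prefix $S_t$ equals $B_1\cup\dots\cup B_{q-1}\cup R$ for some nonempty $R\subseteq B_q$, and then $\max_\preceq(S_t)=R$. All factors with $q\neq r$ involve only weights outside $G$. These factors are identical for $\preceq$ and $\preceq'$, and they do not depend on $\bar\lambda$. So everything reduces to the factors with $q=r$, that is, to the backward sequential sampling within layer $B$. There, $\pi$ restricted to $B$ is built from the back: at each step we pick an element of the remaining set $R$ with probability $\lambda_{\pi_t}/\sum_{k\in R}\lambda_k$.

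Next we analyze the limit of these within-$B$ factors. While $R\cap G\neq\emptyset$, choosing an element of $G$ has probability $\bar\lambda/(|R\cap G|\bar\lambda+\sum_{R\setminus G}\lambda_k)\to 1/|R\cap G|$. Choosing an element of $R\setminus G$ has probability $O(1/\bar\lambda)\to 0$. Once $R\cap G=\emptyset$, the factors do not involve $\bar\lambda$ and equal $\lambda_{\pi_t}/\sum_{k\in R}\lambda_k$.

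\begin{itemize}
\item If the $B$-segment of $\pi$ places some element of $B\setminus G$ after some element of $G$, then in the backward sampling an element of $B\setminus G$ is chosen while $G$ elements remain. The product is then $O(1/\bar\lambda)$ times bounded factors, so it tends to $0$. This matches the right side, since such $\pi\notin\Pi^{\preceq'}$.
\item Otherwise $\pi\in\Pi^{\preceq'}$. The limit of the within-$B$ factors is $\prod_{m=1}^{|G|}\frac1m=1/|G|!$, times the WSV factors for the layer $B\setminus G$.
\end{itemize}

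Under $\preceq'$ with equal weights $\tilde\lambda$ on the layer $G$, the WSV factors for layer $G$ equal $\tilde\lambda/(m\tilde\lambda)=1/m$. Their product is $1/|G|!$, independent of $\tilde\lambda$. The factors for layer $B\setminus G$ coincide with the ones above. Hence the limit equals $p_{\mathrm{WSV}}^{(\preceq',\lambda')}(\pi)$, which by Proposition~\ref{prop:pasv-to-ws} is $p^{(\preceq',\lambda')}(\pi)$. Since the sample space is finite, pointwise convergence of each of finitely many products suffices, and no normalization issue arises: WSV is already normalized.

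The main obstacle we expect is the bookkeeping that identifies, for $\pi\in\Pi^{\preceq'}$, the maximal sets $\max_{\preceq'}(S_t)$ with the sets $R\cap G$ and $R\setminus G$ above. Under $\preceq'$, once all of $G$ is placed, the maximal set of a prefix ending inside $B\setminus G$ is exactly the remaining portion of $B\setminus G$, because $G$ elements are no longer in the prefix. We must check this carefully, including at the boundary steps where the prefix is exactly $B_1\cup\dots\cup B_{r-1}\cup(B\setminus G)$.
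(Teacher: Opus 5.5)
Your proof is correct and follows essentially the same route as the paper: you reduce to the WSV product via Proposition~\ref{prop:pasv-to-ws} and isolate the factors belonging to layer $B$ (the paper packages this step as the blockwise factorization Lemma~\ref{lem:orderedpartition-factorization}), then show that each pick from $G$ has factor tending to $1/|S_t\cap G|$, while any pick of $j\in B\setminus G$ with a $G$-element still present has factor at most $\lambda_j/\bar\lambda\to 0$ (the paper's Lemma~\ref{lem:trivialposet-block}). Your separate treatment of $G=B$ is harmless but unnecessary, since the general case analysis already covers it.
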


More broadly, however, extreme weights are \emph{not} equivalent to adding a fixed set of edges in general.
The key reason is that the tie-break is triggered only when the extreme-weight player becomes maximal, and \emph{whether/when} this happens depends on the realized history of the (backward) construction of $\pi$.
Figure~\ref{fig:limit-cases-counter} gives a counterexample.
\begin{figure}[h!]
    \centering
    
    \begin{tikzpicture}[
      node/.style={draw,circle,fill=blue!10,inner sep=1.2pt,minimum size=12pt,font=\scriptsize},
      edge/.style={->,>=Stealth,line width=0.6pt},
      trans/.style={->,>=Stealth,line width=0.5pt,double,double distance=2pt},
      lab/.style={font=\small}
    ]
      \pgfmathsetmacro{\Wpanel}{1.4}
      \pgfmathsetmacro{\W}{\Wpanel}
      \pgfmathsetmacro{\H}{1.10}
      \pgfmathsetmacro{\gap}{0.55}
      \pgfmathsetmacro{\A}{0.80}
      \pgfmathsetmacro{\shift}{\W+\gap+\A+\gap}
    
      \node[node] (o1) at (0,\H) {$1$};
      \node[node] (o2) at (\W,\H) {$2$};
      \node[node] (o3) at (0,0) {$3$};
      \node[node] (o4) at (\W,0) {$4$};
    
      \draw[edge] (o1) -- (o2);
      \draw[edge] (o3) -- (o2);
      \draw[edge] (o3) -- (o4);
    
      \draw[trans] (\W+\gap,0.5*\H) -- (\W+\gap+\A,0.5*\H);
    \node[below=2pt, font=\tiny] at (\W+\gap+0.5*\A,0.5*\H) {$\lambda_3\to\infty$};
    \draw[line width=1pt]
      ($(\W+\gap+0.5*\A,0.5*\H)+(-0.15,0.15)$) --
      ($(\W+\gap+0.5*\A,0.5*\H)+( 0.15,-0.15)$);
      \draw[line width=1pt]
      ($(\W+\gap+0.5*\A,0.5*\H)+(-0.15,-0.15)$) --
      ($(\W+\gap+0.5*\A,0.5*\H)+( 0.15, 0.15)$);
    
      \begin{scope}[xshift=\shift cm]
        \node[node] (m1) at (\W,\H) {$2$};
        \node[node] (m3) at (\W,0) {$4$};
        \node[node] (m2) at (0,0.5*\H) {$1$};
        \node[node] (m4) at (0.5*\W,0.5*\H) {$3$};

        \draw[edge] (m4) -- (m1);
        \draw[edge] (m4) -- (m3);
        \draw[edge] (m2) -- (m4);
      \end{scope}
    
    \end{tikzpicture}
    \caption{Extreme $\lambda$ may not always translate into a new DAG edge: if $\pi_4=2$, then $1$ and $4$ are both eligible for $\pi_3$; the (only possible) edge to add $1\to 3$ would incorrectly deny $1$'s candidacy.
    }
    \label{fig:limit-cases-counter}
\end{figure}

\paragraph{Priority sweeping.}
Beyond producing a single valuation, PASV enables a handy sensitivity/robustness test w.r.t. $\lambda$.
Fixing the DAG and a baseline $\lambda$, for each player $i$, our {\bf priority sweeping} tracks how the valuation of $i$ changes as $\lambda_i$ varies in $(0,\infty)$.
This would tell us which players' values are stable versus sensitive to its weight.

\section{Computation Algorithm for PASV}
\label{sec:computation}

Recall that PASV is defined in an ROV form as in \eqref{eqn:rov}, where $p=p_{\rm PASV}$ as in \eqref{PASV}.
While the expectation in \eqref{eqn:rov} can be easily approximated by a Monte Carlo sampling with i.i.d. $\pi\sim p_{\rm PASV}$, the main computational challenge lies exactly in how to sample $\pi$ from the distribution $p_{\rm PASV}$, because the number of valid $\pi$'s grows exponentially with $n$.
Practical computation therefore typically relies on Markov Chain Monte Carlo (MCMC) strategies.

\paragraph{Adjacent-swap M-H on $\Pi^\preceq$.}
The common approach to MCMC on $\Pi^\preceq$ is via \emph{adjacent swaps} of incomparable neighbors \citep{karzanov1991conductance,bubley1999faster}.
We adopt this method but generalize the uniform distribution to $p^{(\preceq,\lambda)}(\cdot)$.
To start, it is not difficult to find one valid $\pi\in \Pi^\preceq$ as the initialization \citep{kahn1962topological}.
Then in each iteration, sample $k\in [n-1]$ at random.
If $(\pi_k,\pi_{k+1})$ are \emph{incomparable}, i.e., neither of $\pi_k, \pi_{k+1}$ precedes the other, propose $\pi'$ by swapping them; otherwise, stay at $\pi$.
The Metropolis–Hastings (M-H) acceptance probability is
\begin{align}
    \alpha:=\pr(\pi\to\pi')
    =
    \min\Big\{1, {p^{(\preceq,\lambda)}(\pi')}\big/{p^{(\preceq,\lambda)}(\pi)}\Big\}.
    \label{eqn:MH-accept-rate}
\end{align}
Although $p^{(\preceq,\lambda)}(\cdot)$ is defined as a product over $n$ steps, the acceptance ratio in \eqref{eqn:MH-accept-rate} can be evaluated using only local information around chosen position $k$.
\begin{lemma}
\label{lem:mh-local-ratio}
Fix a poset $([n],\preceq)$ and weights $\lambda$.
Let $\pi\in\Pi^{\preceq}$ and fix $k\in[n-1]$ such that $\pi_k$ and $\pi_{k+1}$ are incomparable.
Let $\pi'$ be the permutation obtained by swapping $(\pi_k,\pi_{k+1})$ in $\pi$.
Then we have
\begin{equation}
    \frac{p^{(\preceq,\lambda)}(\pi')}{p^{(\preceq,\lambda)}(\pi)}
=
\frac{\bigl(\sum_{i\in M_k}\lambda_i\bigr)/|M_k|}{\bigl(\sum_{j\in M'_k}\lambda_j\bigr)/|M'_k|},
\label{eqn:local-ratio}
\end{equation}
where $M_k = \max_{\preceq}(S_k)$, $M'_k = \max_{\preceq}(S'_k)$, $S_k = \{\pi_t\}_{t\in[k]}$, and $S'_k=\{\pi_t\}_{t\in[k-1]\cup\{k+1\}}$.
\end{lemma}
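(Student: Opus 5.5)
The plan is to write the unnormalized PASV weight as a product over prefixes and observe that the swap only changes one prefix. By \eqref{PASV}, for $\pi\in\Pi^\preceq$ we have $p^{(\preceq,\lambda)}(\pi)=Z^{-1}\prod_{t=1}^n f(\pi_t;S_t)$ with
$$f(i;S)=\frac{\lambda_i\,|\max_\preceq(S)|}{\sum_{k\in\max_\preceq(S)}\lambda_k},$$
and the normalizer $Z$ is common to $\pi$ and $\pi'$, so it cancels in the ratio. Before using the formula for $\pi'$, we first check that $\pi'\in\Pi^\preceq$. Since $\pi_k$ and $\pi_{k+1}$ are incomparable, swapping them cannot violate any relation $i\prec j$: the only pair whose relative order changes is $(\pi_k,\pi_{k+1})$. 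Hence $\pi'$ is a linear extension, and in particular $S'_k$ is feasible, so $\max_\preceq(S'_k)$ makes sense in Definition~\ref{def:max-elements}.

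The core step is to compare the prefixes of $\pi$ and $\pi'$. For $t\le k-1$ and $t\ge k+1$ they are the same set, $S'_t=S_t$; the latter holds because $\{\pi_k,\pi_{k+1}\}$ is contained in both when $t\ge k+1$. For every $t\ne k$ the element placed at step $t$ is also unchanged, except at $t=k+1$, where $\pi'_{k+1}=\pi_k$ while $\pi_{k+1}$ is placed instead. So the factors with $t\notin\{k,k+1\}$ are identical and cancel.

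For the remaining two factors, split each $f$ into a numerator piece $\lambda_{\pi_t}$ and a state piece $g(S)=|\max_\preceq(S)|/\sum_{k\in\max_\preceq(S)}\lambda_k$.
\begin{itemize}
\item The numerator pieces are $\lambda_{\pi_k}\lambda_{\pi_{k+1}}$ for both permutations, since the same two elements are placed at steps $k$ and $k+1$, just in the other order. These cancel.
\item For the state pieces, $S_{k+1}=S'_{k+1}$, so $g(S_{k+1})$ cancels, leaving $g(S'_k)/g(S_k)$.
\end{itemize}
Writing this out gives
$$\frac{|M'_k|/\sum_{j\in M'_k}\lambda_j}{|M_k|/\sum_{i\in M_k}\lambda_i}=\frac{\bigl(\sum_{i\in M_k}\lambda_i\bigr)/|M_k|}{\bigl(\sum_{j\in M'_k}\lambda_j\bigr)/|M'_k|},$$
which is \eqref{eqn:local-ratio}.

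The only step needing care is the bookkeeping in the last paragraph. The point is that the $\lambda$ numerators are attached to elements while the denominators are attached to prefix sets. So one must not naively pair step-$k$ factors with step-$k$ factors; instead the element weights and the set weights are regrouped separately. The rest is routine.
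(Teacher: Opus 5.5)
Your proof is correct and follows essentially the same route as the paper: cancel the common normalizer, note that only the factors at $t=k$ and $t=k+1$ differ, and use $S'_{k+1}=S_{k+1}$ together with the cancellation of $\lambda_{\pi_k}\lambda_{\pi_{k+1}}$ to leave only the state terms at step $k$. Your explicit check that $\pi'\in\Pi^{\preceq}$, which the paper takes for granted, is a welcome addition. Your warning against pairing the step-$k$ factors with each other is unnecessary, though, since the paper does exactly that and the $\lambda$ ratios cancel once the two step ratios are multiplied.
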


Algorithm \ref{alg:tilt-mcmc} describes our algorithm for sampling $\pi\sim p_{\rm PASV}$.  Based on Algorithm \ref{alg:tilt-mcmc}, PASV can be efficiently approximated by a Monte Carlo on \eqref{eqn:rov} without difficulty.

\begin{algorithm}[t]
    \caption{
    M-H algorithm for sampling $\pi\sim p^{(\preceq,\lambda)}(\cdot)$.
    }
    \label{alg:tilt-mcmc}
    \begin{algorithmic}
    \STATE {\bfseries Input:} poset $([n],\preceq)$; weights $\lambda$; number of samples $N_{\text{MC}}$; burn-in $B$; thinning $\tau$; index sampler $f$ on $[n-1]$.
    \STATE {\bf Initialize} $\pi^{(0)}\in\Pi^{\preceq}$;
    Set of sampled $\pi$'s:
    $\mathcal{T}\leftarrow\emptyset$
    \FOR{$t=1$ {\bfseries to} $B+\tau(N_{\text{MC}}-1) + 1$}
      \STATE Draw $k\sim f$; set $\pi\leftarrow \pi^{(t-1)}$
      \IF{$\pi_k$ and $\pi_{k+1}$ are incomparable}
        \STATE 
        $\pi'\leftarrow(\pi$ with $(\pi_k,\pi_{k+1})$ swapped)
        \STATE Compute $(S_k, S'_k, M_k, M'_k)$ and $\alpha$, c.f. \eqref{eqn:MH-accept-rate}, \eqref{eqn:local-ratio}
        \STATE With prob. $\alpha$, set $\pi^{(t)}\leftarrow\pi'$; otherwise set $\pi^{(t)}\leftarrow \pi$
      \ELSE
        \STATE $\pi^{(t)}\leftarrow \pi$
      \ENDIF
      \IF{$t>B$ {\bf and} $(t-B-1)\bmod \tau=0$}
        \STATE Append $\pi^{(t)}$ to $\mathcal{T}$
      \ENDIF
    \ENDFOR
    \STATE \textbf{return} $\mathcal{T}$
    \end{algorithmic}
\end{algorithm}

\section{Experiments}
\label{sec:experiment}

\subsection{PASV for Data Valuation}
\label{sec:data-valuation}

We evaluate PASV in a data market setting with explicit \emph{data lineage} (some providers' training examples are derived or copied from others), comparing it against existing methods.

\paragraph{Setup: data and utility.}
We use MNIST \citep{lecun2002gradient} and CIFAR10 \citep{krizhevsky2009learning}, both containing 10 classes, with a $k$-NN classifier
(MNIST: pixel features; CIFAR10: pretrained ResNet-18 embeddings \citep{he2016deep}).
We fix a balanced test set of size $1000$ (100 per class) and define the utility of a training subset $S$ as test accuracy, $U(S)$.
We consider eight data providers, each contributing 100 training data points ($n=800$ players (data points) in total):

\begin{itemize}
  \item \textbf{Owner:} original labeled data points.
  
  \item \textbf{Anchor:} classical augmentation applied to Owner (e.g., rotation/blur);
  producing synthetic data points that downstream providers may reuse.
  
  \item \textbf{Four Boosters:} four generative model families, GAN \citep{goodfellow2020generative}, DDPM \citep{ho2020denoising}, DDIM \citep{song2020denoising}, and FM \citep{lipman2022flow} that produce synthetic data points. Each Booster contributes 50 samples from a genAI model trained on Owner and another 50 samples trained on Anchor.
  
  \item \textbf{Copier:} copies half from Owner and half from Anchor.
  
  \item \textbf{Poisoner:} copies the same way as Copier but randomly flips labels (harmful reuse).
\end{itemize}

\begin{figure}[tb!]
    \centering
    \begin{tikzpicture}[
      x=2.7cm, y=0.6cm,
      panel/.style={inner sep=0pt, outer sep=0pt},
      title/.style={font=\scriptsize},
      arr1/.style={-{Latex[length=1.5mm,width=1mm]}, line width=0.6pt, draw=owner},
      arr2/.style={-{Latex[length=1.5mm,width=1mm]}, line width=0.6pt, draw=anchor},
      groupbox/.style={ draw, rounded corners=3pt, line width=0.5pt, inner sep=0pt}
    ]
    
    \node[panel] (owner)    at (-1.2,1.5) {\includegraphics[width=0.97cm]{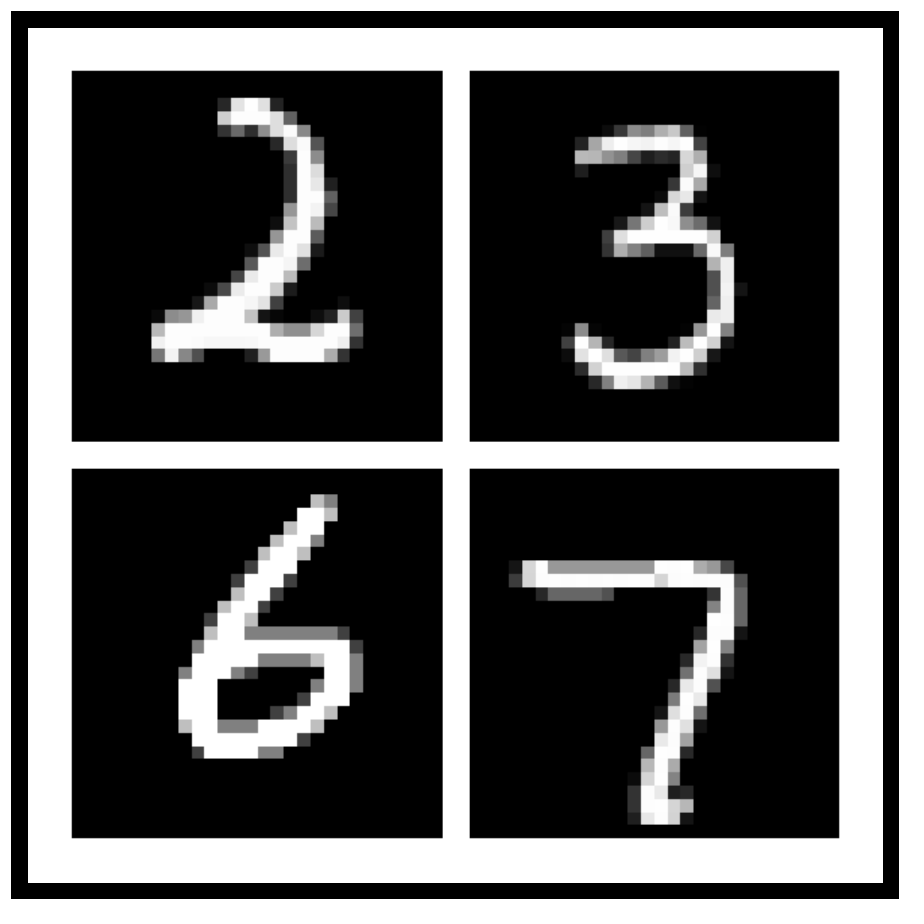}};
    \node[panel] (anchor)   at (-0.6,-0.35)    {\includegraphics[width=0.97cm]{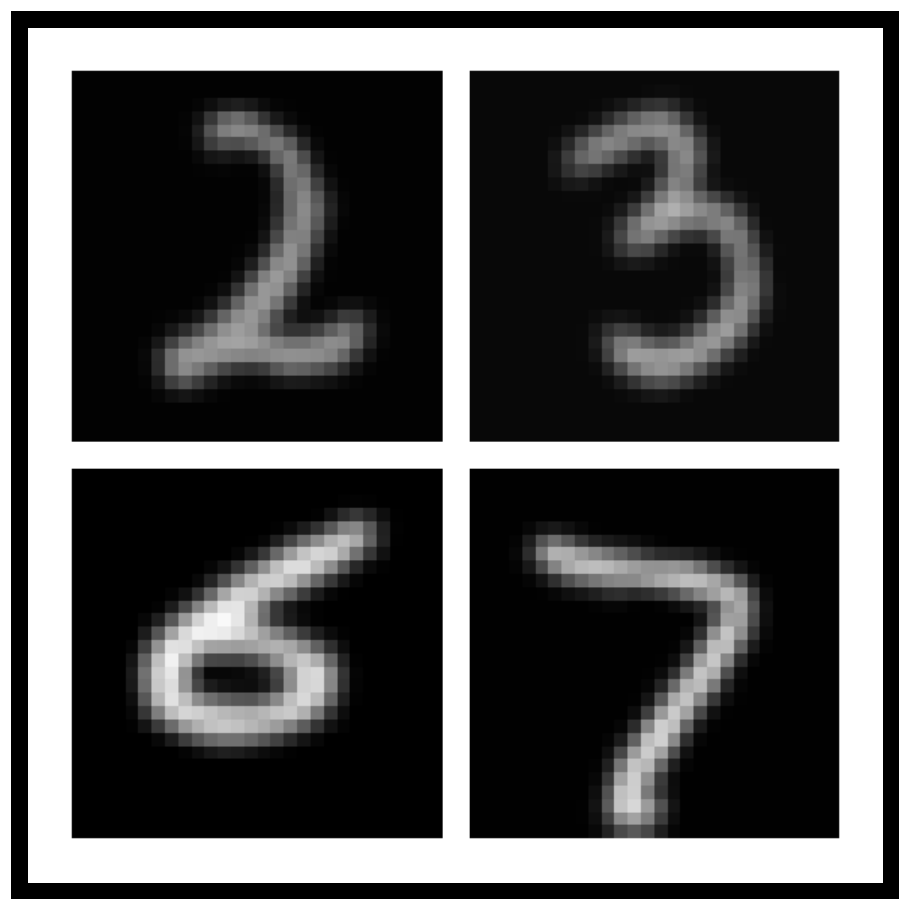}};
    
    \node[panel] (booster)  at (0.0,3) {\includegraphics[width=0.97cm]{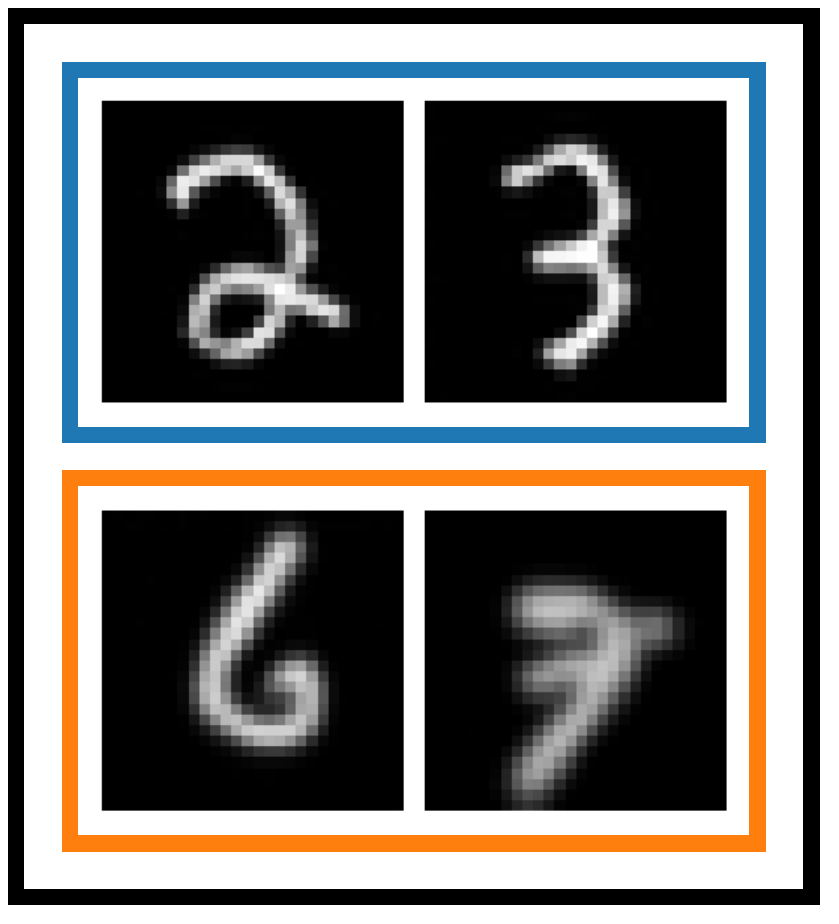}};
    \node[panel] (copier)   at (0.5,1.35)  {\includegraphics[width=0.97cm]{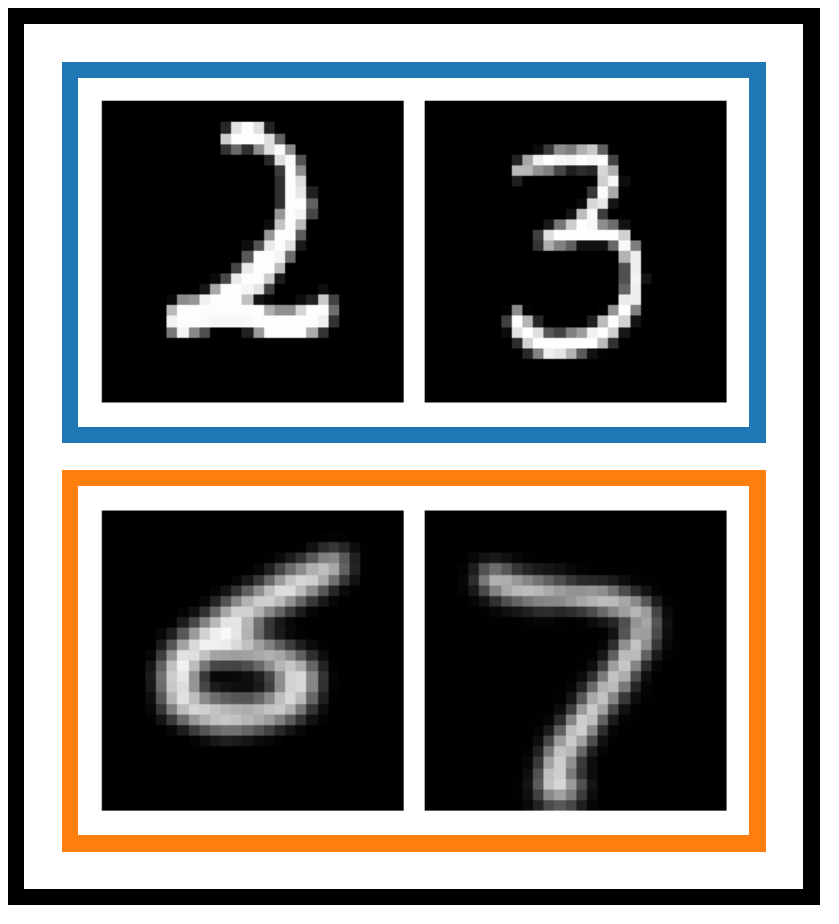}};
    \node[panel] (poisoner) at (1,-0.3)  {\includegraphics[width=0.97cm]{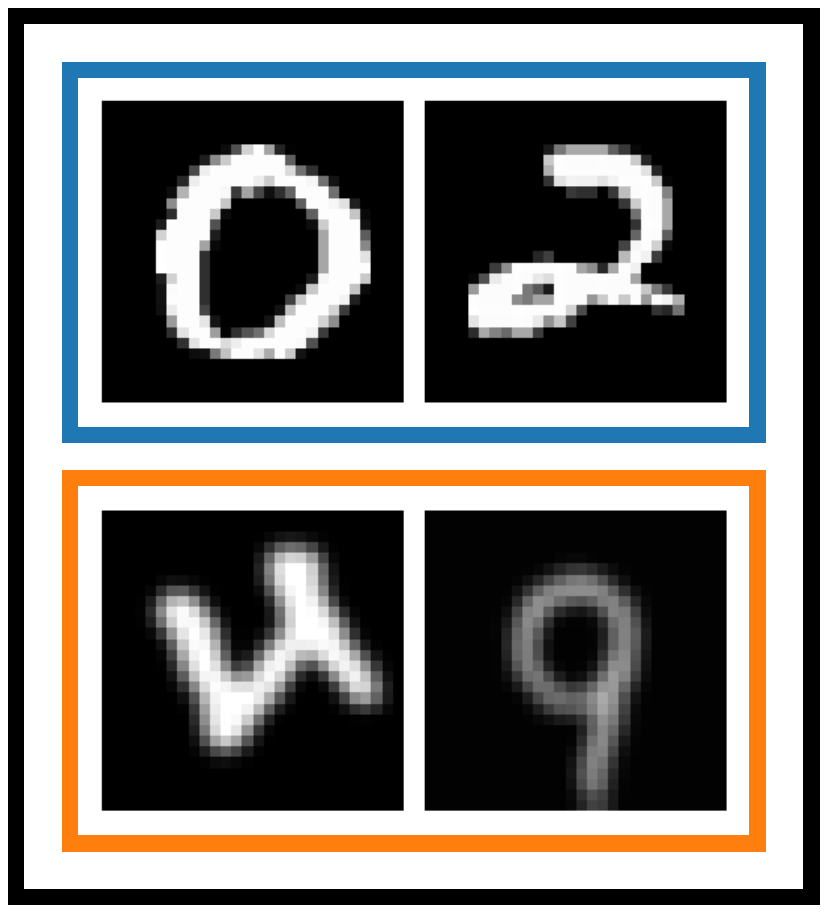}};
    
    \node[title, font=\tiny, yshift=2.6pt]  at (owner.north)    {Owner};
    \node[title, font=\tiny, yshift=2.5pt]  at (anchor.north)   {Anchor};
    \node[title, font=\tiny, yshift=2.6pt] at (booster.north)  {Boosters};
    \node[title, font=\tiny, yshift=2.6pt] at (copier.north)   {Copier};
    \node[title, font=\tiny, yshift=2.6pt] at (poisoner.north) {Poisoner};
    
    \node[groupbox, fit=(anchor) (booster) (copier) (poisoner), inner xsep=4pt, inner ysep=5pt, yshift=3pt] (synbox) {};
    \node[title, anchor=south west] at ($(synbox.north west)+(-1pt,-13pt)$) {\textbf{Synthetic}};
    \node[groupbox, fit=(owner), inner xsep=6pt, inner ysep=10pt, xshift=-4pt, yshift=8pt] (origbox) {};
    \node[title, anchor=south west] at ($(origbox.north west)+(-1pt,-13pt)$) {\textbf{Original}};
    
    \draw[arr1] ($(owner.east)+(0,0)$) -- ($(anchor.west)+(0,0.6)$);
    
    \draw[arr1] ($(owner.east)+(0,0)$) -- ($(booster.west)+(0.04,0.45)$);
    \draw[arr1] ($(owner.east)+(0,0)$) -- ($(copier.west)+(0.04,0.45)$);
    \draw[arr1] ($(owner.east)+(0,0)$) -- ($(poisoner.west)+(0.04,0.25)$);
    
    \draw[arr2] ($(anchor.east)+(0,0)$) -- ($(booster.west)+(0.04,-0.65)$);
    \draw[arr2] ($(anchor.east)+(0,0)$) -- ($(copier.west)+(0.04,-0.65)$);
    \draw[arr2] ($(anchor.east)+(0,0)$) -- ($(poisoner.west)+(0.04,-0.5)$);
    
    \end{tikzpicture}
    \caption{The DAG used for MNIST/CIFAR10. 
    Only one Booster is shown.
    Here, (for example) a blue edge from Owner to half of Copier represents $100\times50$ edges between their members.}
    \label{fig:dag-datamarket}
\end{figure}
\begin{figure*}[tb!]
    \centering
    
    \includegraphics[width=\linewidth]{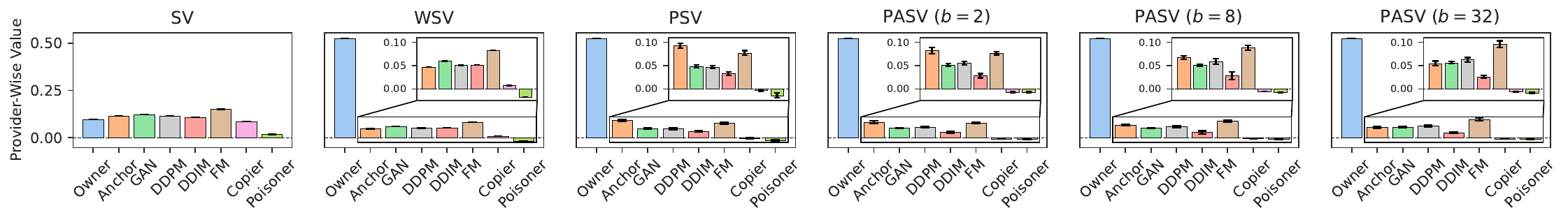}\vspace{-5pt}
    \includegraphics[width=\linewidth]{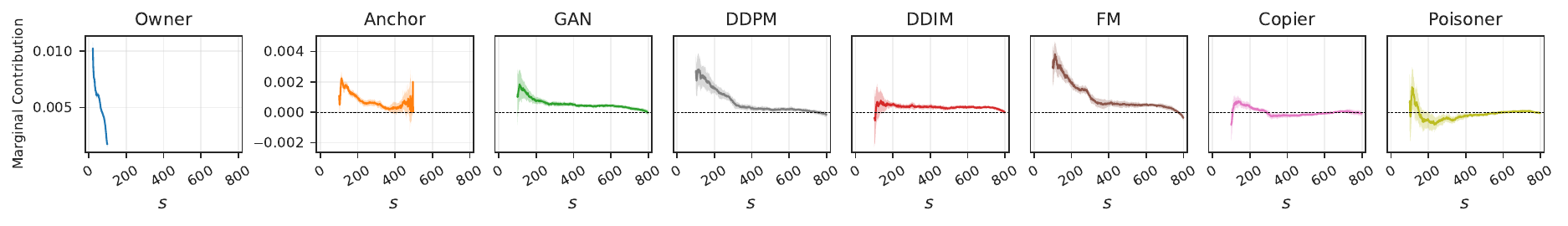}\vspace{-5pt}
    \includegraphics[width=\linewidth]{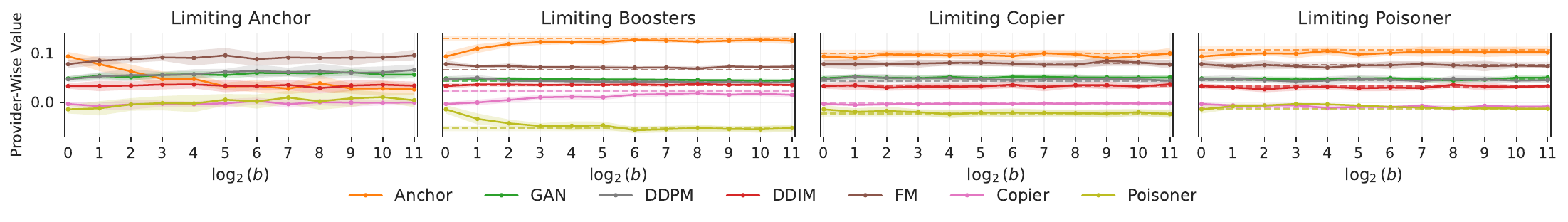}
    
    \caption{
        MNIST results. 
        \textbf{Top:} Provider-level values (summed value for each provider \citep{lee2025faithful};  $10$ repetitions, report mean \& $\pm 1$ std. bars;
        \textbf{Middle:} Marginal contributions: $\ep[U(\pi^i\cup\{i\})-U(\pi^i)||\pi^i|=s]$ vs $s$, c.f. \eqref{eqn:rov}, $p:=$~Uniform$(\Pi^\preceq)$;
        \textbf{Bottom:} Priority sweeping:
        $c\in\{(0,1,0,0,0),\ldots,(0,0,0,0,1)\}$, $b\in\{2^k:k\in[11]\}$,
        Owner not swept (alone in maximal set, weight has no effect).
    }
    \label{fig:mnist-results}
\end{figure*}

\paragraph{Data lineage and DAG.}
We encode data lineage as a precedence DAG over \emph{blocks}. 
Owner and Anchor each form a block.
For every reuse-based provider (4 Boosters, Copier, Poisoner), we split its data into two blocks: Owner-derived vs.\ Anchor-derived.
This yields a 14 block-DAG (which is not an ordered partition), illustrated in Figure~\ref{fig:dag-datamarket}.

\paragraph{Benchmarks.}
(i) {\bf classical SV} \citep{ghorbani2019data};
(ii) {\bf WSV:} approximate the DAG by a two-layer ordered-partition: Original (Owner) $\prec$ Synthetic (Anchor, Boosters, etc) \citep{zheng2025rethinking};
(iii) {\bf PSV} \citep{faigle1992shapley}.

\paragraph{Soft priority weights.}
We set individual weights according to their provider type as $\lambda=b^c$, with a shared base $b>1$ and exponents $c=(c_{\text{Owner}},c_{\text{Anchor}},c_{\text{Booster}},c_{\text{Copier}},c_{\text{Poisoner}})$.

\paragraph{Main comparison.}
We first conduct a \emph{main comparison}, in which $c=(0,1,0,2,2)$ and $b\in\{2,8,32\}$, mildly penalizing Anchor and more strongly penalizing Copier/Poisoner.
Here, we use weights to softly incorporate our prior knowledge of \emph{usefulness}, e.g., anticipating that the test set might not necessarily feature rotated images as seen in Anchor, our $\lambda$ favored Boosters over Anchor, since Boosters contain data generated directly based on Owner without excess rotation.

Row 1 of Figure~\ref{fig:mnist-results} reports \emph{provider-level} results for MNIST (CIFAR10 in Appendix~\ref{app:exp-dv-additional}).
SV over-credits data reuse (e.g., Copier), as it ignores precedence.
WSV favors Owner by forcing ``Original $\prec$ Synthetic'' but still mixes up among synthetic providers due to its structural limitation (Anchor is less credited, positive values for Copier).
PSV better preserves credit for upstream sources and can reveal negative value for Copier.
PASV then provides a more flexible controlled adjustment: increasing $b$ progressively suppresses the penalized types (Anchor/Copier/Poisoner) while respecting the same reuse constraints as PSV.

To better understand the effect of priority sweeping in PASV relative to PSV, we examine how a data point’s marginal gain depends on its entry position in $\pi$. 
In PASV, data points with larger $\lambda_j$ are more likely to be evaluated at later positions, when permitted by the DAG (see Remark~\ref{remark:lambda-effect}). 
We therefore plot the average marginal gain conditional on entering at each position $s$ for each data provider.
Row 2 of Figure~\ref{fig:mnist-results} shows the marginal-gain curves and offers insights (c.f. caption of Figure~\ref{fig:mnist-results}).
Anchor and 4 Boosters show similar trends: the earlier in $\pi$ the more significant the recognition of their positive contributions, and FM showed the most prominent trend.
These position-dependent patterns explain the difference between PSV and PASV observed in Row~1. 
PASV tends to evaluate data from Anchor, Copier, and Poisoner in richer coalition contexts and thus places more weight on their tail-regime marginal gains, leading to more diminished final valuations.

Interestingly, Copier/Poisoner both show a ``positive spike'' for small $|\pi^i|$; this is because when $|\pi^i|\approx 100$, the sampled size from each class (on average just 10) may be unstable;
in this situation, adding copied or even noisy data may still benefit the numerical stability of the classifier: when a class has few members present in a small $\pi^i$, adding Copier data can help $k$-NN avoid drawing neighbors from other classes.

\paragraph{Sensitivity analysis.}
Row 3 of Figure~\ref{fig:mnist-results} shows how provider-level valuations vary as $b$ grows for each of Anchor/Booster/Copier/Poisoner with their own indicator $c$, c.f. caption of Figure 5.
Applying our theoretical result in Section \ref{sec:method_limit}, we predict that as $b\to\infty$, 
for each of Boosters/Copier/Poisoner, we can compute a PSV with the DAG with added edges incoming from other providers (e.g., the analysis for Boosters sets $c=(0,0,1,0,0)$, then $b\to\infty$ is equivalent to adding edges \{Copier, Poisoner\}~$\to$~Boosters).
In Row 3 plots, we used dashed lines to mark the PASV computed under the new DAG with added edges, and they well-corroborate observed PASV values for large $b$, validating our theory's prediction.
However, this simplified translation does not apply to Anchor for the same reason as the example in Figure \ref{fig:limit-cases-counter}.

\subsection{Application to Feature Attribution}
\label{sec:feature-attribution}

While priority relations are usually considered for data Shapley, in feature Shapley, domain knowledge may require certain variables to be considered before others (e.g., causal ancestors before descendants) \citep{frye2020asymmetric}.
Here we test PASV in an attribution setting with precedence constraints encoded by a general DAG and focus on the attribution sensitivity to: 
(i)  DAG; and 
(ii) priority weights.

\paragraph{Setup.}
We follow \citet{frye2020asymmetric} on the UCI Census Income (Adult) dataset \citep{asuncion2007uci}, predicting whether annual income exceeds \$50K.
We remove \texttt{fnlwgt} and \texttt{education-num} to avoid redundancy, leaving $n=12$ features as players.
We train a two-hidden-layer MLP with 100 units per layer.

\paragraph{Utility function.}
For a predictor $f$, let $f_y(x)$ denote the predicted probability of assigning an input $x$ to the (true) class $y$.
For any subset of features $S\subseteq [n]$, we first use a $k$-NN imputation procedure (detailed steps in Appendix \ref{app:exp-fa-utility}) to estimate
$
    u_S(x,y)=\mathbb{E}_{x'\sim p(x'| x_S)}\bigl[f_y(x_S 
    \oplus 
    x'_{S^c})\bigr]
$
\citep{janzing2020feature,sundararajan2020many,frye2020shapley}.
Then we can define the utility function as
$
    U(S)=\mathbb{E}_{(x,y)\sim p(x,y)}\bigl[u_S(x,y)\bigr]
$
and approximate it by empirically sampling $(x,y)$ from the original data.

\begin{figure}[h]
  \centering
  \begin{subfigure}[t]{0.42\linewidth}
    \centering
    \includegraphics[height=0.14\textheight]{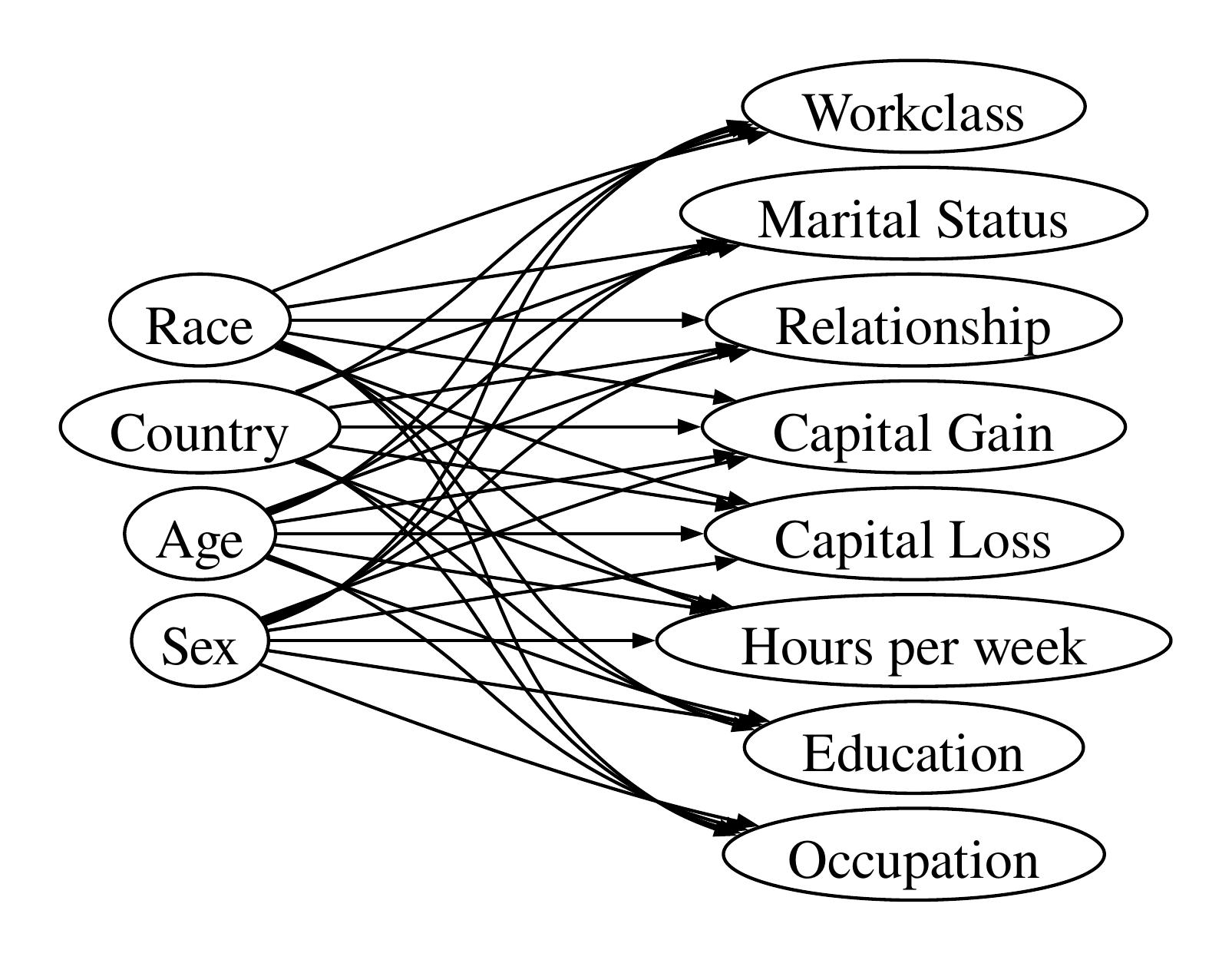}
    \caption{Ordered Partition}
    \label{fig:dag-income-op}
  \end{subfigure}
  \hfill
  \begin{subfigure}[t]{0.54\linewidth}
    \centering
    \includegraphics[height=0.14\textheight]{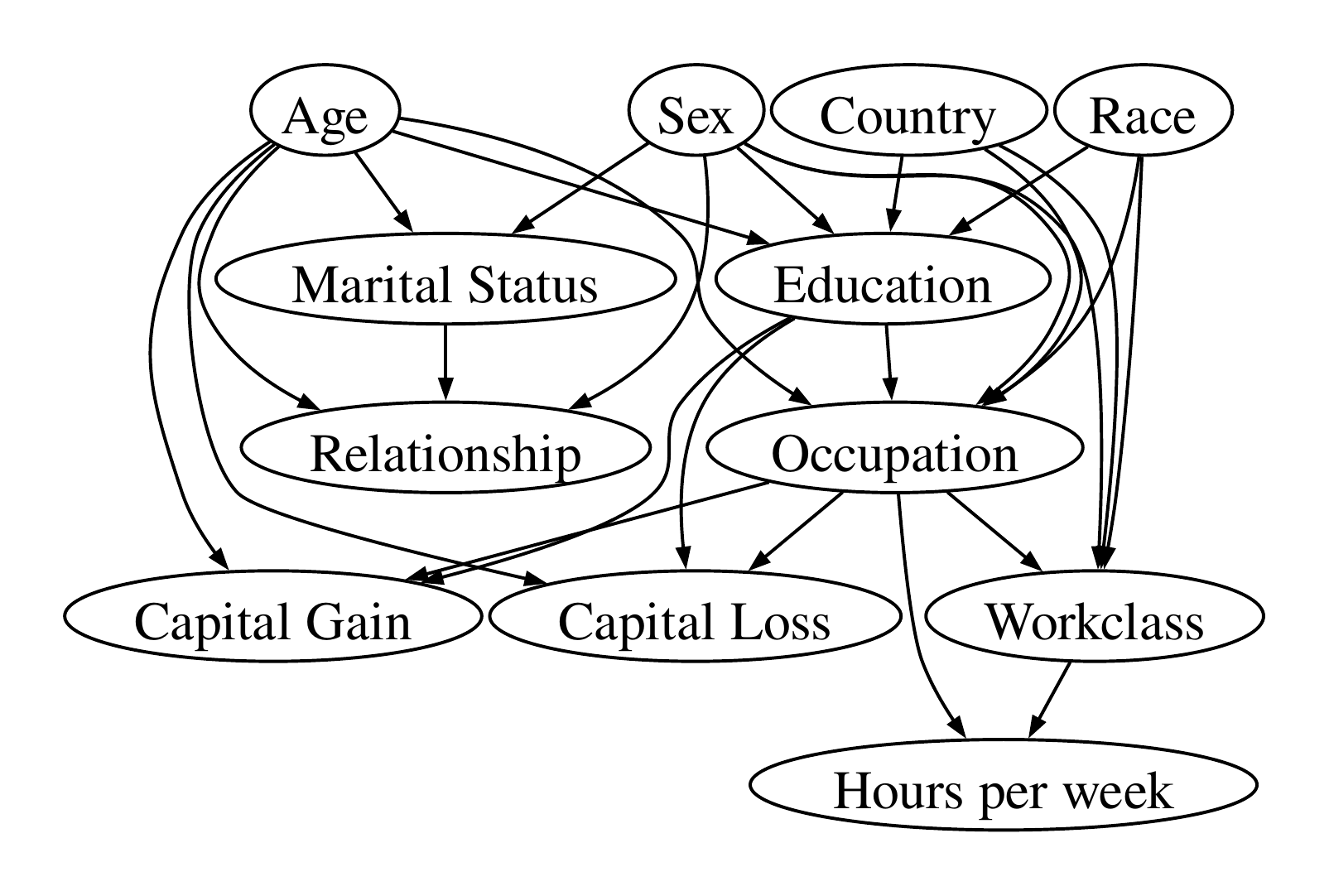}
    \caption{General DAG}
    \label{fig:dag-income-general}
  \end{subfigure}

  \caption{Two DAG specifications over the 12 income features.}
  \label{fig:dag-income}
\end{figure}

\paragraph{DAG specification.}
Figure~\ref{fig:dag-income} compares two DAGs encoding which feature orders are admissible.
The first is a coarse ordered partition used by \citet{frye2020asymmetric}, where four demographic variables precede the rest (Figure~\ref{fig:dag-income-op}).
The second is a more detailed precedence DAG over the same 12 features (Figure~\ref{fig:dag-income-general}) that adds AI-suggested additional relations based on common sense and domain knowledge (e.g., education~$\prec$~occupation); details in Appendix~\ref{app:gpt-dag}.

\paragraph{Experimental procedure.}
For each DAG, we first compute PASV with uniform weights $\lambda_i\equiv 1$, which reduces to PSV.
For the ordered-partition DAG, this also coincides with the uniform-weight WSV; for the general DAG, computing PSV/PASV requires the MCMC sampler in Section~\ref{sec:computation}.
Next, we perform a \emph{priority sweeping}: for each feature $i$, vary $\lambda_i\in\{2^k:k\in[-8:8]\}$ while holding $\lambda_j\equiv1, \forall j:j\neq i$.

\begin{figure}[h]
  \centering
  \includegraphics[width=\linewidth]{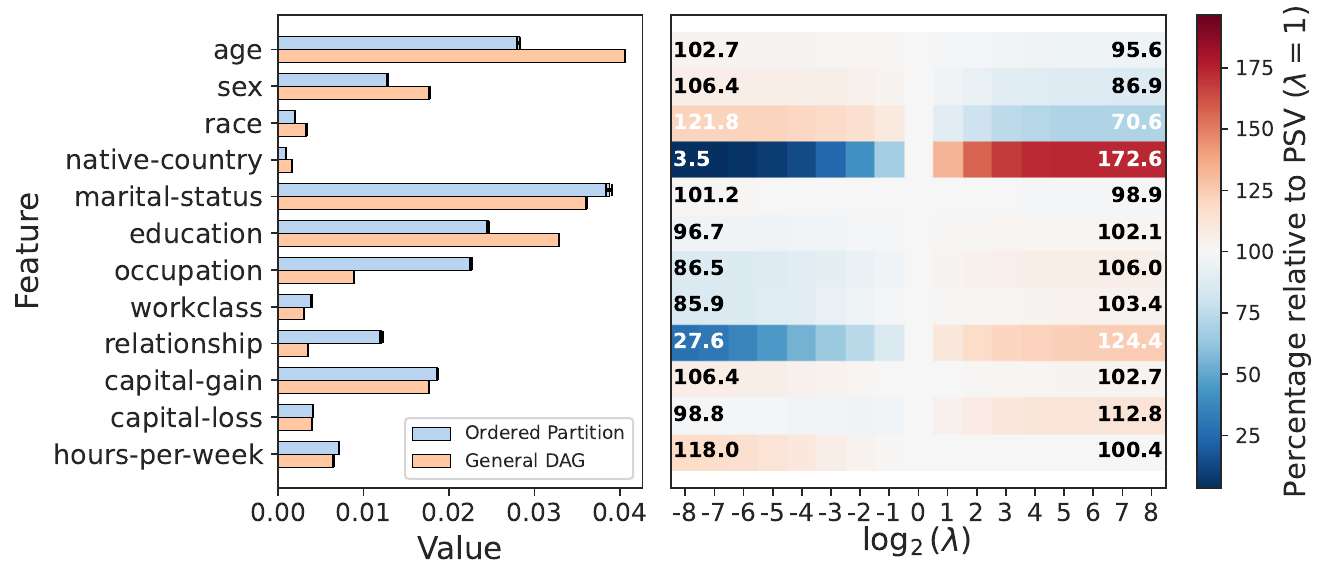}
    \caption{Results of feature attribution for Census Income dataset. (\textbf{Left}) PSV (=PASV) under the two DAGs with uniform weights. (\textbf{Right}) Sensitivity analysis using priority sweeping.}
  \label{fig:value-heatmap-income}
\end{figure}

\paragraph{Results.}
Figure~\ref{fig:value-heatmap-income} highlights two takeaways.
First, the left panel shows that {\bf DAG matters a lot:} with the same utility function, the PSV valuation changes noticeably between ordered partition and the richer DAG, indicating that using a coarse DAG (e.g., ordered partition) may significantly alter results.
Second, the priority sweeping with our PASV in the right panel {\bf provides a crucial sensitivity diagnosis}.
For example, several demographic variables (\texttt{age}, \texttt{sex}, \texttt{race}) decrease when delayed (large $\lambda$), whereas others (e.g., \texttt{native-country}, \texttt{relationship}, \texttt{occupation}) increase, and some features exhibit non-monotone responses.
This is consistent with the intuitive understanding in Section \ref{sec:data-valuation}: moving a feature later changes the set of variables already present when its marginal contribution is evaluated, so the effect depends on whether the feature is redundant with, or complementary to, its typical competitors under the assumed DAG.
Overall, PASV's priority sweeping provides a practical diagnostic tool for identifying which attributions are robust to priority weighting (e.g., {\tt martial-status}, {\tt capital-loss}, etc) and which are unstable (most noticeably {\tt native-country} and {\tt relationship}).

\section{Computational Cost and DAG Sensitivity}
\label{newsec::computational-cost-dag-sensitivity}

We now address two practical deployment questions: how expensive is PASV on larger player sets, and how sensitive is the result to the DAG?

\subsection{Computational Cost}
\label{newsubsec::computational-cost}

\paragraph{Overview.}
PASV's computation has two distinct stages: 1. a burn-in stage that runs the adjacent-swap M-H chain until stationarity, no utility evaluations; 2. a Monte Carlo stage that retains thinned samples of player permutations, evaluates $U(S)$ along the resulting prefix sets, and use the induced marginal contributions to approximate PASV.
In the data-valuation experiment of Section~\ref{sec:data-valuation}, the utility involves training and evaluating a downstream classifier, and the Monte Carlo stage dominates.
Under inexpensive utilities, however, the burn-in stage can be non-negligible.

\paragraph{Stage 1: mixing time.}
We assess mixing via a pairwise order discrepancy
$D_t = \max_{i, j \in [n]} | \pr_{\pi_0}^t(i \prec_\pi j) - \pr^\star(i \prec_\pi j) |$,
where $i \prec_\pi j$ means $i$ appears before $j$ in $\pi$, $\pr^\star$ is the stationary distribution, and the \emph{empirical mixing time} is $\min\{t:D_t < 1/4\}$.
We compute $\pr^\star$ exactly by adapting the backward-removal recursion of \citet{kangas2016counting} to replace unweighted extension counts with a $\lambda$-weighted mass; see Appendix~\ref{app:sim-mixing-time}.
Across four synthetic poset families in \citet{talvitie2017mixing} with $\lambda$ ranging from a uniform baseline to $\lambda_i \sim \mathrm{Unif}(1, R)$ with $R$ up to $1024$, Figure~\ref{fig:app-mixing-time} in Appendix~\ref{app:sim-mixing-time} shows that the empirical mixing time tracks $O(n^2)$ \citep{talvitie2017mixing} rather than the worst-case bound, and that non-uniformity of $\lambda$ has limited effect on mixing.

\paragraph{Stage 2: Monte Carlo accuracy of direct MC estimator.}
We benchmark on a closed-form sum-of-unanimity utility whose true PASV admits a tractable expression (Appendix~\ref{app:sim-approx-error}).
On this benchmark, the direct Monte Carlo estimator attains a small relative error using only a few hundred retained permutations, and remains tractable up to $n = 8192$ players under moderate compute (Figure~\ref{fig:app-method1-approximation}).

\paragraph{An (optional) Regression MSR proxy for stage 2.}
The direct Monte Carlo estimator is simple and broadly applicable, but its statistical efficiency can be improved by exploiting the structures of the utility function.
Adapting the Regression MSR method \citet{witter2026regression} using a quadratic proxy \citet{fumagalli2026polyshap} to our priority-aware setting, we develop a proxy-plus-bias-correction estimator.
Concretely, we fit a quadratic proxy 
\begin{align}
    \widehat{U}(S) = U(\emptyset) + \sum_{i \in S} a_i + \sum_{\{i, j\} \subseteq S, i \parallel j} b_{ij},
\end{align}
where $i\parallel j$ denotes that neither $i\preceq j$ nor $j\preceq i$.
We compute PASV by
\begin{align}
    \psi(U) = \psi(\widehat{U}) + \psi(U - \widehat{U}),
\end{align}
in which the first term $\psi(\widehat{U})$ can be estimated in a closed form by using fitted $a$ and $b$ values, leaving only the second term (i.e. residual) $\psi(U - \widehat{U})$ to be estimated by Monte Carlo.
Whether the proxy improves over the direct Monte Carlo estimator depends on the number of players $n$ and on how much low-order structure $U$ exhibits.
Appendix~\ref{app:sim-proxy-estimation} reports a head-to-head comparison of these methods up to $n = 2048$: the proxy reduces approximation error under a fixed budget when $n$ is moderate (e.g.\ $n \leq 512$), but the proxy-fit cost scales as $O(N_{\mathrm{coef}} d^2)$ with $d$ up to $O(n^2)$ and the peak memory grows accordingly (tens of GB at $n = 2048$; Figure~\ref{fig:app-proxy-memory}).
Choosing between the direct estimator and the proxy variant is therefore closely tied to the balance between proxy quality and proxy budget.

\subsection{Sensitivity to DAG Change}
\label{newsubsec::dag-sensitivity}

Since the DAG is a modeling input rather than an observed ground truth, a natural deployment concern is how PASV's output depends on the chosen DAG.
A crucial point is that the right answer here is not blanket robustness.
For example, suppose $i$ and $j$ both claim a major discovery; then the three cases: $i \prec j$, $j \prec i$, and $i \parallel j$ (independent discovery) are \emph{expected to produce remarkably different PASV results} for $i$ and $j$.
Therefore, we provide practitioners not a simple claim of robustness against DAG changes, but instead a diagnostic tool for the DAG-sensitivity of the result, similar in spirit to that of priority sweeping for $\lambda$.
By varying one edge at a time and tracking the change in PASV, users obtain a per-edge importance ranking that flags which edges of their DAG warrant careful verification and which can be altered without significant consequence.

We revisit the feature-attribution setting of Section~\ref{sec:feature-attribution}, fix $\lambda \equiv 1$, and perturb the general DAG of Figure~\ref{fig:dag-income-general} under four scenarios: 
(i) single-edge deletion ($12$ transitive-reduction edges); 
(ii) single-edge addition ($20$ sampled valid additions);
(iii) minimum-cut deletions disconnecting the DAG into two equal components; and 
(iv) deleting or keeping only the top-$K$ most influential edges.
Let $\psi$ and $\psi'$ be the PASV vectors under original and altered DAG's.
We measure the Relative Attribution Shift $\mathrm{RAS} = \|\psi' - \psi\|_2 / \|\psi\|_2$ and pair it with auxiliary diagnostics including the Pearson correlation of $\psi$ and $\psi'$, the top-$4$ overlap, and the pairwise-order change $\Delta_{\mathrm{ord}}$.
Appendix~\ref{app:exp-fa-dag-sensitivity} reports the full results (Figure~\ref{fig:app-dag-misspecification}).
The behavior matches the intended use of our diagnostic tool: most single-edge perturbations have small RAS (below $0.05$) with Pearson correlation above $0.97$, while a few critical edges and disconnecting cuts produce visibly larger shifts; across all scenarios $\mathrm{RAS}$ is closely tied to $\Delta_{\mathrm{ord}}$.
These importance scores provide users a clean priority list for DAG edge validation.

\section{Conclusion}

We introduce Priority-Aware Shapley Value (PASV), a unified Shapley-based attribution method that simultaneously enforces hard precedence constraints and incorporates soft priority weights. 
PASV recovers precedence-only and weight-only Shapley variants as special cases, admits scalable MCMC estimation for general DAGs, and has a solid axiomatization basis.
An important tool that PASV provides practitioners is a sensitivity/robustness diagnostic procedure via \emph{priority sweeping}.

\section*{Impact Statement}

This paper presents work whose goal is to advance the field
of Machine Learning. There are many potential societal
consequences of our work, none of which we feel must be
specifically highlighted here.

\section*{Acknowledgement}

Lee and Zhang were supported by NSF DMS-2311109.
Liu and Tang were supported by NSF DMS-2412853 and Jane Street Group, LLC.

\bibliography{references}
\bibliographystyle{icml2026}

\newpage

\onecolumn

\appendix

\section{Standard Axioms of the Shapley Value}
\label{app:shapley-axioms}

We briefly review the standard axiomatic framework leading to the Shapley value.
Throughout this appendix we fix the player set $[n]=\{1,\dots,n\}$ and assume $U(\emptyset)=0$ without loss of generality, since replacing $U$ by $U'(S):=U(S)-U(\emptyset)$ leaves every marginal contribution $U(S\cup\{i\})-U(S)$ unchanged.
A \emph{value} is a mapping $\psi$ that assigns to each utility function $U$ a payoff vector $\psi(U)=(\psi_i(U))_{i\in[n]}$.
The Shapley value $\psi^{\text{SV}}$ is characterized as the unique value satisfying the following four standard axioms \citep{shapley1953value}.

\begin{axiom}[Efficiency; E]
\label{axiom:e}
For a given utility function $U$,
$$
  \sum_{i\in[n]} \psi_i(U) = U([n]).
$$
\end{axiom}
\label{axiom:l}
\begin{axiom}[Linearity; L]
For given two utility functions $U,V$ and all $\alpha,\beta\in\mathbb{R}$,
$$
  \psi(\alpha U + \beta V)=\alpha\psi(U) + \beta\psi(V).
$$
\end{axiom}

\begin{axiom}[Null Player; NP]
\label{axiom:np}
If player $i$ satisfies $U(S \cup \{i\}) = U(S)$ for all $S \subseteq [n]\setminus\{i\}$,
then
$$
  \psi_i(U) = 0.
$$
\end{axiom}
\begin{axiom}[Symmetry; S]
\label{axiom:s}
If two players $i$ and $j$ satisfy
$U(S \cup \{i\}) = U(S \cup \{j\})$ for all $S \subseteq [n]\setminus\{i,j\}$,
then
$$
  \psi_i(U) = \psi_j(U).
$$
\end{axiom}

Efficiency requires that the entire value $U([n])$ be fully allocated among players.
Linearity ensures that payoffs respect additive changes in utility.
That is, if a game is formed by combining (possibly with weights) two utilities, then the assigned payoffs combine in the same way.
Null player formalizes a fairness requirement in the sense of that a player who never changes the utility receives zero payoff.
Symmetry reflects anonymity by requiring identical treatment of players who are interchangeable in terms of their marginal contributions.

\section{Proofs of Propositions on Limiting Cases in Section \ref{sec:method_limit}}
\label{app:formal-limit}

\subsection{Proof of Proposition \ref{prop:limit-maximal-node}}

\begin{proof}
Fix $i\in\max_{\preceq}([n])$ and fix $(\lambda_j)_{j\neq i}$.
Let $([n],\preceq')$ and $\lambda'$ be the modified poset and weight vector satisfying the properties in the statement, respectively.
Since $p^{(\preceq,\lambda)}$ is defined in proportional form, it is convenient to work with the corresponding unnormalized weight
$$
w^{(\preceq,\lambda)}(\pi) := \prod_{t=1}^n\frac{|\max_{\preceq}(S_t)|\lambda_{\pi_t}}{\sum_{k\in\max_{\preceq}(S_t)}\lambda_k}.
$$
Since $\preceq'$ adds precedence relations to $\preceq$, every $\pi$ that respects $\preceq'$ also respects $\preceq$, and hence $\Pi^{\preceq'}\subseteq\Pi^{\preceq}$.
We consider separately the cases $\pi\in \Pi^{\preceq'}$ and $\pi\in \Pi^{\preceq}\setminus\Pi^{\preceq'}$.

First, suppose $\pi \in \Pi^{\preceq'}$. In this case, the added relations $j\prec i$ for all $j\in\max_{\preceq}([n])\setminus\{i\}$ force $i$ to be the last element in $\pi$, i.e., $\pi_n=i$.
Hence we can write
\begin{align}
w^{(\preceq,\lambda)}(\pi)
&=
\left(\prod_{t=1}^{n-1}\frac{|\max_{\preceq}(S_t)|\lambda_{\pi_t}}{\sum_{k\in\max_{\preceq}(S_t)}\lambda_k}\right)
\cdot
\left(\frac{|\max_{\preceq}([n])|\lambda_{i}}{\sum_{k\in\max_{\preceq}([n])}\lambda_k} \right)
\nonumber\\
&\to
\prod_{t=1}^{n-1}\frac{|\max_{\preceq}(S_t)|\lambda_{\pi_t}}{\sum_{k\in\max_{\preceq}(S_t)}\lambda_k},
\label{eqn:prop35-limit-unnorm}
\end{align}
as $\lambda_i\to\infty$, since $i\in\max_{\preceq}([n])$ implies
$$
\frac{|\max_{\preceq}([n])|\lambda_{i}}{\sum_{k\in\max_{\preceq}([n])}\lambda_k}
=
\frac{|\max_{\preceq}([n])|\lambda_{i}}{\lambda_i+\sum_{k\in\max_{\preceq}([n])\setminus\{i\}}\lambda_k}
\to 1.
$$

On the other hand, the unnormalized weight for $p^{(\preceq',\lambda')}$ is
\begin{equation}
w^{(\preceq',\lambda')}(\pi)
=
\left(\prod_{t=1}^{n-1}\frac{|\max_{\preceq'}(S_t)|\lambda'_{\pi_t}}{\sum_{k\in\max_{\preceq'}(S_t)}\lambda'_k}\right)
\cdot
\left(\frac{|\max_{\preceq'}([n])|\lambda'_{i}}{\sum_{k\in\max_{\preceq'}([n])}\lambda'_k} \right).
\label{eqn:prop35-modified-unnorm}
\end{equation}
Because $\pi_n=i$, for every $t\in\{1,\dots,n-1\}$ the remaining set $S_t$ does not contain $i$.
By construction of $\preceq'$, the only new relations are of the form $j\prec i$ for $j\in\max_{\preceq}([n])\setminus\{i\}$,
which do not affect the admissible sets once $i$ is removed.
Therefore, for $t\in[n-1]$,
$$
\max_{\preceq'}(S_t)=\max_{\preceq}(S_t).
$$
Moreover, by the definition of $\lambda'$ we have $\lambda'_j=\lambda_j$ for all $j\neq i$, and since $\pi_t\neq i$ for $t \in [n-1]$, it follows that $\lambda'_{\pi_t}=\lambda_{\pi_t}$ for all $t \in [n-1]$.
Hence the first product term in \eqref{eqn:prop35-modified-unnorm} coincides with the limit in \eqref{eqn:prop35-limit-unnorm}.

Finally, under $\preceq'$ the element $i$ is the unique maximal element of $[n]$, i.e., $\max_{\preceq'}([n])=\{i\}$.
Thus the second factor in \eqref{eqn:prop35-modified-unnorm} equals $1$.
Combining these observations, for $\pi\in\Pi^{\preceq'}$, we obtain
\begin{equation}
\lim_{\lambda_i\to\infty} w^{(\preceq,\lambda)}(\pi)=w^{(\preceq',\lambda')}(\pi).
\label{eqn:unnormalized-inside-conv}
\end{equation}
Next, consider the case that $\pi\in\Pi^{\preceq}\setminus\Pi^{\preceq'}$.
In this case, $\pi$ respects $\preceq$ but violates at least one of the additional relations defining $\preceq'$.
Let $t^\star$ be the step such that $\pi_{t^\star}=j$.
Then $i\in S_{t^\star}$ since $i$ has not been removed before step $t^\star$.
Moreover, since $i,j\in \max_{\preceq}([n])$, both are maximal whenever they are present, and hence
$$
\{i,j\}\subseteq \max_{\preceq}(S_{t^\star}).
$$
Therefore, the $t^\star$-th factor in $w^{(\preceq,\lambda)}(\pi)$ satisfies
$$
\frac{|\max_{\preceq}(S_{t^\star})|\lambda_{j}}{\sum_{k\in\max_{\preceq}(S_{t^\star})}\lambda_k}\le\frac{|\max_{\preceq}(S_{t^\star})|\lambda_{j}}{\lambda_i+\lambda_j}.
$$
All other factors in the product defining $w^{(\preceq,\lambda)}(\pi)$ are bounded above by
$|\max_{\preceq}(S_t)|$ and do not diverge with $\lambda_i$.
Hence there exists a constant $C(\pi)<\infty$ which is independent of $\lambda_i$ such that
\begin{equation}
w^{(\preceq,\lambda)}(\pi)\le C(\pi)\cdot \frac{\lambda_j}{\lambda_i+\lambda_j}\to 0.
\label{eqn:unnormalized-outside-conv}
\end{equation}
as $\lambda_i\to\infty$.

Let $Z(\lambda_i):=\sum_{\pi\in\Pi^{\preceq}} w^{(\preceq,\lambda)}(\pi)$ and
$Z':=\sum_{\pi\in\Pi^{\preceq'}} w^{(\preceq',\lambda')}(\pi)$.
Since $\Pi^{\preceq}$ is finite, by \eqref{eqn:unnormalized-inside-conv} and \eqref{eqn:unnormalized-outside-conv} we have
\begin{align*}
Z(\lambda_i)
&=\sum_{\pi\in\Pi^{\preceq'}} w^{(\preceq,\lambda)}(\pi)
+\sum_{\pi\in\Pi^{\preceq}\setminus\Pi^{\preceq'}} w^{(\preceq,\lambda)}(\pi) \\
&\to \sum_{\pi\in\Pi^{\preceq'}} w^{(\preceq',\lambda')}(\pi) + 0\\
&=: Z'>0.
\end{align*}
Therefore, for any $\pi\in\Pi^{\preceq'}$,
$$
p^{(\preceq,\lambda)}(\pi)=\frac{w^{(\preceq,\lambda)}(\pi)}{Z(\lambda_i)}
\to
\frac{w^{(\preceq',\lambda')}(\pi)}{Z'}
= p^{(\preceq',\lambda')}(\pi),
$$
while for any $\pi\in\Pi^{\preceq}\setminus\Pi^{\preceq'}$,
$$
p^{(\preceq,\lambda)}(\pi)=\frac{w^{(\preceq,\lambda)}(\pi)}{Z(\lambda_i)}\to 0,
$$
as $\lambda_i\to\infty$.
This completes the proof.

\end{proof}

\subsection{Proof of Proposition \ref{prop:limit-block-subset}}

To prove Proposition \ref{prop:limit-block-subset}, we first introduce two useful lemmas.
Lemma \ref{lem:orderedpartition-factorization} shows that, under an ordered-partition precedence relation, the induced order distribution factorizes into the product of within-block distributions.
This factorization allows us to localize the limiting argument to the single block $B$ in which the weights are rescaled, while the contributions from all other blocks remain unchanged.
Lemma \ref{lem:trivialposet-block} establishes the basic limiting mechanism in the absence of precedence constraints.
In particular, sending the weights of a subset $G$ to infinity forces $G$ to form a late block in the induced order distribution.
The proofs of Lemmas \ref{lem:orderedpartition-factorization} and \ref{lem:trivialposet-block} are given in Appendices \ref{app:proof-lemb2} and \ref{app:proof-lemb3}, respectively.

\begin{lemma}[Blockwise factorization under ordered partitions]
\label{lem:orderedpartition-factorization}
Suppose a poset $([n],\preceq)$ is induced by an ordered partition $(B_1,\dots,B_m)$.
Let $\pi\in\Pi^{\preceq}$, and for each $\ell\in[m]$ let $\pi_{B_\ell}$ denote the relative order of players in $B_\ell$ induced by $\pi$.
Let $\lambda_{B_\ell}$ be the restriction of $\lambda$ to $B_\ell$, and let $\preceq_{B_\ell}$ denote the trivial partial order on $B_\ell$.
Then the priority-aware distribution $p^{(\preceq,\lambda)}$ admits the factorization
\begin{equation}
p^{(\preceq,\lambda)}(\pi)
=
\prod_{\ell=1}^m
p^{(\preceq_{B_\ell},\lambda_{B_\ell})}(\pi_{B_\ell}),
\label{eqn:orderedpartition-factorization}
\end{equation}
where each factor on the right-hand side is the priority-aware distribution on the player set $B_\ell$ under the trivial precedence relation.
\end{lemma}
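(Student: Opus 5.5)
We prove Lemma~\ref{lem:orderedpartition-factorization} by working with the unnormalized weight $w^{(\preceq,\lambda)}(\pi)=\prod_{t=1}^n |\max_{\preceq}(S_t)|\lambda_{\pi_t}/\sum_{k\in\max_{\preceq}(S_t)}\lambda_k$ and showing that it splits into blockwise factors. The normalizing constant will then split as well, because $\Pi^{\preceq}$ is in bijection with $\prod_\ell \Pi(B_\ell)$. Here $\Pi(B_\ell)$ denotes all orders of $B_\ell$, which is $\Pi^{\preceq_{B_\ell}}$ under the trivial order.

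\textbf{Step 1 (structure of prefixes).} Under the ordered partition, every $\pi\in\Pi^{\preceq}$ lists all of $B_1$, then all of $B_2$, and so on. This is a chain argument: if $i\in B_r$ and $j\in B_s$ with $r<s$, then $i\prec j$ via any path through intermediate layers, which are nonempty. So $\pi$ is the concatenation of $\pi_{B_1},\dots,\pi_{B_m}$, and this map $\pi\mapsto(\pi_{B_1},\dots,\pi_{B_m})$ is a bijection onto $\prod_\ell\Pi(B_\ell)$.

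For a step $t$ with $\pi_t\in B_\ell$, the prefix is $S_t=B_1\cup\cdots\cup B_{\ell-1}\cup R$, where $R\subseteq B_\ell$ is the nonempty set of $B_\ell$-players placed so far. We check that $\max_{\preceq}(S_t)=R$:
\begin{itemize}
\item each element of $R$ has no successor inside $S_t$, since successors lie in later layers;
\item each element of an earlier layer precedes the elements of $R$, so it is not maximal.
\end{itemize}
Hence the $t$-th factor equals $|R|\lambda_{\pi_t}/\sum_{k\in R}\lambda_k$.

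\textbf{Step 2 (blockwise factor).} Under the trivial order on $B_\ell$, every subset is feasible and is its own maximal set. Therefore $w^{(\preceq_{B_\ell},\lambda_{B_\ell})}(\pi_{B_\ell})$ is exactly the product of the factors from Step 1 over the steps with $\pi_t\in B_\ell$, using prefixes $R$ of $\pi_{B_\ell}$. It follows that
\begin{equation*}
w^{(\preceq,\lambda)}(\pi)=\prod_{\ell=1}^m w^{(\preceq_{B_\ell},\lambda_{B_\ell})}(\pi_{B_\ell}).
\end{equation*}

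\textbf{Step 3 (normalization).} By the bijection of Step 1, summing over $\pi\in\Pi^{\preceq}$ factorizes:
\begin{equation*}
Z=\prod_\ell Z_\ell,\qquad Z_\ell=\sum_{\sigma\in\Pi(B_\ell)}w^{(\preceq_{B_\ell},\lambda_{B_\ell})}(\sigma).
\end{equation*}
Dividing the weight identity of Step 2 by $Z=\prod_\ell Z_\ell$ gives \eqref{eqn:orderedpartition-factorization}.

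The only delicate point is Step 1: the claim that earlier layers are entirely placed before any element of $B_\ell$, and the identification $\max_{\preceq}(S_t)=R$. Both rest on the transitive closure of the layer-to-layer edges, which is immediate. Everything else is bookkeeping.
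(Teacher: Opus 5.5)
Your proof is correct, but it reaches the factorization by a different route than the paper.

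The paper never works with the unnormalized weight. It first applies Proposition~\ref{prop:pasv-to-ws} to rewrite $p^{(\preceq,\lambda)}(\pi)$ as the already-normalized WSV product $\prod_{t}\lambda_{\pi_t}/\sum_{k\in\max_{\preceq}(S_t)}\lambda_k$. It then notes, as in your Step~1, that $\max_{\preceq}(S_t)=S_t\cap B_\ell$ on the contiguous run of steps where $\pi_t\in B_\ell$. Finally it regroups the product by blocks and recognizes each group as the WSV form of $p^{(\preceq_{B_\ell},\lambda_{B_\ell})}(\pi_{B_\ell})$, by a second application of the same proposition. No normalizing constant ever appears.

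You instead keep the $|\max_{\preceq}(S_t)|$ factors, show that the unnormalized PASV weight factorizes exactly, and split the normalizer through the bijection $\Pi^{\preceq}\cong\prod_\ell\Pi(B_\ell)$.

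Given Proposition~\ref{prop:pasv-to-ws}, the paper's argument is shorter. Yours is self-contained: it uses neither the reduction to WSV nor the tacit fact, used in that reduction, that the WSV product is itself a probability on $\Pi^{\preceq}$.

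Your argument also extends verbatim to ordinal sums in which each $B_\ell$ carries an arbitrary internal order. There one gets $\max_{\preceq}(S_t)=\max_{\preceq_{B_\ell}}(R)$ and $\Pi^{\preceq}\cong\prod_\ell\Pi^{\preceq_{B_\ell}}$, so the same factorization holds with the restricted orders. The paper's detour through the WSV form does not apply there, because $\prod_t|\max_{\preceq}(S_t)|$ is in general no longer constant over $\Pi^{\preceq}$.
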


\begin{lemma}
\label{lem:trivialposet-block}
Suppose a poset $([n],\preceq)$ is trivial in the sense that there is no pair $i,j\in [n]$ such that $i\neq j$ and $i\prec j$, i.e. no edge in an induced DAG.
Fix a nonempty subset $G\subseteq[n]$ and fix weights $(\lambda_j)_{j\notin G}$.
For $i\in G$ set $\lambda_i=\bar{\lambda}$ and let $\bar{\lambda}\to\infty$.
Let $\preceq'$ be the partial order induced by the ordered partition $([n]\setminus G, G)$.
Then, for any $\lambda'$ satisfying $\lambda'_j=\lambda_j$ for all $j\notin G$, $\lambda'_i=\tilde{\lambda}$ for $i\in G$ with arbitrary $\tilde{\lambda}>0$, and for every $\pi\in\Pi$,
$$
\lim_{\bar{\lambda}\to\infty} p^{(\preceq,\lambda)}(\pi) =
\begin{cases}
p^{(\preceq',\lambda')}(\pi) & \text{if }\pi\in\Pi^{\preceq'},\\
0 & \text{otherwise.}
\end{cases}
$$
\end{lemma}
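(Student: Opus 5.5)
The plan is to exploit that for the trivial poset the PASV distribution is exactly a (backward) Plackett--Luce distribution, so that no normalizing constant needs to be tracked. Then I compare it factor by factor with the WSV form of $p^{(\preceq',\lambda')}$, which Proposition~\ref{prop:pasv-to-ws} makes available because $\preceq'$ is an ordered partition.

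\textbf{Step 1 (normalization).} Since $\preceq$ has no edges, $\max_{\preceq}(S_t)=S_t$ and $|\max_\preceq(S_t)|=t$ for every $\pi\in\Pi$. The unnormalized PASV weight is therefore
\[
w(\pi)=\prod_{t=1}^n t\cdot\frac{\lambda_{\pi_t}}{\Lambda(S_t)}=n!\prod_{t=1}^n\frac{\lambda_{\pi_t}}{\Lambda(S_t)},
\qquad \Lambda(S):=\sum_{k\in S}\lambda_k .
\]
The product $\prod_t \lambda_{\pi_t}/\Lambda(S_t)$ is the probability of $\pi$ under backward sequential sampling: choose $\pi_n$ from $[n]$ with probability proportional to $\lambda$, then $\pi_{n-1}$ from the remaining players, and so on. Summed over $\Pi$ it equals $1$. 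Hence
\[
p^{(\preceq,\lambda)}(\pi)=\prod_{t=1}^n\frac{\lambda_{\pi_t}}{\Lambda(S_t)}
\]
exactly, for every value of $\bar\lambda$.

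\textbf{Step 2 (permutations in $\Pi^{\preceq'}$).} Let $\pi\in\Pi^{\preceq'}$, so the last $|G|$ positions of $\pi$ are occupied by $G$.
\begin{itemize}
\item For $t\le n-|G|$, $S_t\subseteq[n]\setminus G$, so the factor does not depend on $\bar\lambda$.
\item For $t>n-|G|$, $S_t=([n]\setminus G)\cup G_t$ with $g_t:=|G_t|=t-(n-|G|)$. The factor is $\bar\lambda/(\Lambda([n]\setminus G)+g_t\bar\lambda)\to 1/g_t$.
\end{itemize}
The limit is therefore $\frac{1}{|G|!}\prod_{t\le n-|G|}\lambda_{\pi_t}/\Lambda(S_t)$.

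On the other side, Proposition~\ref{prop:pasv-to-ws} gives $p^{(\preceq',\lambda')}(\pi)=\prod_t\lambda'_{\pi_t}/\sum_{k\in\max_{\preceq'}(S_t)}\lambda'_k$. For $t>n-|G|$ we have $\max_{\preceq'}(S_t)=G_t$, since every element of $[n]\setminus G$ precedes $G$. The corresponding factor is $\tilde\lambda/(g_t\tilde\lambda)=1/g_t$, which is independent of $\tilde\lambda$. For $t\le n-|G|$ we have $\max_{\preceq'}(S_t)=S_t$ and $\lambda'=\lambda$ there, so these factors agree with the ones above. The two expressions coincide.

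\textbf{Step 3 (permutations outside $\Pi^{\preceq'}$).} If $\pi\notin\Pi^{\preceq'}$, then some $j\notin G$ appears after some $i\in G$. Take the step $t^\star$ with $\pi_{t^\star}=j$. Since $i$ comes earlier, $i\in S_{t^\star}$, so the factor at $t^\star$ is at most $\lambda_j/(\lambda_j+\bar\lambda)\to0$. Every other factor is at most $1$, so $p^{(\preceq,\lambda)}(\pi)\to0$.

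This step is where one might expect difficulty, because Proposition~\ref{prop:limit-maximal-node} required separately controlling the limit of the partition function $Z$. Step 1 removes that issue here: the trivial-poset distribution is already normalized for every $\bar\lambda$, so pointwise convergence of the product suffices. The only remaining care is the bookkeeping in Step 2 that the $1/g_t$ factors on the two sides match.
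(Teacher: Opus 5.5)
Your proof is correct and follows essentially the same route as the paper. Both arguments reduce $p^{(\preceq,\lambda)}$ and $p^{(\preceq',\lambda')}$ to their normalized WSV/Plackett--Luce product forms (the paper also invokes Proposition~\ref{prop:pasv-to-ws} for the trivial poset, where you verify the normalization directly), match the factors $1/|S_t\cap G|$ on $\Pi^{\preceq'}$, and send permutations outside $\Pi^{\preceq'}$ to zero through the single factor at the step where some $j\notin G$ is chosen while an element of $G$ is still in $S_t$.
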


Now we provide the proof of Proposition \ref{prop:limit-block-subset}. 

\begin{proof}[Proof of Proposition \ref{prop:limit-block-subset}]
By Lemma \ref{lem:orderedpartition-factorization}, for any $\pi\in\Pi^{\preceq}$,
$$
p^{(\preceq,\lambda)}(\pi)
=
\prod_{\ell=1}^m
p^{(\preceq_{B_\ell},\lambda_{B_\ell})}(\pi_{B_\ell}).
$$
All factors with $B_\ell\neq B$ are constant in $\bar{\lambda}$, so it suffices to analyze the single factor on $B$.

Applying Lemma \ref{lem:trivialposet-block} to the block-$B$ factor yields
$$
\lim_{\bar{\lambda}\to\infty}
p^{(\preceq_B,\lambda_B)}(\pi_B)
=
\begin{cases}
p^{(\preceq'_B,\lambda'_B)}(\pi_B) & \text{if }\pi_B\in\Pi^{\preceq'_B},\\
0 & \text{otherwise,}
\end{cases}
$$
with $\preceq'_B$ induced by $(B\setminus G,\,G)$ and $\lambda'_B$ agreeing with $\lambda_B$ on $B\setminus G$ while taking an arbitrary common positive weight on $G$.

Now observe that $\pi\in\Pi^{\preceq'}$ holds if and only if $\pi\in\Pi^{\preceq}$ and $\pi_B\in\Pi^{\preceq'_B}$.
Recalling Lemma \ref{lem:orderedpartition-factorization} again for $(\preceq',\lambda')$ and combining it with the previous results gives, for every $\pi\in\Pi$,
$$
\lim_{\bar{\lambda}\to\infty} p^{(\preceq,\lambda)}(\pi)
=
\begin{cases}
p^{(\preceq',\lambda')}(\pi) & \text{if }\pi\in\Pi^{\preceq'},\\
0 & \text{otherwise,}
\end{cases}
$$
which completes the proof.
\end{proof}

\subsubsection{Proof of Lemma \ref{lem:orderedpartition-factorization}}
\label{app:proof-lemb2}

\begin{proof}
By Proposition \ref{prop:pasv-to-ws}, for $\pi\in\Pi^{\preceq}$,
\begin{equation}
p^{(\preceq,\lambda)}(\pi)
=
\prod_{t=1}^n
\frac{\lambda_{\pi_t}}{\sum_{k\in \max_{\preceq}(S_t)}\lambda_k}.
\label{eqn:lem-op-factor-start}
\end{equation}

For each $\ell\in[m]$, define
$$
I_\ell(\pi):=\{t\in[n]: \pi_t\in B_\ell\}.
$$
Since $\pi$ respects the ordered-partition precedence, each $I_\ell(\pi)$ is a contiguous interval and
$\{I_\ell(\pi)\}_{\ell=1}^m$ partitions $[n]$.

Fix $\ell\in[m]$ and take any $t\in I_\ell(\pi)$.
Then $S_t$ contains no elements from blocks later than $B_\ell$ and satisfies $S_t\cap B_\ell\neq\emptyset$.
Hence the admissible set at step $t$ is exactly the remaining players in $B_\ell$:
\begin{equation}
\max_{\preceq}(S_t)=S_t\cap B_\ell,
\quad
t\in I_\ell(\pi).
\label{eqn:lem-op-max-is-block}
\end{equation}
Substituting \eqref{eqn:lem-op-max-is-block} into \eqref{eqn:lem-op-factor-start} and regrouping terms gives
\begin{equation}
p^{(\preceq,\lambda)}(\pi)
=
\prod_{\ell=1}^m \prod_{t\in I_\ell(\pi)} \frac{\lambda_{\pi_t}}{\sum_{k\in S_t\cap B_\ell}\lambda_k}.
\label{eqn:lem-op-regroup}
\end{equation}

It remains to identify each product inside with the priority-aware distribution on the block $B_\ell$ under trivial precedence.
By Proposition \ref{prop:pasv-to-ws} applied to $(B_\ell,\preceq_{B_\ell})$, the corresponding distribution satisfies, for $\rho\in\Pi^{\preceq_{B_\ell}}$,
\begin{equation}
p^{(\preceq_{B_\ell},\lambda_{B_\ell})}(\rho)
=
\prod_{r=1}^{|B_\ell|}
\frac{(\lambda_{B_\ell})_{\rho_r}}{\sum_{k\in T_r}(\lambda_{B_\ell})_k},
\label{eqn:lem-op-trivial-block}
\end{equation}
where $T_r=\{\rho_1,\dots,\rho_r\}$.
Taking $\rho=\pi_{B_\ell}$ and matching $T_r$ with $S_t\cap B_\ell$ along $t\in I_\ell(\pi)$ yields
$$
\prod_{t\in I_\ell(\pi)}
\frac{\lambda_{\pi_t}}{\sum_{k\in S_t\cap B_\ell}\lambda_k}
=
p^{(\preceq_{B_\ell},\lambda_{B_\ell})}(\pi_{B_\ell}).
$$
Plugging this into \eqref{eqn:lem-op-regroup} proves the claim.
\end{proof}
\subsubsection{Proof of Lemma \ref{lem:trivialposet-block}}
\label{app:proof-lemb3}

\begin{proof}
Since the poset $([n],\preceq)$ is trivial, $\Pi^{\preceq}=\Pi$ and $\max_{\preceq}(S)=S$ for all $S\subseteq[n]$.
Furthermore, since a trivial poset is also an ordered partition, by Proposition \ref{prop:pasv-to-ws}, the priority-aware distribution coincides with $p_{WSV}$ in \eqref{eqn:ws-dist}.
Hence, the distribution takes the form
\begin{equation}
	p^{(\preceq,\lambda)}(\pi)
=
\prod_{t=1}^n
\frac{\lambda_{\pi_t}}{\sum_{k\in S_t}\lambda_k}.
\label{eqn:lemb2-original}
\end{equation}

On the other hand, under the modified poset $([n],\preceq')$ induced by the ordered partition $([n]\setminus G,G)$, the admissible set is
$$
\max_{\preceq'}(S)=
\begin{cases}
S\cap G, & \text{if } S\cap G\neq\emptyset,\\
S, & \text{if } S\cap G=\emptyset,
\end{cases}
$$
since all elements of $[n]\setminus G$ precede all elements of $G$.
Therefore, again by Proposition \ref{prop:pasv-to-ws}, the corresponding priority-aware distribution satisfies
\begin{align}
p^{(\preceq',\lambda')}(\pi)
&=
\prod_{t=1}^n
\frac{\lambda'_{\pi_t}}{\sum_{k\in \max_{\preceq'}(S_t)}\lambda'_k} \nonumber\\
&=
\left(
\prod_{t=1}^{n-|G|}
\frac{\lambda'_{\pi_t}}{\sum_{k\in S_t}\lambda'_k}
\right)
\cdot
\left(
\prod_{t=n-|G|+1}^{n}
\frac{\lambda'_{\pi_t}}{\sum_{k\in S_t\cap G}\lambda'_k}
\right) \nonumber\\
&=
\left(
\prod_{t=1}^{n-|G|}
\frac{\lambda'_{\pi_t}}{\sum_{k\in S_t}\lambda'_k}
\right)
\cdot
\left(
\prod_{t=n-|G|+1}^{n}
\frac{1}{|S_t\cap G|}
\right)
\label{eqn:lemb2-modified}
\end{align}
for $\pi\in\Pi^{\preceq'}$. The last line follows from $\lambda'_i=\tilde{\lambda}$ for $i\in G$.

Now consider the case of $\pi\in\Pi^{\preceq'}$.
Then since all elements of $[n]\setminus G$ appear before all elements of $G$ in $\pi$, we also can decompose the probability from the original poset in \eqref{eqn:lemb2-original}:
\begin{equation}
p^{(\preceq,\lambda)}(\pi)
=
\left(
\prod_{t=1}^{n-|G|}
\frac{\lambda_{\pi_t}}{\sum_{k\in S_t}\lambda_k}
\right)
\cdot
\left(
\prod_{t=n-|G|+1}^{n}
\frac{\lambda_{\pi_t}}{\sum_{k\in S_t}\lambda_k}
\right),
	\label{eqn:lemb2-original2}
\end{equation}
In addition, at every step $t\in\{n-|G|+1,\dots,n\}$, we have $\lambda_{\pi_t}=\bar{\lambda}$ and $S_t\cap G\neq\emptyset$.
This yields 
$$
\frac{\lambda_{\pi_t}}{\sum_{k\in S_t}\lambda_k}
=
\frac{\bar{\lambda}}{|S_t\cap G|\bar{\lambda}+\sum_{k\in S_t\setminus G}\lambda_k}
\to
\frac{1}{|S_t\cap G|},
$$
as $\bar{\lambda}\to\infty$, which makes the limit of the second term in \eqref{eqn:lemb2-original2} coincide with the second term in \eqref{eqn:lemb2-modified}.
Furthermore, since $\lambda'_i = \lambda_i$ for $i\in[n]\setminus G$, this proves
$$
\lim_{\bar{\lambda}\to\infty}p^{(\preceq,\lambda)}(\pi) = p^{(\preceq',\lambda')}(\pi),
$$
for $\pi\in\Pi^{\preceq'}$.

Next, suppose that $\pi\notin\Pi^{\preceq'}$.
Then there exist $i\in G$ and $j\in [n]\setminus G$ such that $i$ appears before $j$ in $\pi$.
Let $t^\star$ be a step such that $\pi_{t^\star}=j$ and $i\in S_{t^\star}$.
Then the $t^\star$-th factor satisfies
$$
\frac{\lambda_j}{\sum_{k\in S_{t^\star}}\lambda_k}
\le
\frac{\lambda_j}{\lambda_i}
=
\frac{\lambda_j}{\bar{\lambda}}
\to 0,
$$
as $\bar{\lambda}\to\infty$, since $\lambda_i=\bar{\lambda}$.
All other factors are bounded above by 1, hence $p^{(\preceq,\lambda)}(\pi)\to 0$ as $\bar{\lambda}\to\infty$.
\end{proof}

\section{Proofs of Main Theoretical Results}
\label{sec:proof}

\subsection{Proof of Theorem \ref{thm:prov-axiom}}

We first recall a classical characterization of random order values.
The following lemma is useful for the main proof.

\begin{lemma}[Axiomatization of ROV]\label{lem:rov-axiom}
A value $\psi$ is ROV if and only if it satisfies axioms E + L + NP + M.
\end{lemma}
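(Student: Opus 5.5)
This is Weber's (1988) characterization of random order values. The plan is to prove both directions, with the converse carried out through the unanimity basis.

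For necessity, assume $\psi_i(U)=\mathbb{E}_{\pi\sim p}[U(\pi^i\cup\{i\})-U(\pi^i)]$ for some distribution $p$ on $\Pi$.
\begin{itemize}
\item Linearity holds because the marginal contribution is linear in $U$ and expectation is linear.
\item Null player holds because a null player has zero marginal contribution along every order.
\item Monotonicity holds because a monotone $U$ gives nonnegative marginal contributions.
\item Efficiency holds because, for each fixed $\pi$, the marginal contributions telescope to $U([n])-U(\emptyset)$. This equals $U([n])$ under the normalization $U(\emptyset)=0$ of Appendix~\ref{app:shapley-axioms}.
\end{itemize}

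For sufficiency, first use L and NP to write $\psi_i(U)=\sum_{S\not\ni i} q_i(S)\,[U(S\cup\{i\})-U(S)]$.
\begin{itemize}
\item By linearity, $\psi_i(U)=\sum_S a_i(S)U(S)$ for some coefficients $a_i(S)$.
\item Apply NP to games in which $i$ is null. Such games are spanned by $V(S)=g(S\setminus\{i\})$ with $g$ arbitrary. This forces $a_i(S\cup\{i\})=-a_i(S)$ for every $S\not\ni i$, which gives the representation with $q_i(S)=a_i(S\cup\{i\})$.
\item Next apply M to the monotone games $U_T(S)=\mathbbm{1}\{S\supsetneq T\}$ and $\hat U_T(S)=\mathbbm{1}\{S\supseteq T\cup\{i\}\}$ for $T\not\ni i$. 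In the first game $i$'s only nonzero marginal contribution occurs at $S=T$, and in the second its marginal contributions are nonnegative. Choosing these games appropriately isolates $q_i(T)$ and shows $q_i(T)\ge 0$.
\item Apply E to the unanimity games $u_T$ with $T\neq\emptyset$, using the identity $\psi_i(u_T)=\sum_{S\supseteq T\setminus\{i\},\,S\not\ni i} q_i(S)$ for $i\in T$. This yields the flow conditions
\[
\sum_{i\in[n]} q_i(\emptyset)=1,\qquad \sum_{i\in S} q_i(S\setminus\{i\})=\sum_{j\notin S} q_j(S)\quad (\emptyset\ne S\subsetneq[n]).
\]
I expect deriving these conditions cleanly from E on the unanimity games to be the main bookkeeping step. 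It is done by Möbius inversion, peeling off supersets of $T$ from largest to smallest.
\end{itemize}

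These conditions say that $f(S\to S\cup\{i\}):=q_i(S)$ is a nonnegative unit flow from $\emptyset$ to $[n]$ in the Boolean lattice. Every edge of that lattice increases the set size by one, so the lattice is acyclic, and every $\emptyset$–$[n]$ path is a maximal chain, that is, a permutation $\pi$. The main obstacle is the flow-decomposition step. I plan to handle it by induction on the number of edges with positive flow.
\begin{itemize}
\item Pick a path through positive-flow edges. Conservation guarantees that a positive-flow edge can always be followed onward, so such a path reaches $[n]$.
\item Subtract the minimal edge flow along this path as the mass $p(\pi)$ of the corresponding permutation.
\item Repeat until no flow remains.
\end{itemize}
This produces a distribution $p$ on $\Pi$ with $q_i(S)=\Pr_{\pi\sim p}(\pi^i=S)$. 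The identity $\psi_i(U)=\mathbb{E}_\pi[U(\pi^i\cup\{i\})-U(\pi^i)]$ then follows by direct substitution into the representation from the first step.
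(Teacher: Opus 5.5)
Your proof is correct. The paper gives no argument of its own for this lemma: it defers to Theorem 2 of Derks et al.\ (2005), with the result attributed to Weber (1988) in the main text. So you are replacing a citation with the classical self-contained flow proof. L and NP give the probabilistic form $\psi_i(U)=\sum_{S\not\ni i}q_i(S)[U(S\cup\{i\})-U(S)]$, M gives $q_i\ge 0$, E on unanimity games is exactly flow conservation on the Boolean lattice, and path decomposition yields $p$ with $q_i(S)=\Pr_p(\pi^i=S)$. The citation buys brevity. Your route shows explicitly that E is what glues the separate per-player weights $q_i$ into a single order distribution.

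A few details can be tightened.
\begin{itemize}
\item The game $U_T(S)=\mathbbm{1}\{S\supsetneq T\}$ alone already gives $\psi_i(U_T)=q_i(T)$ exactly, so $\hat U_T$ is superfluous.
\item Under the normalization $U(\emptyset)=0$, the null games for $i$ are $g(S\setminus\{i\})$ with $g(\emptyset)=0$, not $g$ arbitrary. Hence the relation at $S=\emptyset$ is not forced. This is harmless: simply define $a_i(\emptyset):=-a_i(\{i\})$. The normalization itself is genuinely needed, since on unnormalized games E and NP contradict each other on any nonzero constant game.
\item The efficiency step is a one-line inversion rather than a peeling argument. Write $\mathrm{in}(R)=\sum_{i\in R}q_i(R\setminus\{i\})$ and $\mathrm{out}(R)=\sum_{j\notin R}q_j(R)$. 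Then E on $u_T$ reads $\sum_{R\supseteq T}[\mathrm{in}(R)-\mathrm{out}(R)]=1$ for every nonempty $T$. M\"obius inversion gives $\mathrm{in}=\mathrm{out}$ on $\emptyset\neq T\subsetneq[n]$ and $\mathrm{in}([n])=1$. Finally, $\mathrm{out}(\emptyset)=1$ follows because each edge is counted once in each total.
\item In the decomposition, note that a nonnegative conserving flow of value zero vanishes identically: trace any positive edge backward to $\emptyset$. So your loop terminates exactly when the value is exhausted.
\end{itemize}
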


We defer the proof of the lemma above to standard references (See Theorem 2 in \citet{derks2005new}).
The proof of Theorem \ref{thm:prov-axiom} is as follows:

\begin{proof}

Assume $\psi$ satisfies E +L + NP + M + MS.
By Lemma \ref{lem:rov-axiom}, $\psi$ is an ROV, i.e., there exists a distribution $p$ on $\Pi$ such that \eqref{eqn:rov} holds.
It therefore suffices to show that $p(\pi)=0$ for all $\pi\in\Pi\setminus\Pi^{\preceq}$ if and only if $\psi$ satisfies MS.

First, we will show the forward direction.
Assume that $p(\pi)=0$ for all $\pi\in\Pi\setminus\Pi^{\preceq}$.
Choose any feasible set $T\in\mathcal{S}^{\preceq}$ and consider the corresponding elementary game $u_T$.
For each $i\in[n]$, using \eqref{eqn:rov} and the definition of $u_T$, we can show
\begin{align}
\psi_i(u_T)
&=\mathbb{E}_{\pi\sim p}\big[u_T(\pi^i\cup\{i\})-u_T(\pi^i)\big] \nonumber\\
&=\mathbb{E}_{\pi\sim p}\Big[\mathbbm{1}\{T\nsubseteq \pi^i, T\subseteq \pi^i\cup\{i\}\}\Big]\nonumber\\
&=\mathbb{P}_{\pi\sim p}\big(i\ \text{is last for }T\text{ in }\pi\big) \nonumber\\
&=\sum_{\{\pi: i \text{ is last for } T \text{ in }\pi\}} p(\pi).
\label{eqn:rov-unanimity}
\end{align}
In particular, if $i\notin\max_{\preceq}(T)$, then no linear extension $\pi\in\Pi^{\preceq}$ can have $i$ as the last element of $T$.
Since $p$ is supported on $\Pi^{\preceq}$, the sum in \eqref{eqn:rov-unanimity} is zero and hence $\psi_i(u_T)=0$.
This verifies \textnormal{MS}.

Conversely, suppose there exists $\pi^\star\in\Pi\setminus\Pi^{\preceq}$ with $p(\pi^\star)>0$.
Then $\pi^\star$ violates the precedence relation, so there exist a violating pair of players $i,j\in[n]$ with $i\prec j$ but $\pi^\star(j) < \pi^\star(i)$.
Let $\mathcal{P}(j) := \{i\in[n]:i \prec j\}$ denote the predecessor set of $j$ under $\preceq$ and let $k\in\mathcal{P}(j)$ be a predecessor that appears latest among $\mathcal{P}(j)$ in $\pi^\star$.
Define
$$
T := \mathcal{P}(j)\cup\{j\}.
$$
By construction, $T$ is feasible, and $k\not\in\max_{\preceq}(T)$ since $k\prec j\in T$.
Moreover, under $\pi^\star$, every element of $T\setminus\{k\}$ appears before $k$, so $k$ is last for $T$ in $\pi^\star$.
Therefore, \eqref{eqn:rov-unanimity} implies
$$
\psi_k(u_T) \ge p(\pi^\star) > 0,
$$
which contradicts MS.
Hence no such $\pi^\star$ can have positive probability, and this completes the proof.

\end{proof}

\subsection{Proof of Theorem \ref{thm:pasv-uniqueness}}

\begin{proof}
Since $p$ is PROV, it is supported on $\Pi^{\preceq}$.
Assume $p$ admits the SCF form \eqref{eqn:scf}:
$$
p(\pi) \propto \prod_{t=1}^n s_{\preceq}(S_t)c_{\preceq,\lambda}(\pi_t;S_t),
$$
for $\pi\in\Pi^{\preceq}$, where $\sum_{i\in\max_{\preceq}(S_t)}c_{\preceq,\lambda}(i;S_t)=1$.

First, by WP, for every feasible set $S$ and every $i,j\in\max_{\preceq}(S)$,
$$
\frac{c_{\preceq,\lambda}(i;S)}{c_{\preceq,\lambda}(j;S)}=\frac{\lambda_i}{\lambda_j}.
$$
Together with $\sum_{i\in\max_{\preceq}(S)}c_{\preceq,\lambda}(i;S)=1$, for $i\in\max_{\preceq}(S)$, this implies
\begin{equation}
c_{\preceq,\lambda}(i;S)=\frac{\lambda_i}{\sum_{k\in\max_{\preceq}(S)}\lambda_k}.
\label{eqn:choice-wp}
\end{equation}

Now, without loss of generality, assume $\lambda_i= 1$ for all $i\in[n]$.
Then \eqref{eqn:choice-wp} gives $c_{\preceq,\lambda}(i;S)=1/|\max_{\preceq}(S)|$, so for $\pi\in\Pi^{\preceq}$,
\begin{equation}
p(\pi) \propto \prod_{t=1}^n \frac{s_{\preceq}(S_t)}{|\max_{\preceq}(S_t)|}.
\label{eqn:scf-uniform}
\end{equation}
By EWU, $p$ must be uniform on $\Pi^{\preceq}$ under constant weights.
Hence the product in \eqref{eqn:scf-uniform} is a constant independent of $\pi\in\Pi^{\preceq}$:
\begin{equation}
\prod_{t=1}^n \frac{s_{\preceq}(S_t)}{|\max_{\preceq}(S_t)|}=K,
\label{eqn:path-constant}
\end{equation}
for all $\pi\in\Pi^{\preceq}$.

For general $\lambda$, combining SCF with \eqref{eqn:choice-wp} yields, for $\pi\in\Pi^{\preceq}$,
\begin{align*}
p(\pi)
&\propto \prod_{t=1}^n s_{\preceq}(S_t)
\frac{\lambda_{\pi_t}}{\sum_{k\in\max_{\preceq}(S_t)}\lambda_k}\\
&= \left(\prod_{t=1}^n \frac{s_{\preceq}(S_t)}{|\max_{\preceq}(S_t)|}\right)
\cdot
\left(\prod_{t=1}^n \frac{\lambda_{\pi_t}|\max_{\preceq}(S_t)|}{\sum_{k\in\max_{\preceq}(S_t)}\lambda_k}\right)\\
&\propto \prod_{t=1}^n \frac{\lambda_{\pi_t}|\max_{\preceq}(S_t)|}{\sum_{k\in\max_{\preceq}(S_t)}\lambda_k}.
\end{align*}
The last proportionality is due to \eqref{eqn:path-constant}, and this proves that $p$ is PASV.

Conversely, the PASV distribution \eqref{PASV} is realized in the SCF class by taking $c_{\preceq,\lambda}(i;S)=\lambda_i/\sum_{k\in\max_{\preceq}(S)}\lambda_k$ and an appropriate state factor, so WP holds by construction.
If $\lambda_i = 1$ for $i\in[n]$, each factor in \eqref{PASV} reduces to $1$, so PASV is uniform on $\Pi^{\preceq}$ and EWU holds.
\end{proof}

\subsection{Proof of Proposition \ref{prop:pasv-to-ps}}

\begin{proof}
Without loss of generality, assume $\lambda_i=1$ for $i\in[n]$.
For any $\pi\in\Pi^{\preceq}$, the PASV weight \eqref{PASV} satisfies
$$
p^{(\preceq,\lambda)}(\pi)\ \propto\ \prod_{t=1}^n 
\frac{\lambda_{\pi_t}|\max_{\preceq}(S_t)|}{\sum_{k\in\max_{\preceq}(S_t)}\lambda_k}
=
\prod_{t=1}^n 
\frac{|\max_{\preceq}(S_t)|}{|\max_{\preceq}(S_t)|}
=1.
$$
Hence $p^{(\preceq,\lambda)}$ is uniform over $\Pi^{\preceq}$, which coincides with $p_{\mathrm{PSV}}$.
\end{proof}

\subsection{Proof of Proposition \ref{prop:pasv-to-ws}}

\begin{proof}
Assume $\preceq$ is induced by an ordered partition $(B_1,\dots,B_m)$.
Fix any $\pi\in\Pi^{\preceq}$ and let $S_t=\{\pi_1,\dots,\pi_t\}$.
Under an ordered partition, at each step the admissible set $\max_{\preceq}(S_t)$ consists exactly of the players in the earliest block that still has remaining elements.
In particular, if at some step $t$ the current active block has $r$ remaining elements, then $|\max_{\preceq}(S_t)|=r$, regardless of which specific element is chosen within that block.
Therefore,
$$
\prod_{t=1}^n |\max_{\preceq}(S_t)|
=
\prod_{\ell=1}^m |B_\ell|!,
$$
which is a constant independent of $\pi\in\Pi^{\preceq}$.

Using \eqref{PASV}, for any $\pi\in\Pi^{\preceq}$ we can write
\begin{align*}
p^{(\preceq,\lambda)}(\pi)
&\propto
\prod_{t=1}^n 
\frac{\lambda_{\pi_t}|\max_{\preceq}(S_t)|}{\sum_{k\in\max_{\preceq}(S_t)}\lambda_k}\\
&=
\left(\prod_{t=1}^n |\max_{\preceq}(S_t)|\right)\cdot
\left(\prod_{t=1}^n 
\frac{\lambda_{\pi_t}}{\sum_{k\in\max_{\preceq}(S_t)}\lambda_k}\right)\\
&\propto
\prod_{t=1}^n 
\frac{\lambda_{\pi_t}}{\sum_{k\in\max_{\preceq}(S_t)}\lambda_k},
\end{align*}
where the last proportionality uses that $\prod_{t=1}^n |\max_{\preceq}(S_t)|$ is constant over $\Pi^{\preceq}$.
The remaining product is exactly the weighted Shapley distribution $p_{\mathrm{WSV}}(\pi)$ under the ordered-partition precedence.
Hence $p^{(\preceq,\lambda)}(\pi)=p_{\mathrm{WSV}}(\pi)$ for all $\pi\in\Pi^{\preceq}$.
\end{proof}

\subsection{Proof of Lemma \ref{lem:mh-local-ratio}}
\begin{proof}
Fixing a poset $([n],\preceq)$ and weights $\lambda$, for a given $\pi\in\Pi^{\preceq}$ let $\pi'\in\Pi^{\preceq}$ be obtained by swapping an incomparable pair $(\pi_k,\pi_{k+1})$ in $\pi$.
Then, the probability ratio of $\pi$ and $\pi'$ under $p^{(\preceq,\lambda)}$ is given by:
$$
	\frac{p^{(\preceq,\lambda)}(\pi')}{p^{(\preceq,\lambda)}(\pi)}
=
\prod_{t=1}^n\left[\frac{\lambda_{\pi'_t}|\max_{\preceq}(S'_t)|}{\sum_{j\in \max_{\preceq}(S'_t)}\lambda'_j}\Bigg/\frac{\lambda_{\pi_t}|\max_{\preceq}(S_t)|}{\sum_{i\in \max_{\preceq}(S_t)}\lambda_i}\right],
$$
where $S'_t=\{\pi'_1,\dots,\pi'_t\}$.
Then, since $\pi'_t=\pi_t$ for all $t\notin\{k,k+1\}$ and $S'_t=S_t$ for $t\neq k$, it remains to compare the factors only at $t=k$ and $t=k+1$.
For $t=k+1$, we have $S'_{k+1}=S_{k+1}$, and hence $\max_{\preceq}(S'_{k+1})=\max_{\preceq}(S_{k+1})$, which gives
$$
\frac{\lambda_{\pi'_{k+1}}|\max_{\preceq}(S'_{k+1})|}{\sum_{j\in \max_{\preceq}(S'_{k+1})}\lambda_j}\Bigg/\frac{\lambda_{\pi_{k+1}}|\max_{\preceq}(S_{k+1})|}{\sum_{i\in \max_{\preceq}(S_{k+1})}\lambda_i}
=
\frac{\lambda_{\pi_k}}{\lambda_{\pi_{k+1}}}.
$$
For the $t=k$ factor, using $\pi'_k=\pi_{k+1}$ gives
$$
\frac{\lambda_{\pi'_k}|\max_{\preceq}(S'_k)|}{\sum_{j\in \max_{\preceq}(S'_k)}\lambda_j}\Bigg/\frac{\lambda_{\pi_k}|\max_{\preceq}(S_k)|}{\sum_{i\in \max_{\preceq}(S_k)}\lambda_i}
=
\frac{\lambda_{\pi_{k+1}}}{\lambda_{\pi_k}}
\cdot
\frac{|\max_{\preceq}(S'_k)|}{|\max_{\preceq}(S_k)|}
\cdot
\frac{\sum_{i\in \max_{\preceq}(S_k)}\lambda_i}{\sum_{j\in \max_{\preceq}(S'_k)}\lambda_j}.
$$
Multiplying the two ratios above cancels the weight ratio of $\pi_k$ and $\pi_{k+1}$, yielding
$$
\frac{p^{(\preceq,\lambda)}(\pi')}{p^{(\preceq,\lambda)}(\pi)}
=
\frac{\bigl(\sum_{i\in \max_{\preceq}(S_k)}\lambda_i\bigr)/|\max_{\preceq}(S_k)|}{\bigl(\sum_{j\in \max_{\preceq}(S'_k)}\lambda_j\bigr)/|\max_{\preceq}(S'_k)|},
$$
which proves the claim.
\end{proof}

\section{Simulation Studies}
\label{app:simulation-studies}

The experiments in Section~\ref{sec:experiment} demonstrate PASV in two applied settings, where the utility evaluation itself is tied to a downstream learning task.
In this section, we complement those experiments with controlled simulation studies that isolate the computational and statistical behavior of the PASV estimators.
The first set of simulations studies the direct Monte Carlo estimator, separating the behavior of the MCMC sampler from the accuracy and cost of the retained samples.
The second set studies a proxy-based variant that uses a quadratic regression approximation to reduce Monte Carlo error under the same total sampling budget.

\subsection{Estimation Accuracy}
\label{app:sim-estimation-accuracy}

The basic PASV estimator has two stages.
The burn-in stage runs the adjacent-swap M-H chain until its distribution is close to the target distribution $p^{(\preceq,\lambda)}$, but does not evaluate the utility.
The Monte Carlo stage keeps thinned samples from the chain and averages the marginal contributions induced by those sampled linear extensions.
Accordingly, we first examine how quickly the chain mixes, and then evaluate how the retained sample size affects the accuracy and computational cost of PASV estimation.

\subsubsection{Mixing Time Analysis of MCMC Sampler}
\label{app:sim-mixing-time}

We first study the mixing behavior of the adjacent-swap M-H sampler.
In this experiment, we investigate whether the sampler itself becomes a computational bottleneck when the target distribution is no longer uniform over linear extensions because of non-uniform priority weights.
For a sampled linear extension $\pi$, write $i\prec_\pi j$ if player $i$ appears before player $j$ in $\pi$.
Let $P_{\pi_0}^t$ denote the distribution of the chain after $t$ M-H iterations initialized at $\pi_0$, and let $P^\star$ denote the stationary distribution.
We measure mixing through the pairwise-order discrepancy
\begin{equation}
    D_t
    =
    \max_{i,j\in[n]}
    \left|
    \pr_{\pi_0}^t(i\prec_\pi j)-\pr^\star(i\prec_\pi j)
    \right|.
    \label{eqn:app-mixing-discrepancy}
\end{equation}
The empirical mixing time is defined as the earliest $t$ such that $D_t<1/4$.

We use four synthetic poset families from \citet{talvitie2017mixing}, each with two sub-settings: AveDeg with parameters $2$ and $4$, MaxInDeg with parameters $2$ and $4$, GridTree with parameters $2$ and $4$, and Bipartite with parameters $0.2$ and $0.5$.
The AveDeg$(k)$ family adds each forward arc $a\to b$ with $a<b$ independently with probability $k/(n-1)$, yielding sparse random DAGs with controlled average degree.
The MaxInDeg$(k)$ family constructs the DAG sequentially: for each vertex $b$, it draws the number of parents from $\{0,\ldots,\min(k,b-1)\}$ and chooses that many parents among earlier vertices.
The GridTree$(k)$ family starts from a directed $k\times k$ grid and repeatedly attaches new directed $k\times k$ grids along randomly selected sides, preserving the grid orientation, until the graph contains at least $n$ vertices.
The Bipartite$(p)$ family splits the vertices into two layers and independently adds each left-to-right arc with probability $p$.
The number of players ranges over $n\in\{5,10,\ldots,25\}$.
To test the effect of soft priorities, we include the uniform-weight baseline and non-uniform settings where priority weights are drawn independently as $\lambda_i\sim\mathrm{Unif}(1,R)$ with $R\in\{4,16,64,256,1024\}$.
For each setting, $P_{\pi_0}^t$ is estimated using $10{,}000$ independent chains, while the stationary pairwise probabilities $P^\star(i\prec_\pi j)$ are computed by exact dynamic programming, adapting the counting approach of \citet{kangas2016counting} to the non-uniform target distribution.
Concretely, for a feasible subset $S$, let $M(S)=\max_{\preceq}(S)$ and define the backward-removal weight
$$
    w(x;S)
    =
    \frac{\lambda_x |M(S)|}{\sum_{j\in M(S)}\lambda_j},
    \qquad x\in M(S).
$$
The unweighted extension count is then replaced by a weighted mass $Z(S)$ satisfying
\begin{equation}
    Z(S)
    =
    \sum_{x\in M(S)}
    w(x;S)
    Z(S\setminus\{x\}),
    \qquad
    Z(\emptyset)=1,
    \label{eqn:app-weighted-extension-mass}
\end{equation}
For an ordered pair $(a,b)$, the same backward-removal recursion is augmented with event states.
Let $F_\eta(S)$ denote the unnormalized mass for the event $a\prec_\pi b$ when $\eta=0$ means undecided, $\eta=1$ means already true, and $\eta=2$ means already false.
For incomparable $a$ and $b$, the recursion is
\begin{align}
    F_1(S)&=Z(S),\qquad F_2(S)=0, \nonumber\\
    F_0(S)
    &=
    \sum_{x\in M(S)}
    w(x;S)
    \begin{cases}
    F_1(S\setminus\{x\}), & x=b,\\
    F_2(S\setminus\{x\}), & x=a,\\
    F_0(S\setminus\{x\}), & x\notin\{a,b\}.
    \end{cases}
    \label{eqn:app-pair-event-mass}
\end{align}
The desired stationary pairwise probability is then
\begin{equation}
    P^\star(a\prec_\pi b)
    =
    \frac{F_0([n])}{Z([n])}.
    \label{eqn:app-pair-probability}
\end{equation}
Algorithm~\ref{alg:app-weighted-pair-prob} summarizes the resulting computation for one ordered pair; applying it to all unordered pairs gives the matrix of stationary pairwise probabilities used in \eqref{eqn:app-mixing-discrepancy}.

\begin{algorithm}[t]
    \caption{Exact pairwise probability under the PASV target}
    \label{alg:app-weighted-pair-prob}
    \begin{algorithmic}
    \STATE {\bfseries Input:} poset $([n],\preceq)$; priority weights $\lambda$; ordered pair $(a,b)$.
    \STATE {\bfseries Output:} $P^\star(a\prec_\pi b)$ under $p^{(\preceq,\lambda)}$.
    \STATE Compute the transitive closure of $\preceq$.
    \STATE If $a\preceq b$, \textbf{return} $1$; if $b\preceq a$, \textbf{return} $0$.
    \STATE Save $Z(S)$ over feasible subsets using \eqref{eqn:app-weighted-extension-mass}.
    \STATE Save $F_\eta(S)$ using \eqref{eqn:app-pair-event-mass}.
    \STATE \textbf{return} the value in \eqref{eqn:app-pair-probability}.
    \end{algorithmic}
\end{algorithm}

\begin{figure}[t]
  \centering
  \includegraphics[width=\linewidth]{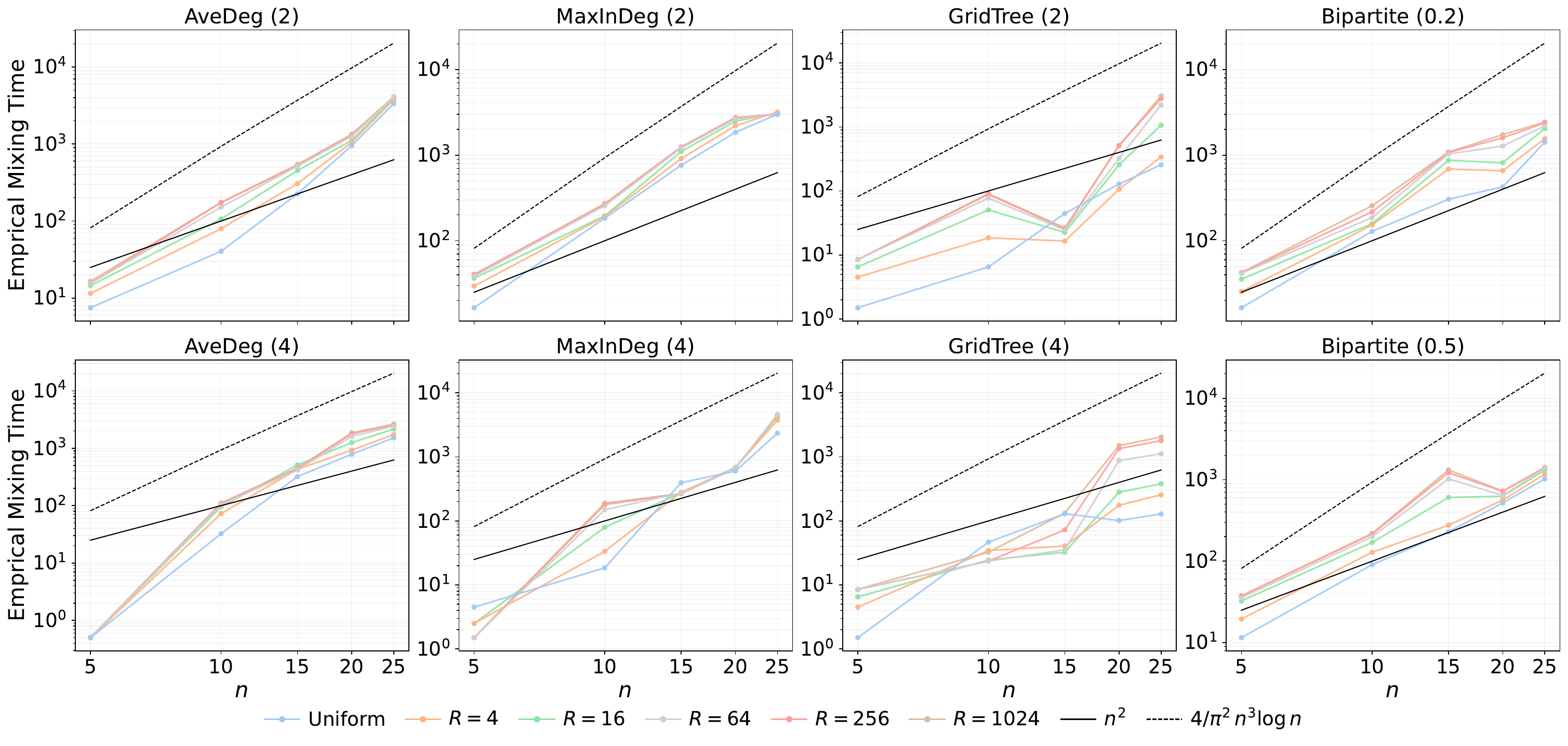}
  \caption{Empirical mixing time of the adjacent-swap M-H chain.
  Four synthetic poset families from \citet{talvitie2017mixing} are considered, each with two sub-settings: AveDeg, MaxInDeg, GridTree, and Bipartite.
  Both axes are log-scaled, so the slope indicates the exponent $c$ in an empirical $O(n^c)$ rate.
  Priority weights include the uniform baseline and $\lambda_i\overset{\mathrm{iid}}{\sim}\mathrm{Unif}(1,R)$ with $R\in\{4,16,64,256,1024\}$.
  The solid black line shows $n^2$, and the dashed black line shows the theoretical worst-case reference $\frac{4}{\pi^2}n^3\log n$.}
  \label{fig:app-mixing-time}
\end{figure}

Figure~\ref{fig:app-mixing-time} shows that the empirical mixing time is much closer to the $O(n^2)$ behavior reported by \citet{talvitie2017mixing} than to the worst-case $O(n^3\log n)$ reference line.
The results also indicate that the non-uniformity induced by $\lambda$ has limited effect on the mixing time in these settings.
Thus, for the regimes considered here, the MCMC burn-in is not the dominant computational cost; the main cost comes from evaluating $U(S)$ along retained permutations.

\subsubsection{Approximation Error of PASV}
\label{app:sim-approx-error}

We next study the accuracy of the Monte Carlo estimator after permutations have been sampled.
Unlike the application experiments, this simulation uses utility functions for which the true PASV can be computed in closed form.
This allows us to measure estimation error directly rather than relying on qualitative agreement with domain expectations.

The utility is a sum-of-unanimity (SOU) game,
\begin{equation}
    U(S)=\sum_{j=1}^d \alpha_j \mathbbm{1}\{T_j\subseteq S\},
    \qquad T_j\subseteq[n],
    \label{eqn:app-sou-utility}
\end{equation}
where $\alpha_j\overset{\mathrm{iid}}{\sim}\mathrm{Unif}(0.5,1.5)$.
SOU games provide a common synthetic benchmark in cooperative game experiments because they combine elementary unanimity components while preserving analytically tractable values; recent Shapley-value benchmarks also use this class of games \citep{li2024one,lee2025faithful}.
We consider two synthetic scenarios.
In Scenario 1, the player set is partitioned into $K$ equal-sized blocks $\{B_k\}_{k=1}^K$.
Treating blocks as nodes, we add an edge $B_k\to B_\ell$ with probability $0.8$ for $k<\ell$, and then set $i\preceq j$ for players $i\in B_k$, $j\in B_\ell$ whenever this block edge is present.
Here $\lambda\equiv1$, $d=K$, and $T_j=B_j$, so the true value is $\psi_i(U)=\alpha_j/|T_j|$ for $i\in T_j$.
In Scenario 2, the DAG is the ordered partition $B_1\prec\cdots\prec B_K$.
We consider three weight settings, $\lambda\equiv1$, $\lambda_i\sim\mathrm{Unif}(1,10)$, and $\lambda_i\sim\mathrm{Unif}(1,100)$.
In this scenario, $d=n^2$ and each $T_j$ is sampled uniformly from $2^{[n]}$.
For any nonempty $T$, define $r(T)=\max\{k:T\cap B_k\neq\emptyset\}$ and $M_T=T\cap B_{r(T)}$.
The closed-form PASV contribution of the nonconstant SOU terms is then
\begin{equation}
    \psi_i(U)
    =
    \sum_{j:i\in M_{T_j}}
    \alpha_j
    \frac{\lambda_i}{\sum_{r\in M_{T_j}}\lambda_r}.
    \label{eqn:app-sou-true-value}
\end{equation}

Both scenarios use $n\in\{128,512,2048,8192\}$, $K=n/16$, and equal-sized blocks.
The maximum number of retained Monte Carlo samples is $N_{\mathrm{MC}}=10^4$, with burn-in $100{,}000$ and thinning $10^3$.
When the same subset $S$ is encountered repeatedly, we cache $U(S)$ and reuse the stored value.
The four reported cases are Scenario 1 with $\lambda\equiv1$, and Scenario 2 with $\lambda\equiv1$, $\lambda_i\sim\mathrm{Unif}(1,10)$, and $\lambda_i\sim\mathrm{Unif}(1,100)$.
For $m$ retained permutations, we report the absolute relative error
\begin{equation}
    \mathrm{ARE}(m)
    =
    \frac{\|\widehat{\psi}^{(m)}-\psi^\star\|_2}{\|\psi^\star\|_2},
    \label{eqn:app-are}
\end{equation}
where $\psi^\star$ is the closed-form PASV and $\widehat{\psi}^{(m)}$ is the estimate based on $m$ retained permutations.
We also report the area under the convergence curve,
\begin{equation}
    \mathrm{AUCC}
    =
    \frac{1}{100}\sum_{\ell=1}^{100}\mathrm{ARE}(100\ell),
    \label{eqn:app-aucc}
\end{equation}
the number of unique utility evaluations, the runtime spent on utility evaluations, and the runtime spent on all other operations.

Figure~\ref{fig:app-method1-approximation} shows that the basic estimator approximates the true PASV well with a few hundred retained permutations across the simulated settings.
The experiment also illustrates how the computational burden should be interpreted in applications.
The Monte Carlo stage nominally traverses $O(N_{\mathrm{MC}}n)$ prefix subsets, but repeated subsets can be cached, so the number of unique utility calls can be substantially smaller.
When a downstream utility call is expensive, the total runtime can therefore be decomposed into the number of unique utility evaluations times the cost per evaluation, plus the sampler and bookkeeping overhead shown separately in the figure.
The results indicate that the estimator remains practical for player sets with several thousand players in these controlled benchmarks.

\clearpage

\begin{figure}[t]
  \centering
  \includegraphics[width=\linewidth]{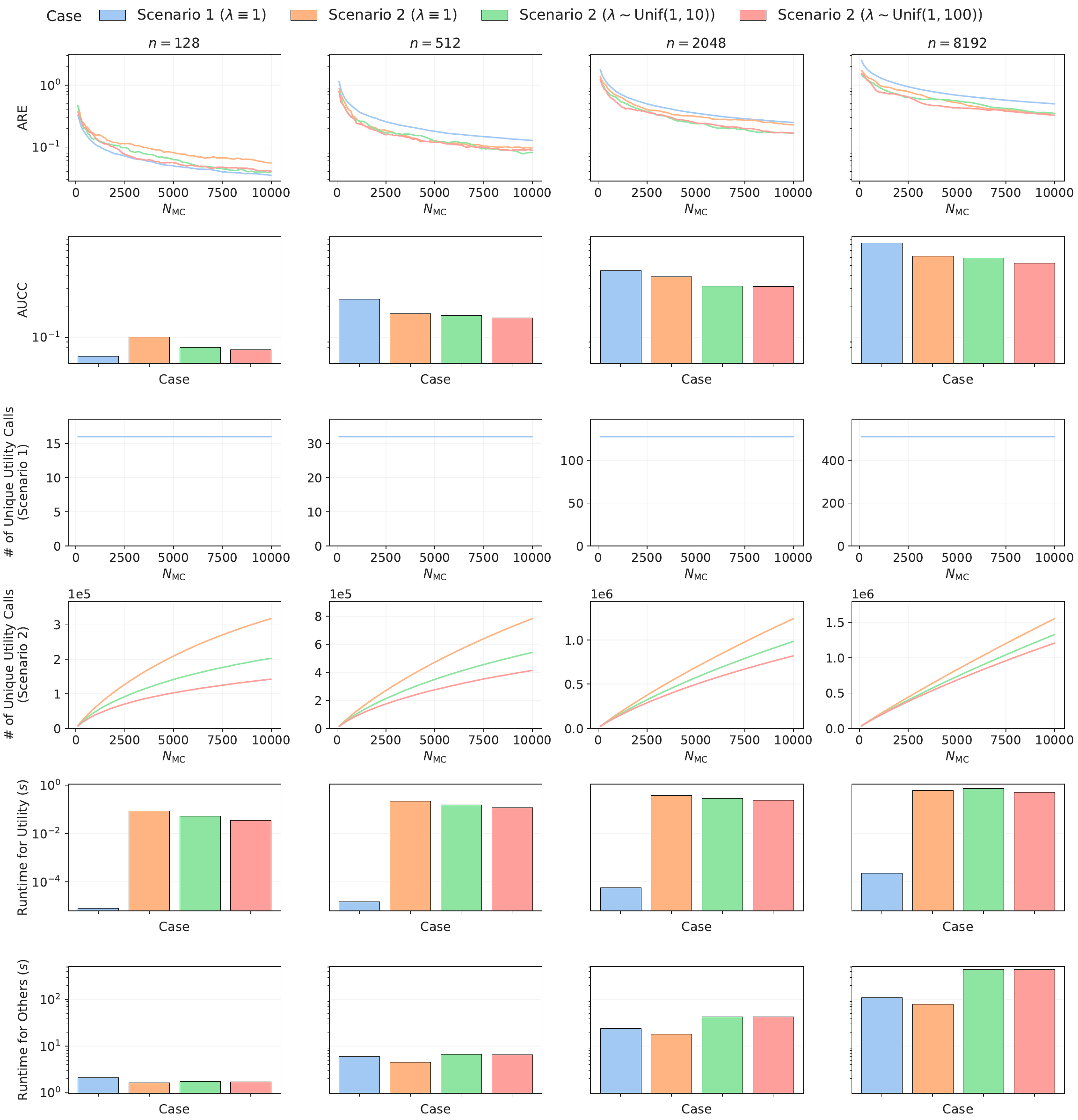}
  \caption{Approximation quality and computational cost of the basic PASV Monte Carlo estimator.
  The figure summarizes results across $n$, $N_{\mathrm{MC}}$, and the four cases described in Appendix~\ref{app:sim-approx-error}.
  The top two rows report ARE and AUCC, the middle rows report the number of unique utility evaluations, and the bottom rows separate utility-evaluation runtime from all other runtime.
  Utility values are cached and reused when the same subset is revisited.}
  \label{fig:app-method1-approximation}
\end{figure}

\clearpage

\subsection{Proxy-Based Estimation Method}
\label{app:sim-proxy-estimation}

The previous subsection studies the direct Monte Carlo estimator, which estimates PASV by repeatedly evaluating marginal contributions along sampled linear extensions.
This approach is simple and broadly applicable, but its statistical efficiency can be limited when the utility function has systematic low-order structure that is not explicitly used.
The proxy-based estimator studied here follows the proxy-plus-residual strategy of regression MSR \citep{witter2026regression}: fit a tractable surrogate for the utility, compute or estimate the PASV of that surrogate, and use Monte Carlo only to correct the residual.
Its surrogate model is related to regression-based Shapley estimators: a linear proxy corresponds to the KernelSHAP viewpoint \citep{lundberg2017unified,covert2021improving}, while including higher-order interaction terms follows the polynomial extension in PolySHAP \citep{fumagalli2026polyshap}.
We adapt these ideas to precedence-constrained PASV by using a quadratic proxy over incomparable pairs and by reusing each evaluated utility value when estimating multiple player-value coordinates, in the spirit of maximum sample reuse \citep{wang2023data}.
The goal of this experiment is not to replace the direct estimator in all regimes, but to understand when a fitted proxy can reduce approximation error under a fixed computational budget.

The proxy takes the form
\begin{equation}
    \widehat{U}(S)
    \approx
    U(\emptyset)
    +
    \sum_{i\in S}a_i
    +
    \sum_{\{i,j\}\subseteq S,\; i\parallel j} b_{ij},
    \label{eqn:app-proxy-utility}
\end{equation}
where $i\parallel j$ denotes that neither $i\preceq j$ nor $j\preceq i$, and $b_{ij}=b_{ji}$.
For pairs that are comparable in the DAG, the quadratic coefficient is set to zero.
This choice keeps the proxy aligned with the precedence structure, since only incomparable pairs can vary in their relative sampled order.
For this quadratic proxy, the PASV contribution has the closed form
\begin{equation}
    \psi_i(\widehat{U})
    =
    a_i
    +
    \sum_{j:j\neq i,\; i\parallel j}
    q_{ji} b_{ij},
    \qquad
    q_{ji}:=
    \mathbb{P}_{\pi\sim p^{(\preceq,\lambda)}}(j\prec_\pi i).
    \label{eqn:app-proxy-closed-form}
\end{equation}
The estimator proceeds in three parts.
First, the MCMC sampler is run to stationarity as in Algorithm~\ref{alg:tilt-mcmc}.
Second, sampled permutations and their prefix sets are used to evaluate $U$, fit $\widehat U$ by weighted least squares, and estimate the pairwise probabilities $q_{ji}$.
Third, using linearity,
$$
    \psi(U)=\psi(\widehat U)+\psi(U-\widehat U),
$$
we estimate the residual term $\psi(U-\widehat U)$ with the direct Monte Carlo estimator.

We evaluate this proxy-based estimator on Scenario 2 from Appendix~\ref{app:sim-approx-error} with $\lambda\equiv1$.
The player sizes are $n\in\{256,512,1024,2048\}$.
The proxy-fitting budget is $N_{\mathrm{coef}}=kn$ subsets with $k\in\{0,1,2,5\}$, where $k=0$ corresponds to the direct estimator without a proxy.
The probability-estimation budget is $N_{\mathrm{prob}}=100n$, and the residual-correction Monte Carlo budget is $N_{\mathrm{MC}}=10{,}000-N_{\mathrm{coef}}$.
We also vary $\rho\in\{0,0.5,1\}$, which controls the proportion of discovered quadratic terms included in the proxy.
The corresponding computational costs decompose as follows.
The MCMC component costs $O(Nn)$ for $N$ iterations or samples.
Utility evaluation costs $O(N_{\mathrm{coef}}n)$ for proxy fitting plus $O((N-N_{\mathrm{coef}})n)$ for residual correction, hence $O(Nn)$ in total.
The additional weighted least-squares proxy fit costs $O(N_{\mathrm{coef}}d^2)$, where $d$ is the number of proxy coefficients and can be as large as $O(n^2)$.

Figure~\ref{fig:app-proxy-are} shows that the proxy can improve approximation accuracy when the proxy-fitting cost is not too large relative to the total budget.
For $n\le512$, the settings $\rho\in\{0.5,1\}$ consistently improve over the direct estimator under the same budget.
The setting $\rho=0$, which removes the quadratic terms, performs worst in this benchmark, indicating that pairwise terms are important for this non-uniform distribution over admissible orders.
The case $\rho=0.5$ often matches the accuracy of $\rho=1$ while reducing overhead, making it a practical compromise in these simulations.
For $n\ge1024$, the proxy budget becomes more consequential.
When $k\le2$, the proxy-based estimator still matches or improves on the direct estimator, whereas $k=5$ can allocate too much of the budget to proxy fitting and make the direct estimator preferable.

Figures~\ref{fig:app-proxy-runtime} and \ref{fig:app-proxy-memory} show the corresponding computational tradeoff.
The extra proxy runtime grows with $n$ and $k$, and $\rho=0.5$ has substantially smaller fitting cost than $\rho=1$.
Peak memory also increases with $n$, with the largest $(n,\rho)=(2048,1)$ cases requiring tens of GB during proxy construction.
After the proxy has been fitted, the memory cost drops sharply and is much less sensitive to $\rho$.
Overall, these results suggest that the proxy-based estimator is useful when moderate quadratic structure captures much of the utility variation, but its fitting budget and peak memory should be controlled for larger $n$.

\clearpage

\begin{figure}[t]
  \centering
  \includegraphics[width=\linewidth]{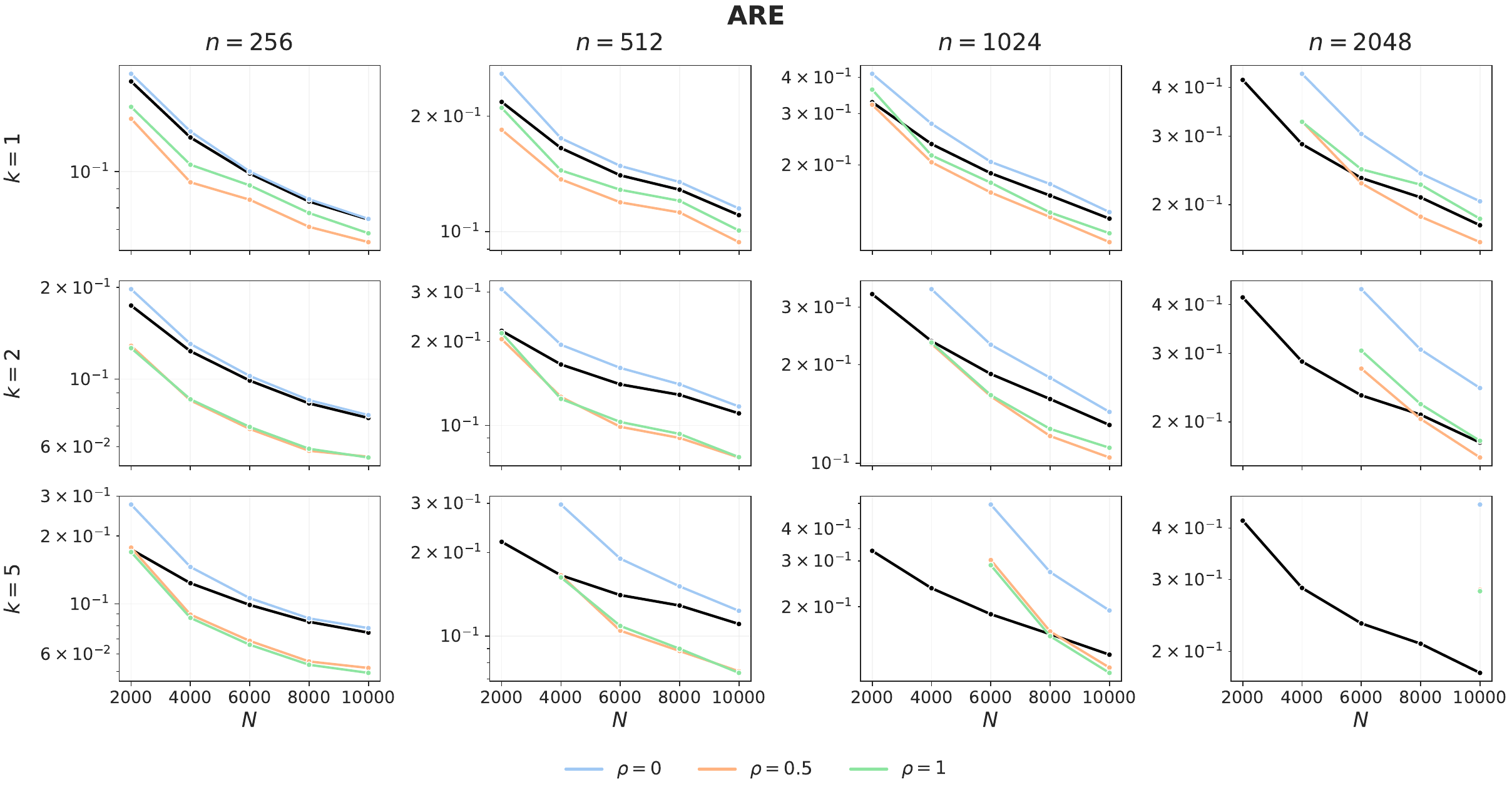}
  \caption{Approximation error of the proxy-based estimator.
  ARE is plotted as the Monte Carlo budget increases.
  The black curve is the direct estimator, while colored curves use the quadratic regression proxy with different values of $\rho$.
  Rows correspond to $k$, where $N_{\mathrm{coef}}=kn$; points are absent when the available budget is smaller than the required proxy-fitting budget.}
  \label{fig:app-proxy-are}
\end{figure}

\begin{figure}[t]
  \centering
  \includegraphics[width=\linewidth]{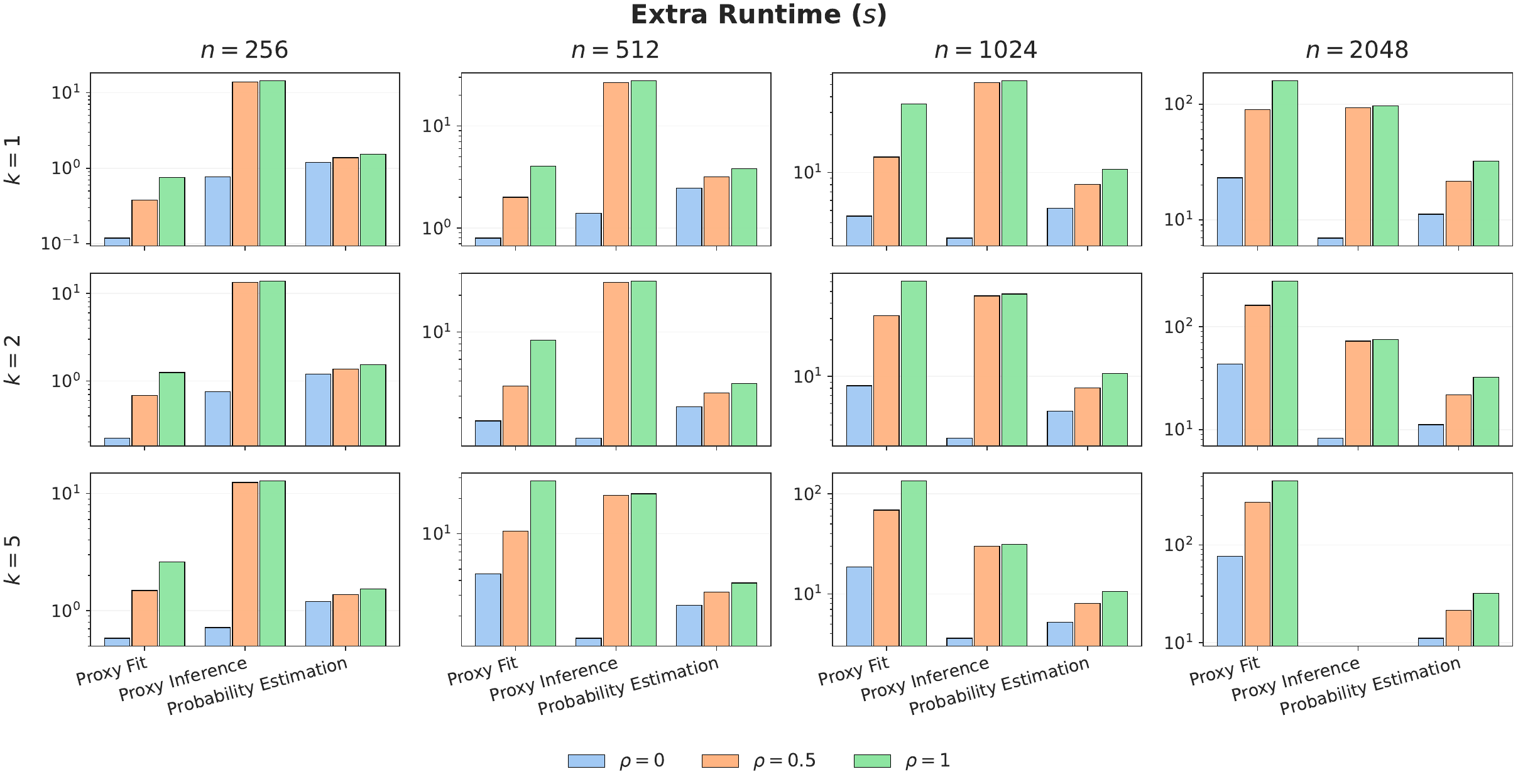}
  \vspace{0.5em}
  \includegraphics[width=\linewidth]{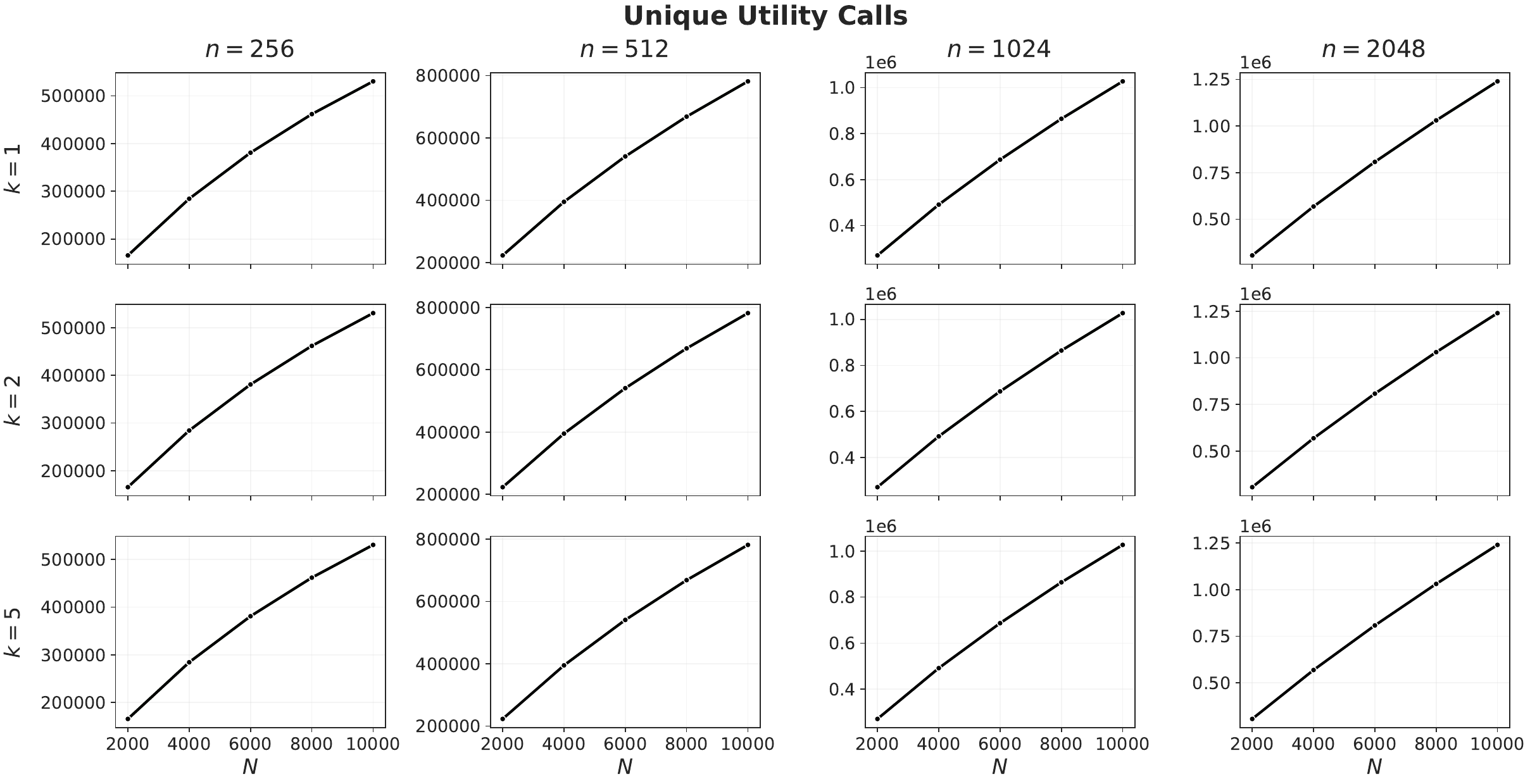}
  \caption{Extra runtime and unique utility evaluations of the proxy-based estimator.
  The top panel decomposes extra runtime into proxy fitting, proxy inference, and probability estimation.
  The bottom panel reports the number of unique utility evaluations.
  The unique utility evaluation curves coincide across $k$ and $\rho$, because this quantity is driven by $n$ and the fixed total budget rather than the proxy specification.}
  \label{fig:app-proxy-runtime}
\end{figure}

\begin{figure}[t]
  \centering
  \includegraphics[width=\linewidth]{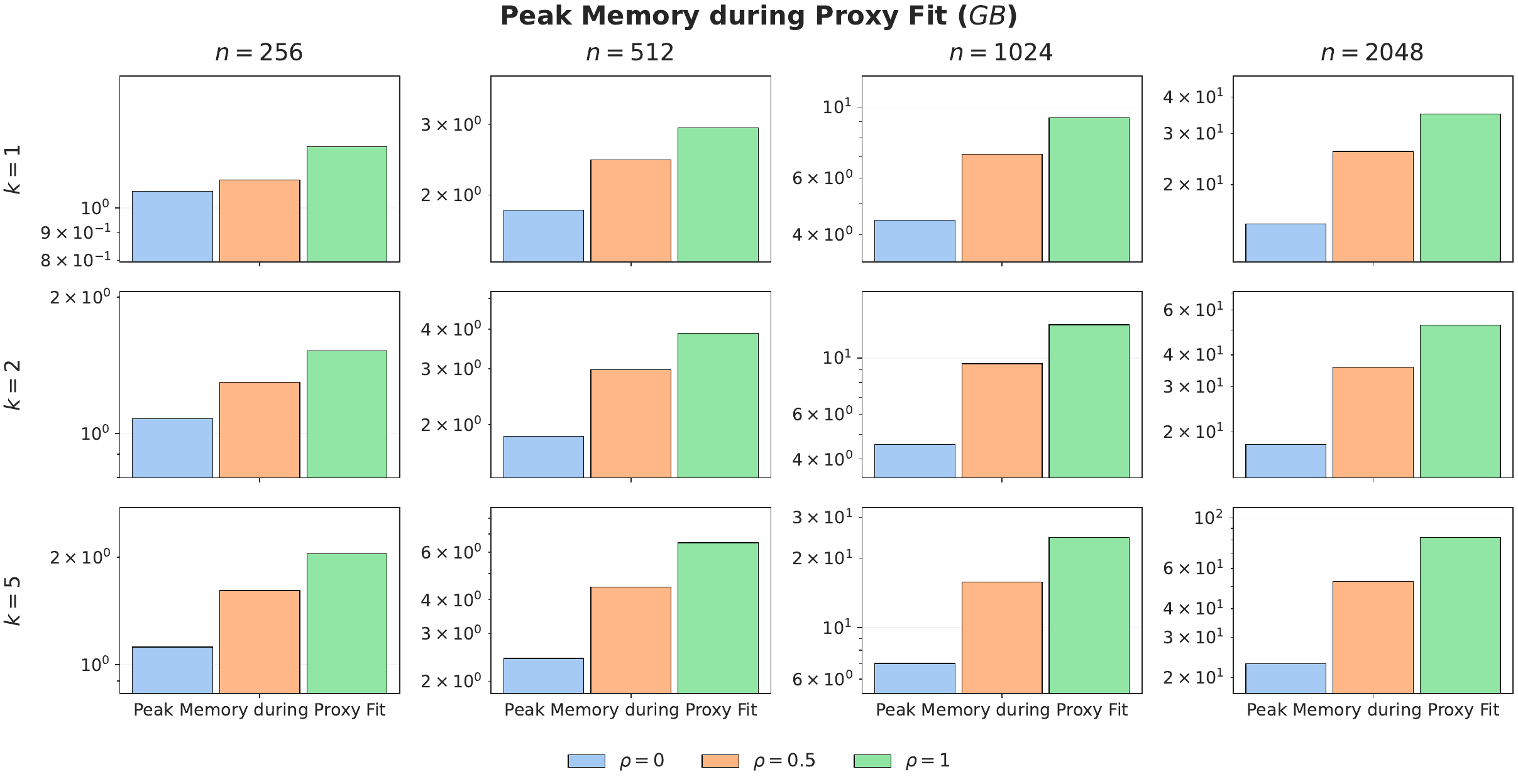}
  \caption{Peak memory during proxy fitting.
  The plot uses the same grid layout as Figure~\ref{fig:app-proxy-are} and reports peak memory in GB during the proxy-fitting step for $\rho\in\{0,0.5,1\}$.}
  \label{fig:app-proxy-memory}
\end{figure}

\clearpage

\section{Experimental Details in Section \ref{sec:data-valuation}}
\label{app:exp-dv}

All experiments were run at the Unity high-performance computing cluster, provided by the College of Arts and Sciences at The Ohio
State University.
Generative AI training and sampling in Section~\ref{sec:data-valuation} used one NVIDIA V100 GPU with 32 GB VRAM per run.
All other experiments were performed on CPU nodes with allocated resources fixed at 16 CPU cores and 32 GB RAM per run, using various Intel Xeon processors.\footnote{Jobs could be placed on different nodes depending on cluster availability, so the specific CPU model varied across runs.}

\subsection{Detailed Setup}
\label{app:exp-dv-setup}

We provide additional implementation details for the data valuation experiments in Section \ref{sec:data-valuation}.
The random seed was fixed for dataset construction, and value estimation was replicated across 10 independent random seeds.
We report the mean and one standard deviation across these runs.

Utility was defined as the test accuracy of a $k$-NN classifier with Euclidean distance using $k=20$.
For CIFAR10, we used ImageNet-pretrained ResNet-18 features, extracting the 512-dimensional representation from the global average pooling output immediately before the final fully connected layer.
Each CIFAR10 image was normalized using ImageNet statistics prior to feature extraction, and the resulting embeddings were $\ell_2$-normalized.

Provider datasets were constructed as follows.
The \textbf{Owner} dataset was sampled from the original dataset with balanced class proportions.
The \textbf{Anchor} dataset was obtained by applying geometric and optical augmentations to the Owner samples.
Each augmentation parameter was sampled independently from a uniform distribution over the stated range.
\begin{itemize}
  \item Scaling in $[0.95, 1.05]$
  \item Brightness in $[0.7, 1.3]$ for MNIST, and in $[0.8, 1.2]$ for CIFAR10
  \item Contrast in $[0.7, 1.3]$ for MNIST, and in $[0.8, 1.2]$ for CIFAR10
  \item Translation sampled independently in the horizontal and vertical directions, each by up to $1/32$ of the image size.
  \item Rotation in $[-15^\circ, 15^\circ]$ for MNIST
  \item For MNIST, Gaussian blur with kernel size $3$ and $\sigma \in [0.5, 1.5]$.
  \item For CIFAR10, horizontal flip with probability $0.5$.
\end{itemize}
For each \textbf{Booster}, we trained a class-conditional generative model on the Owner dataset, and also trained a second model on the Anchor-augmented version of the Owner dataset.
We then sampled a class-balanced set of size $n/2$ from each model via label conditioning.
Full generative model specifications are provided in Appendix \ref{app:exp-dv-gen-models}.
The \textbf{Copier} dataset was formed by copying half of the Owner samples while preserving class balance, and likewise half of the Anchor samples.
The \textbf{Poisoner} dataset was formed by taking half of the Owner samples with balanced class proportions and replacing each label with a uniformly sampled incorrect class label, and likewise for half of the Anchor samples.

All values were estimated via Monte Carlo using $N_{\text{MC}}=10{,}000$ samples.
For PASV, we sampled linear extensions using Algorithm \ref{alg:tilt-mcmc} with burn-in $B=100{,}000$ and thinning $\tau=10{,}000$.
The index proposal sampler $f$ was uniform over $[n-1]$.

\subsection{Model Configurations for Generative AIs}
\label{app:exp-dv-gen-models}

For each generative model, we trained two separate class-conditional models, one using the original dataset and another using the Anchor-augmented version of the dataset.
The resulting Booster dataset was formed by drawing $n/2$ samples from each of the two models, sampling uniformly across classes via label conditioning.
All models were trained with Adam, using $(\beta_1,\beta_2)=(0,0.9)$ for GAN and $(\beta_1,\beta_2)=(0.9,0.999)$ otherwise.
The following summarizes the model configurations used for each model.

For \textbf{GAN}, we used a DCGAN-style architecture based on transposed convolutions.
We trained the model using hinge loss.
We used learning rates $10^{-4}$ for the generator and $2\cdot 10^{-4}$ for the discriminator.
We used batch size $128$ and trained for $40$ epochs on MNIST and $60$ epochs on CIFAR10.
The latent dimension was set to $128$ for MNIST and $256$ for CIFAR10.

For \textbf{DDPM}, we used a class-conditional UNet with base width $64$ on MNIST and $128$ on CIFAR10.
We used a linear noise schedule with $\beta_{\mathrm{start}}=10^{-4}$ and $\beta_{\mathrm{end}}=0.02$, with $T=1000$ diffusion steps.
Training used learning rate $10^{-4}$ and batch size $64$, for $30$ epochs on MNIST and $200$ epochs on CIFAR10.
Sampling used the full DDPM sampler with $1000$ steps.

For \textbf{DDIM}, we sampled from the same trained DDPM parameters, conditioning on class labels in the same manner as DDPM.
We used $250$ sampling steps with a linear index schedule on MNIST and an index schedule uniform in the $\bar{\alpha}$ space on CIFAR10.

For \textbf{FM}, we used the same class-conditional UNet backbone as in DDPM.
Training used learning rate $10^{-4}$ and batch size $64$, for $30$ epochs on MNIST and $200$ epochs on CIFAR10, using time scale $1000$.
Sampling solved the associated ODE using the Euler method with $50$ steps on MNIST and $200$ steps on CIFAR10.

We provide sample visualizations for MNIST and CIFAR10 in Figures \ref{fig:dv_samples_mnist} and \ref{fig:dv_samples_cifar10}, respectively.

\begin{figure}[h!]
  \centering

  \begin{subfigure}{0.17\linewidth}
    \centering
    \includegraphics[width=\linewidth]{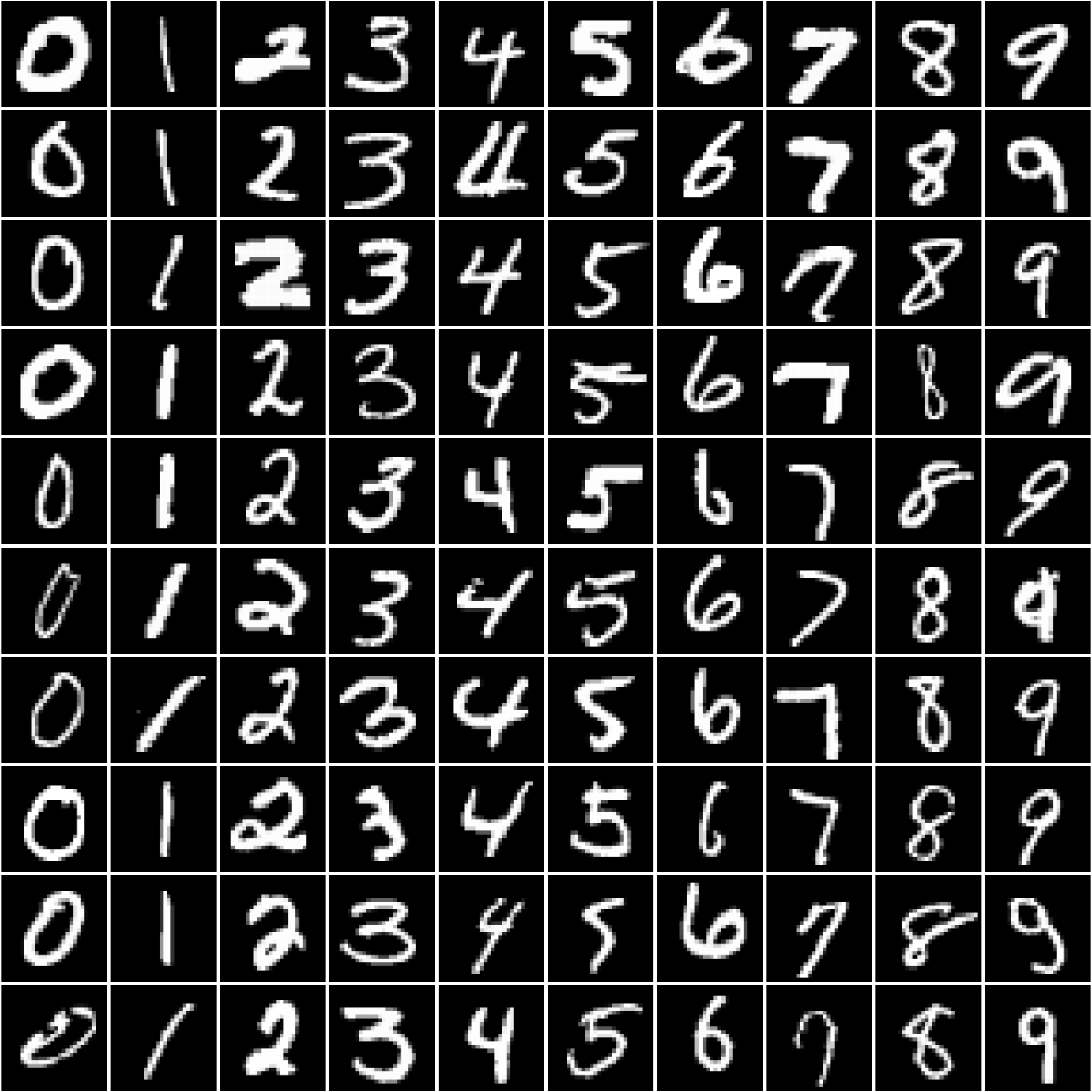}
    \caption{Owner}
  \end{subfigure}
  \begin{subfigure}{0.17\linewidth}
    \centering
    \includegraphics[width=\linewidth]{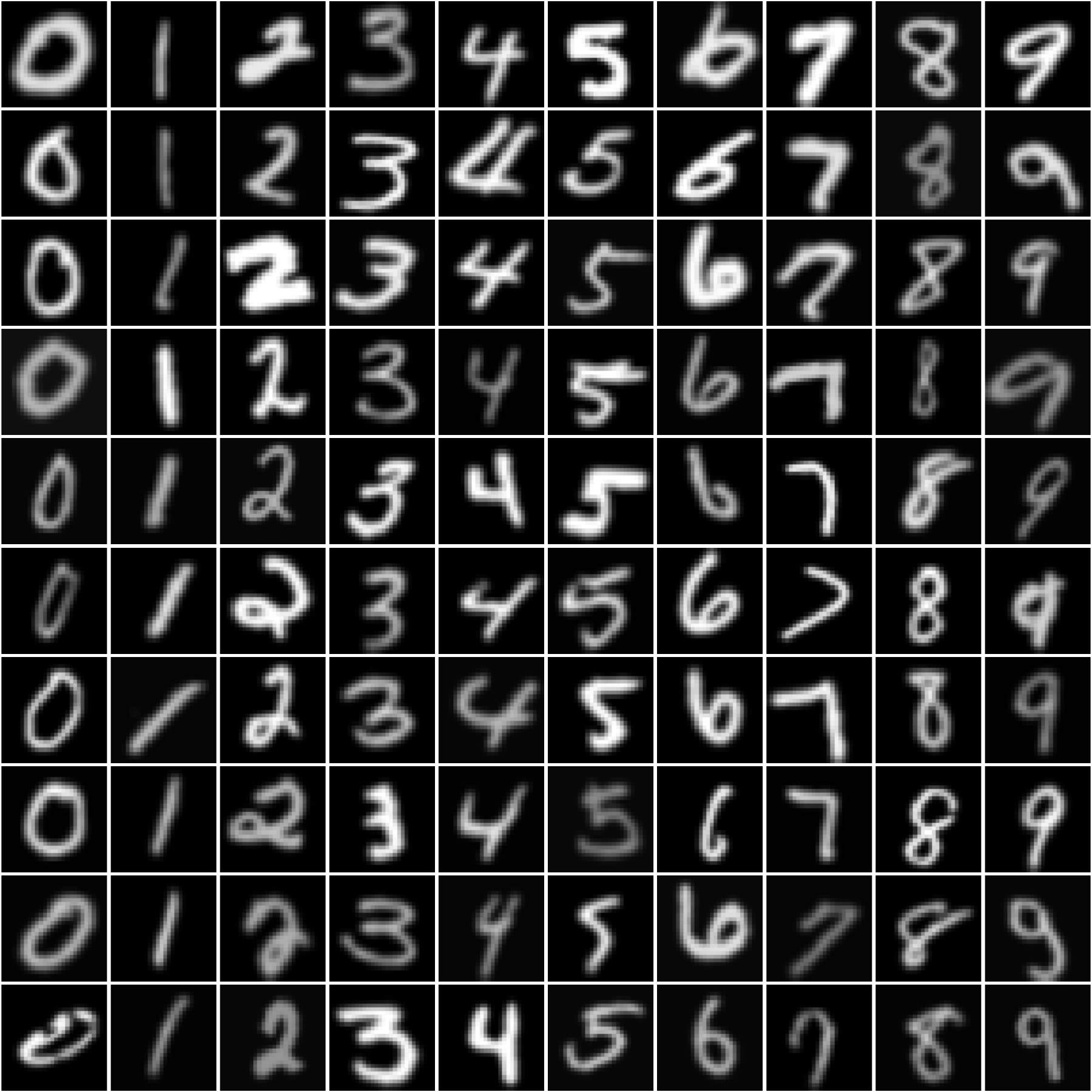}
    \caption{Anchor}
  \end{subfigure}

  \vspace{2mm}

  \begin{subfigure}{0.17\linewidth}
    \centering
    \includegraphics[width=\linewidth]{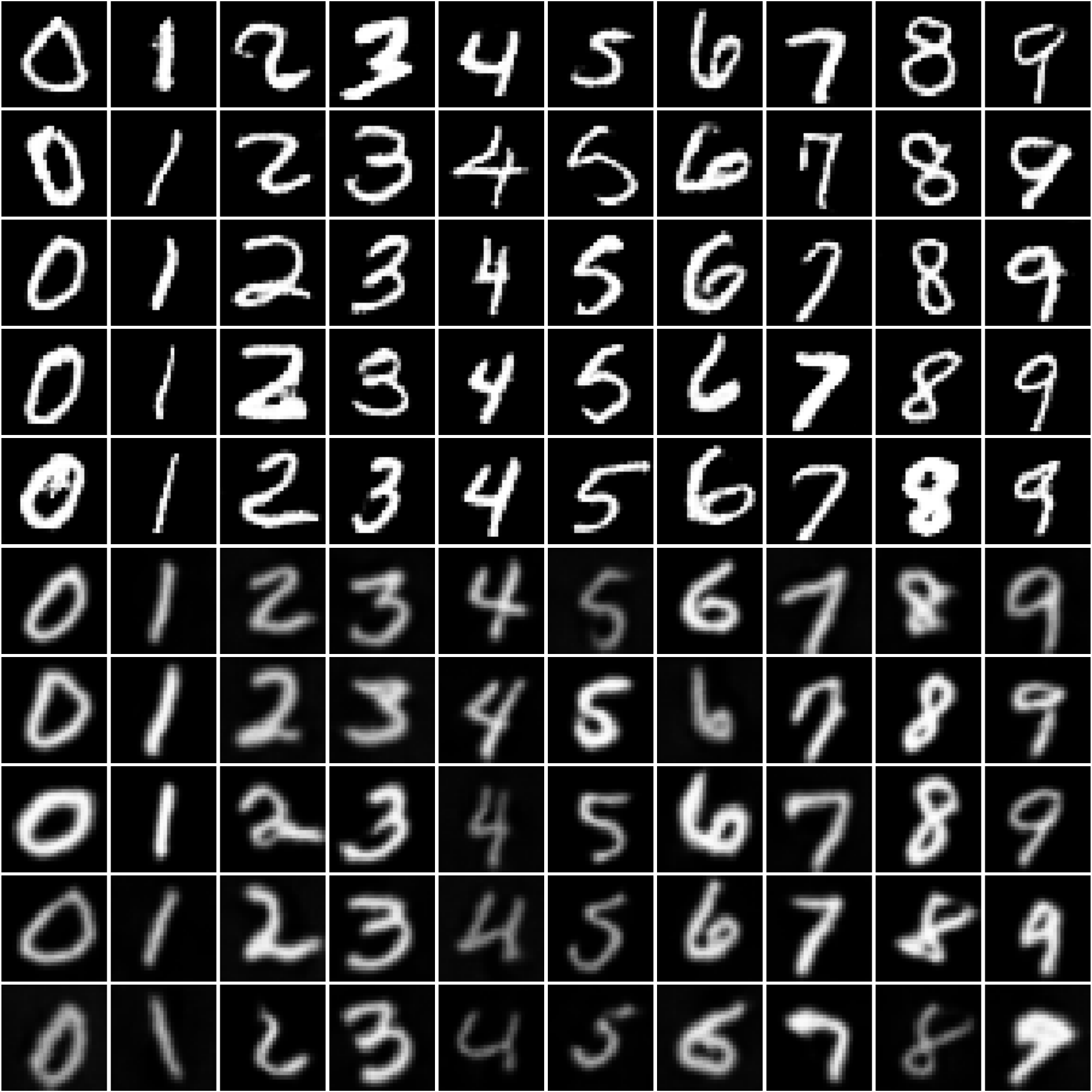}
    \caption{GAN}
  \end{subfigure}
  \begin{subfigure}{0.17\linewidth}
    \centering
    \includegraphics[width=\linewidth]{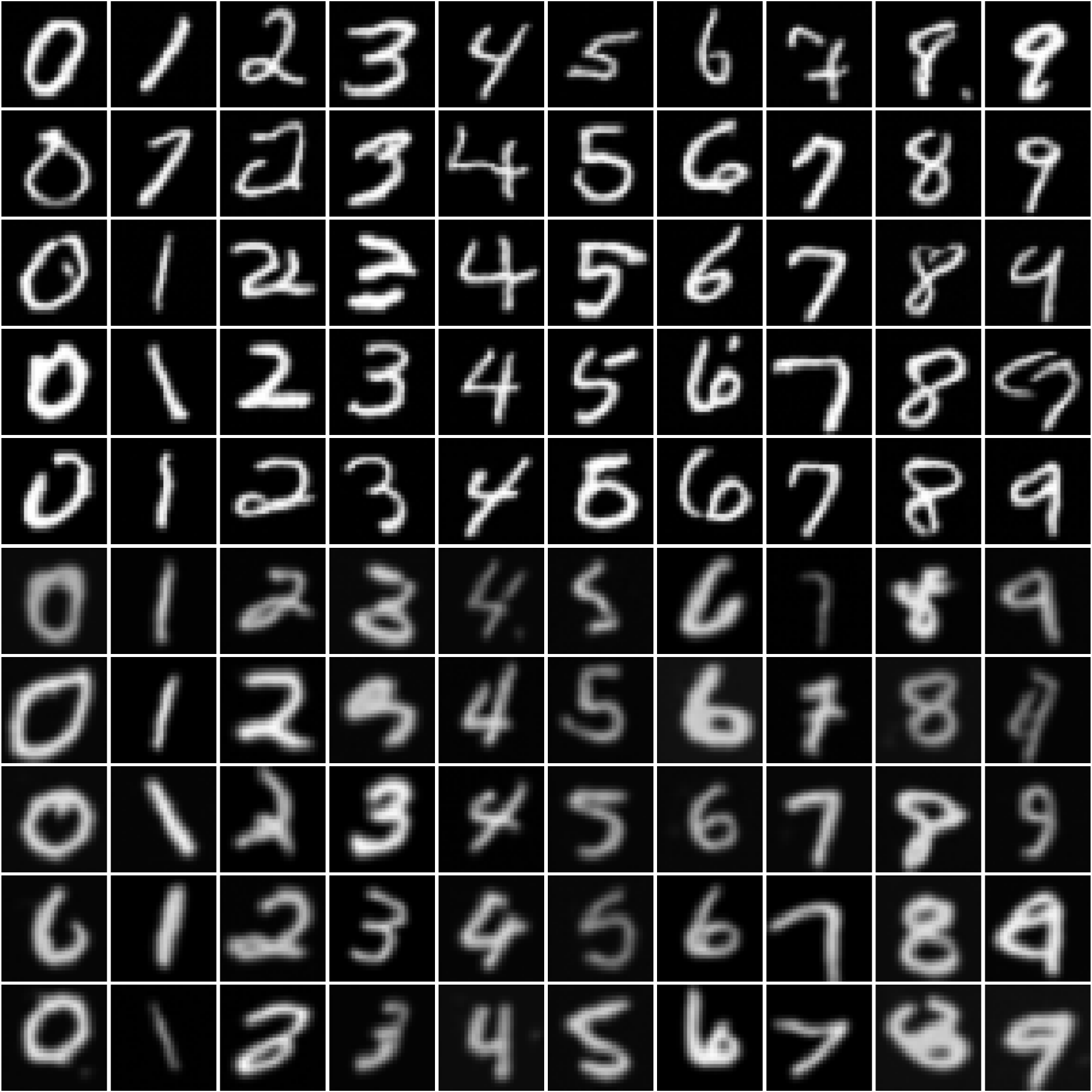}
    \caption{DDPM}
  \end{subfigure}
  \begin{subfigure}{0.17\linewidth}
    \centering
    \includegraphics[width=\linewidth]{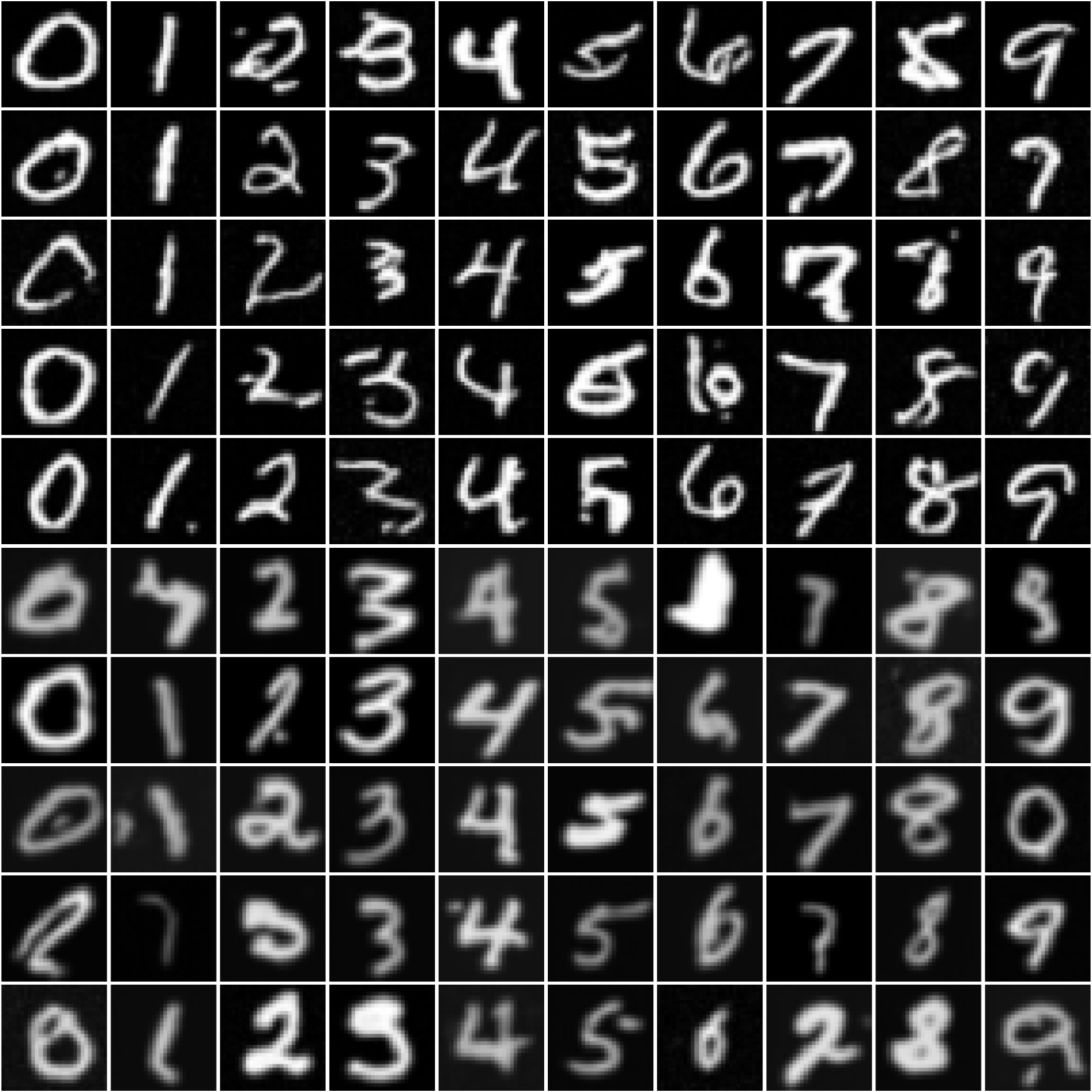}
    \caption{DDIM}
  \end{subfigure}
  \begin{subfigure}{0.17\linewidth}
    \centering
    \includegraphics[width=\linewidth]{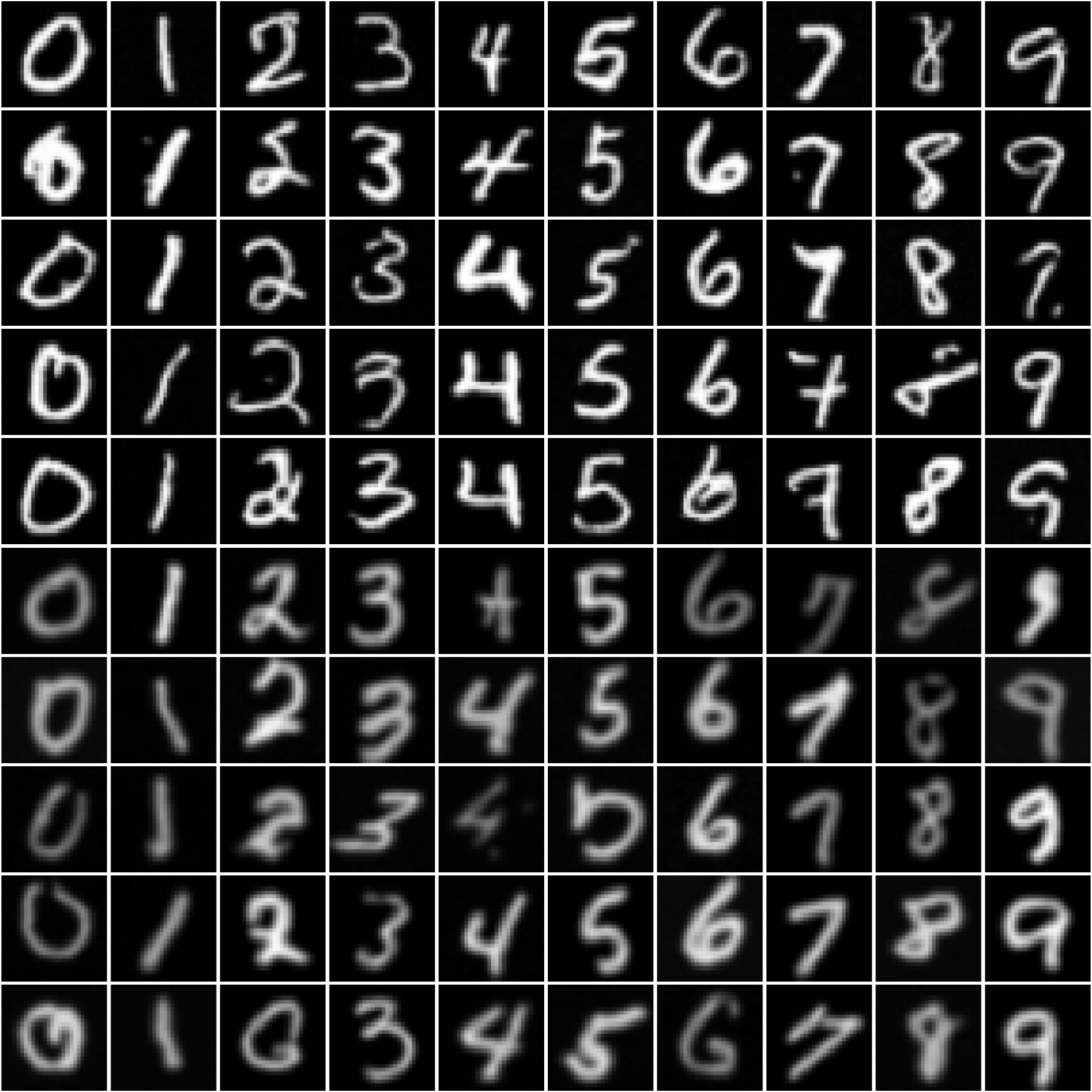}
    \caption{FM}
  \end{subfigure}

  \caption{MNIST samples. For each generative model panel (c) to (f), the top five rows show samples generated from the model trained on the original dataset, and the bottom five rows show samples generated from the model trained on the Anchor augmented dataset.}
  \label{fig:dv_samples_mnist}
\end{figure}

\begin{figure}[h!]
  \centering

  \begin{subfigure}{0.17\linewidth}
    \centering
    \includegraphics[width=\linewidth]{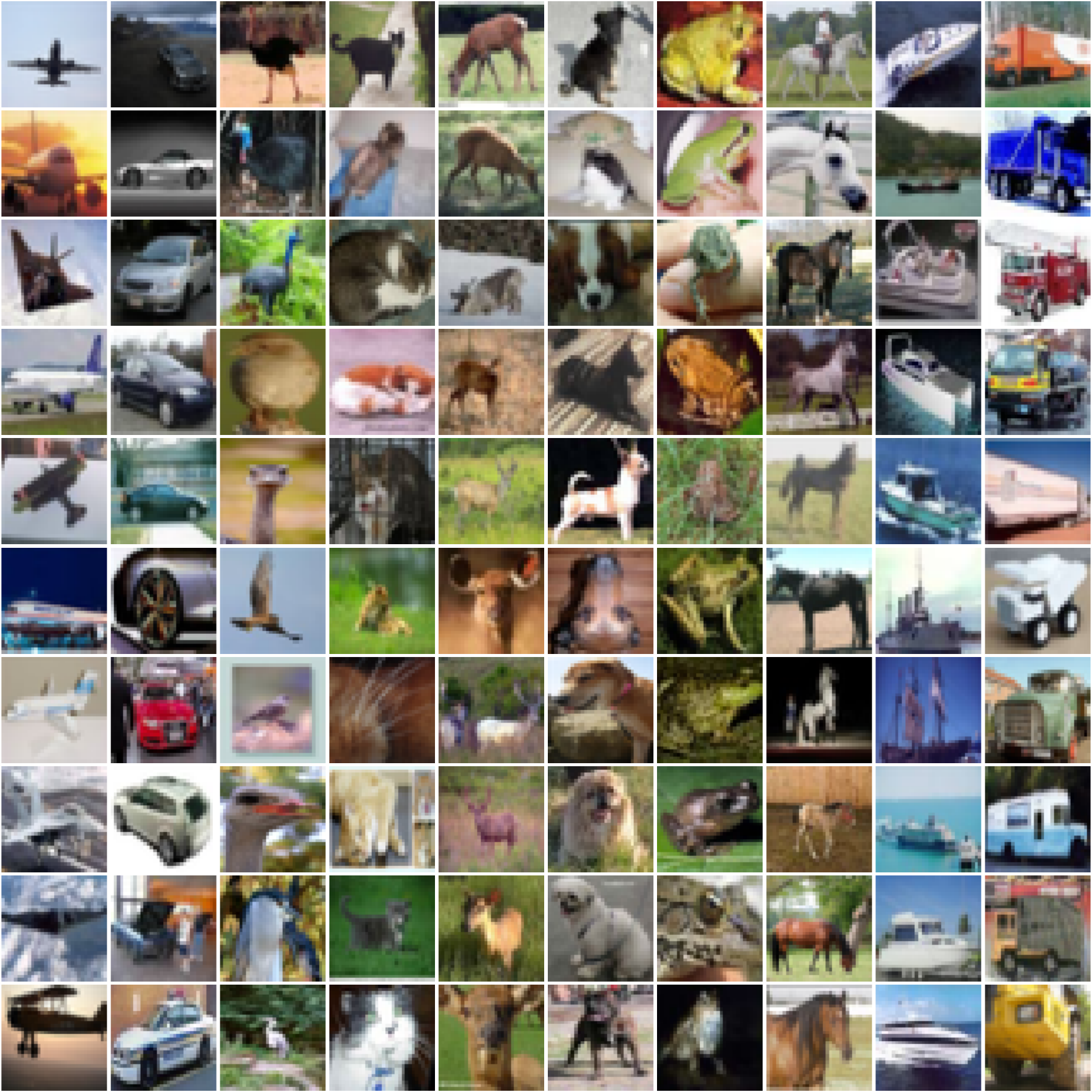}
    \caption{Owner}
  \end{subfigure}
  \begin{subfigure}{0.17\linewidth}
    \centering
    \includegraphics[width=\linewidth]{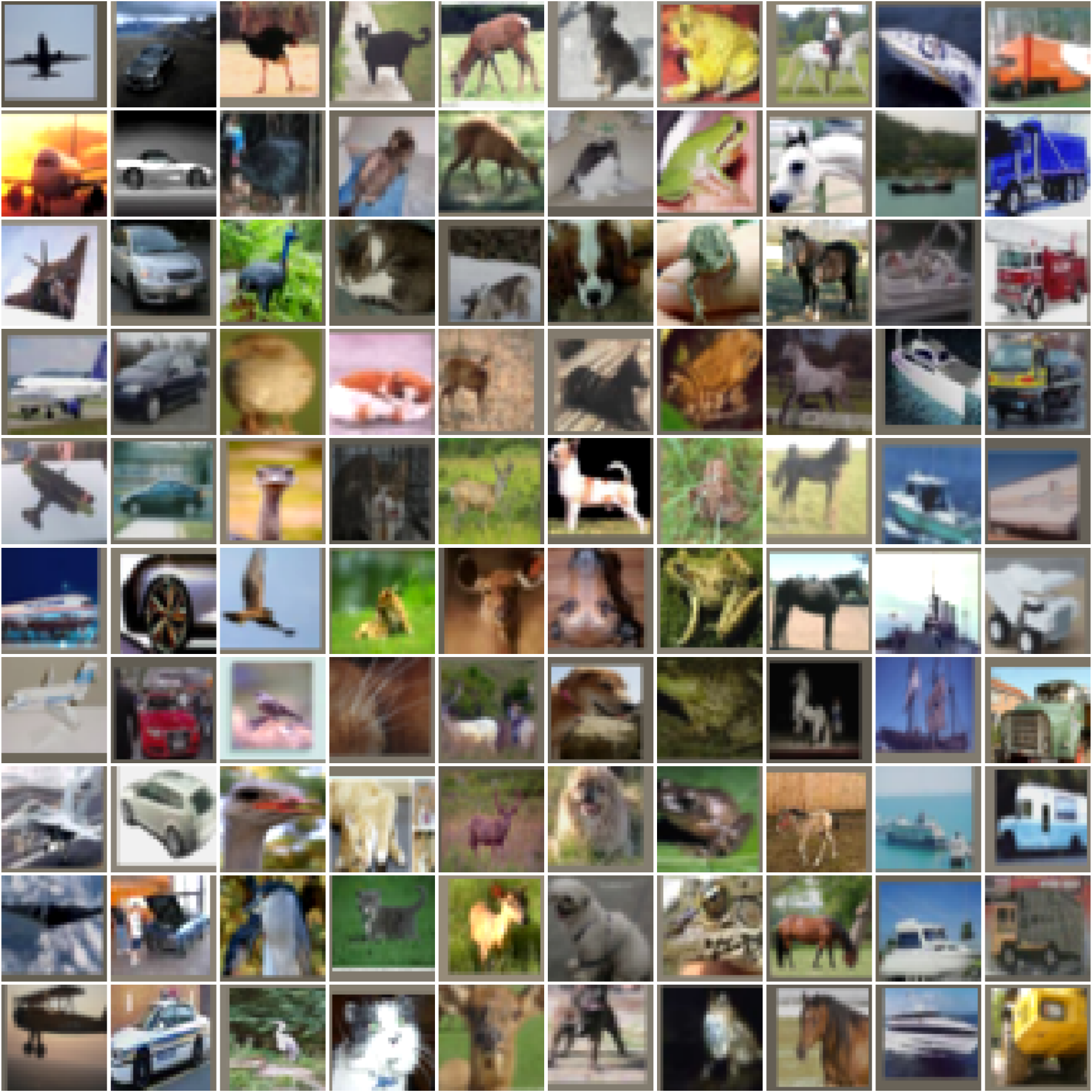}
    \caption{Anchor}
  \end{subfigure}

  \vspace{2mm}

  \begin{subfigure}{0.17\linewidth}
    \centering
    \includegraphics[width=\linewidth]{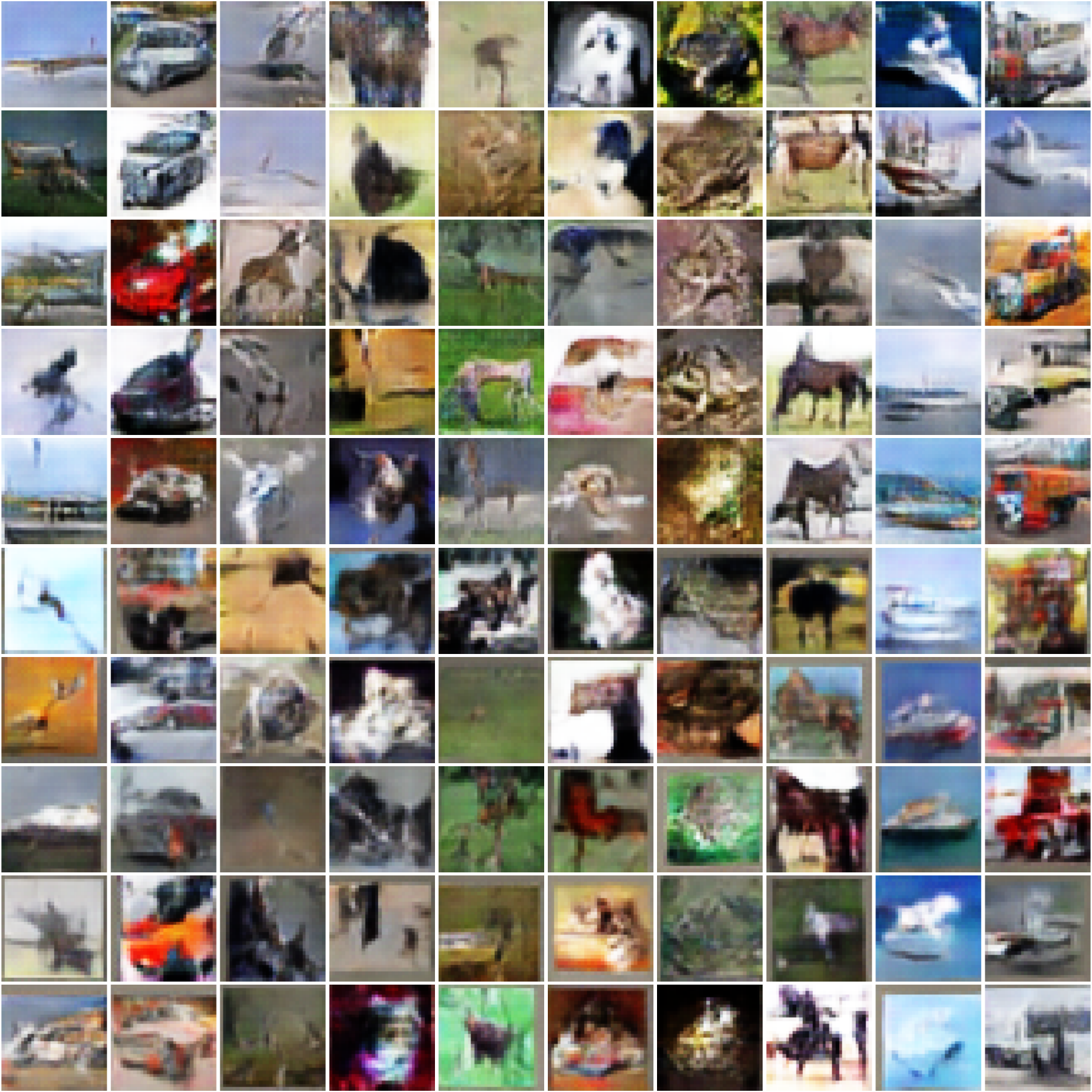}
    \caption{GAN}
  \end{subfigure}
  \begin{subfigure}{0.17\linewidth}
    \centering
    \includegraphics[width=\linewidth]{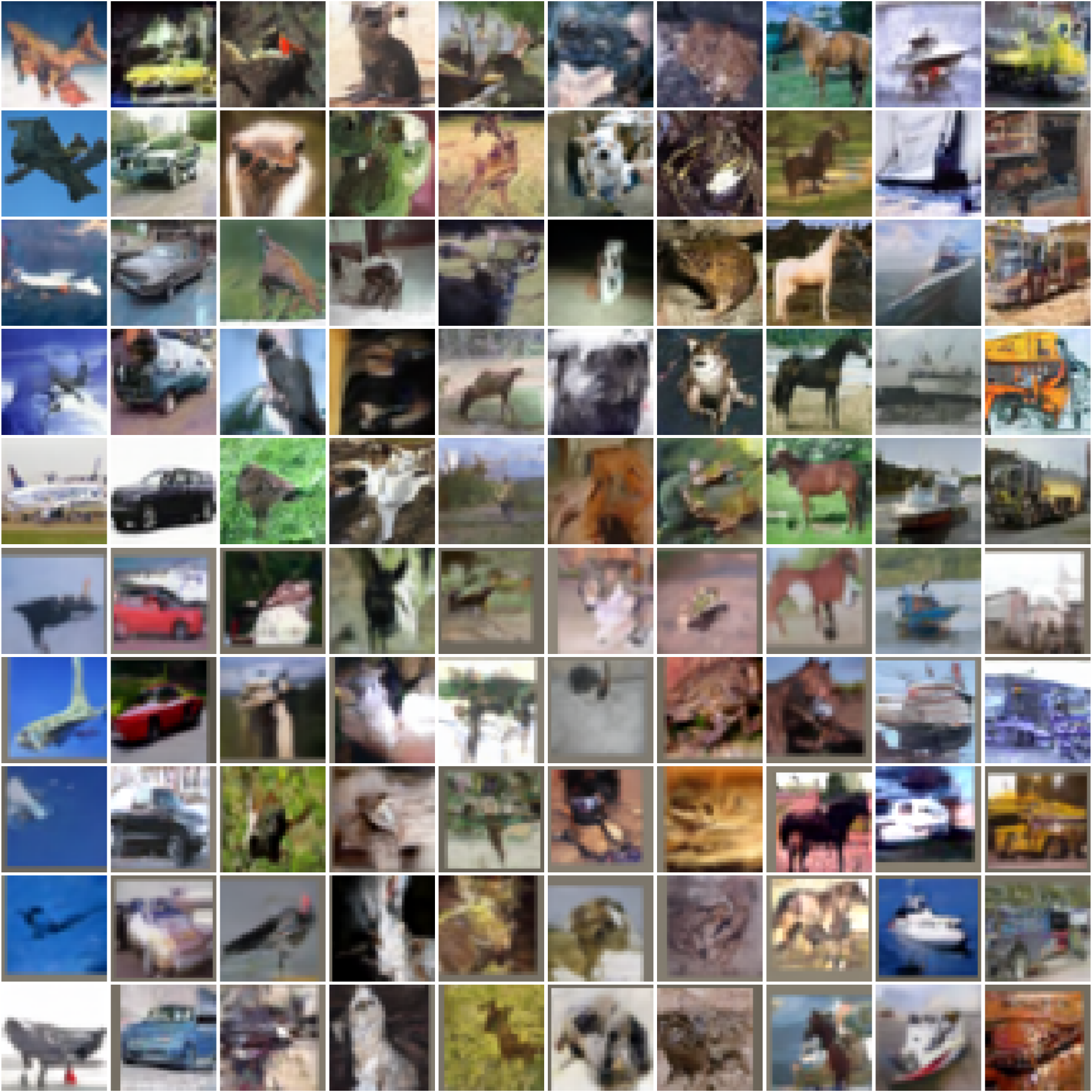}
    \caption{DDPM}
  \end{subfigure}
  \begin{subfigure}{0.17\linewidth}
    \centering
    \includegraphics[width=\linewidth]{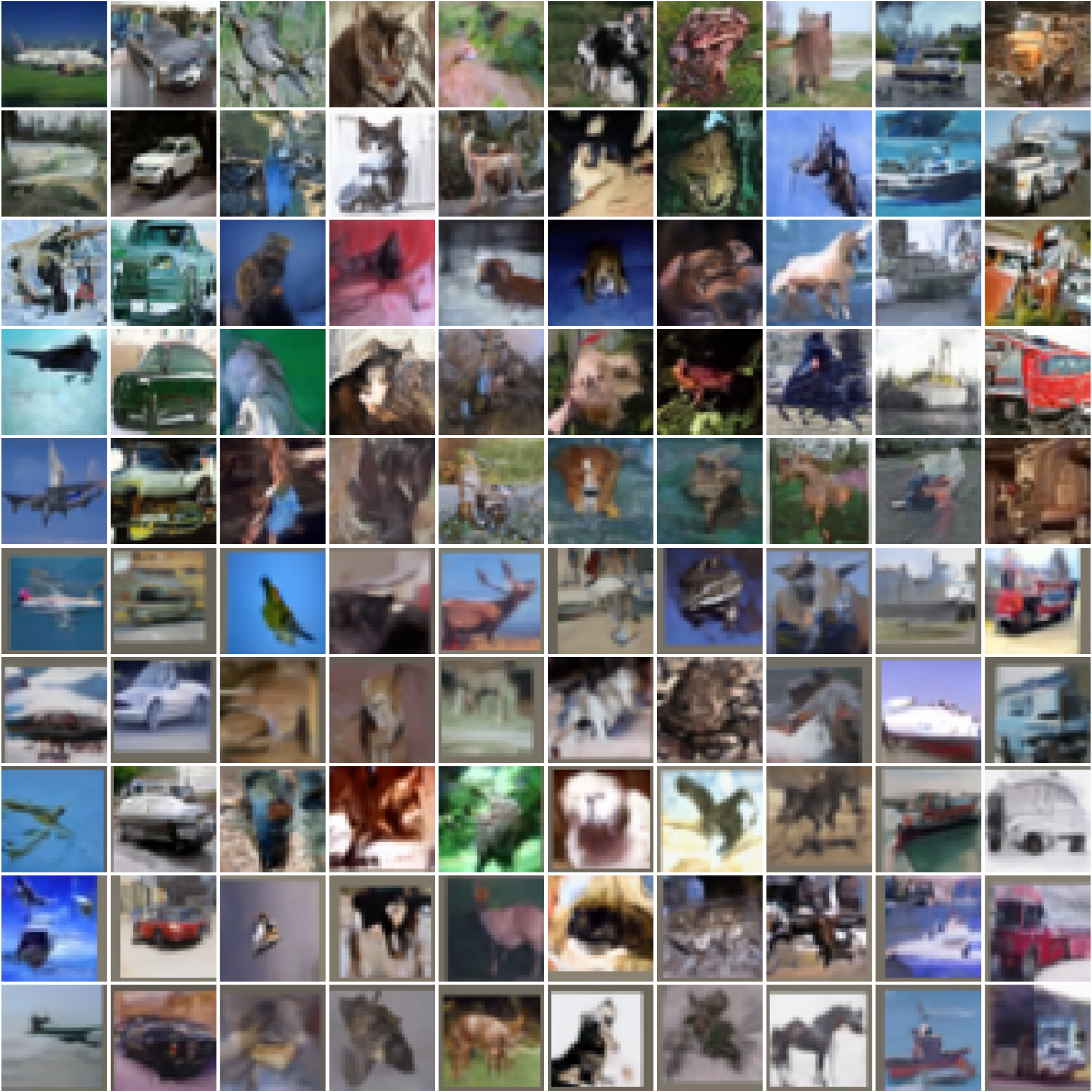}
    \caption{DDIM}
  \end{subfigure}
  \begin{subfigure}{0.17\linewidth}
    \centering
    \includegraphics[width=\linewidth]{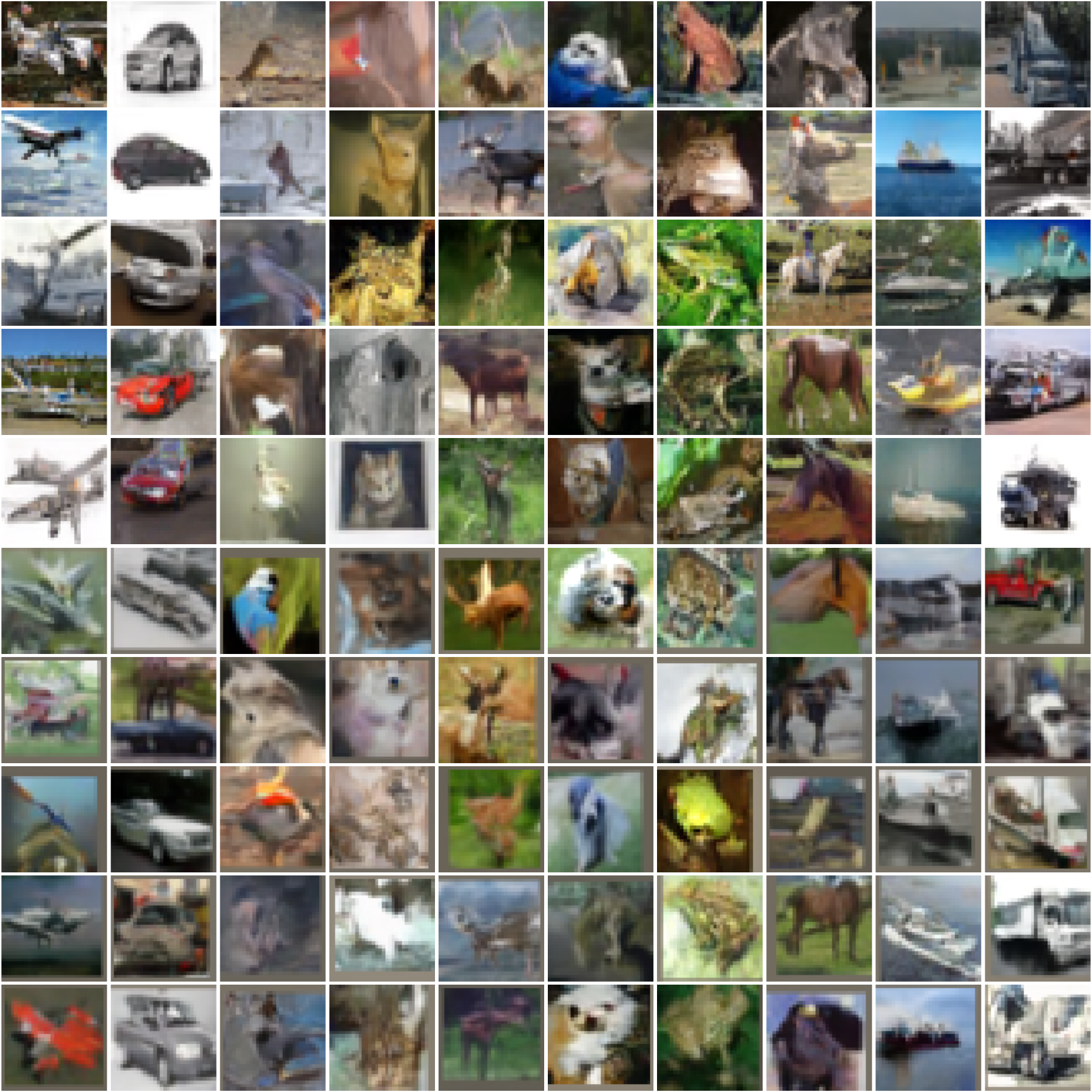}
    \caption{FM}
  \end{subfigure}

  \caption{CIFAR10 samples. For each generative model panel (c) to (f), the top five rows show samples generated from the model trained on the original dataset, and the bottom five rows show samples generated from the model trained on the Anchor augmented dataset.}
  \label{fig:dv_samples_cifar10}
\end{figure}

\clearpage

\subsection{Additional Results}
\label{app:exp-dv-additional}

\begin{figure}[h]
  \centering
  \includegraphics[width=\linewidth]{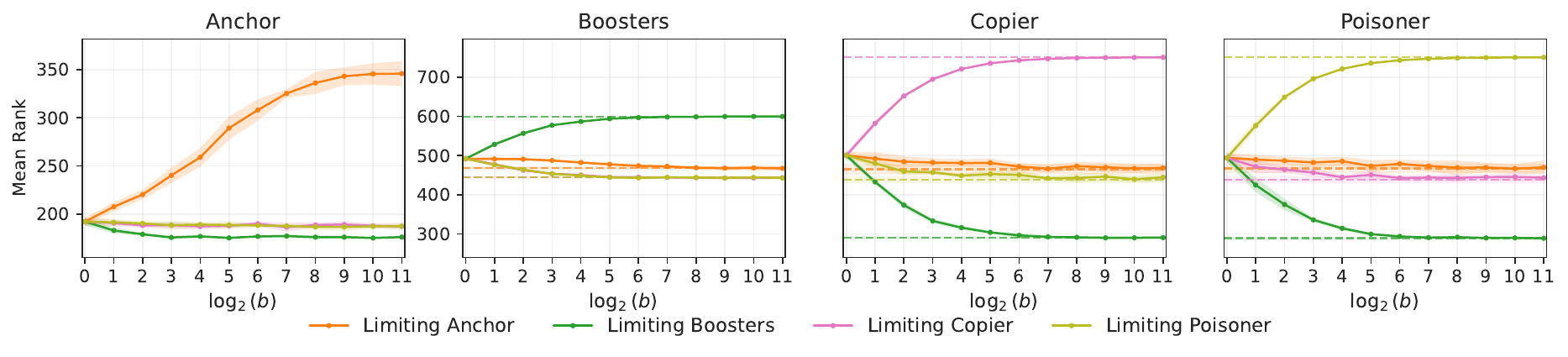}
  \caption{Change of mean rank by limiting weight of each provider.}
  \label{fig:rank-mnist}
\end{figure}

\begin{table}[h]
\centering
\caption{Provider-wise values on MNIST (mean (sd)).}
\label{tab:values-mnist}
\small
\setlength{\tabcolsep}{6pt}
\begin{tabular}{lcccccc}
\toprule
Provider & SV & WSV & PSV & PASV ($b{=}2$) & PASV ($b{=}8$) & PASV ($b{=}32$) \\
\midrule
Owner    & 0.0957 (0.0002) & 0.5270 (0.0000) & 0.5270 (0.0000) & 0.5270 (0.0000) & 0.5270 (0.0000) & 0.5270 (0.0000) \\
Anchor   & 0.1148 (0.0004) & 0.0468 (0.0002) & 0.0927 (0.0056) & 0.0826 (0.0069) & 0.0673 (0.0040) & 0.0548 (0.0049) \\
GAN      & 0.1225 (0.0003) & 0.0596 (0.0001) & 0.0483 (0.0028) & 0.0513 (0.0027) & 0.0512 (0.0021) & 0.0562 (0.0021) \\
DDPM     & 0.1144 (0.0003) & 0.0507 (0.0002) & 0.0470 (0.0036) & 0.0557 (0.0039) & 0.0591 (0.0061) & 0.0624 (0.0047) \\
DDIM     & 0.1078 (0.0003) & 0.0518 (0.0002) & 0.0330 (0.0034) & 0.0283 (0.0045) & 0.0281 (0.0079) & 0.0263 (0.0032) \\
FM       & 0.1506 (0.0004) & 0.0825 (0.0003) & 0.0773 (0.0048) & 0.0767 (0.0035) & 0.0884 (0.0054) & 0.0960 (0.0070) \\
Copier   & 0.0848 (0.0003) & 0.0073 (0.0002) & -0.0032 (0.0022) & -0.0065 (0.0012) & -0.0054 (0.0005) & -0.0061 (0.0006) \\
Poisoner & 0.0175 (0.0003) & -0.0178 (0.0002) & -0.0141 (0.0052) & -0.0072 (0.0015) & -0.0077 (0.0009) & -0.0086 (0.0012) \\
\bottomrule
\end{tabular}
\end{table}

\clearpage

\begin{figure*}[h]
  \centering

  \includegraphics[width=\linewidth]{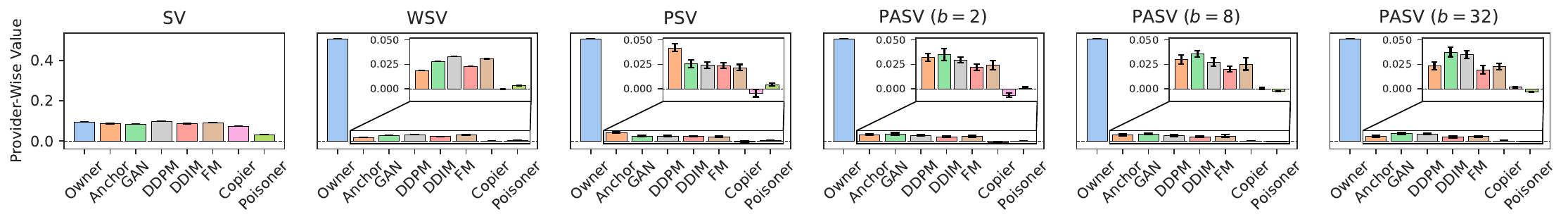}\vspace{-5pt}
  \includegraphics[width=\linewidth]{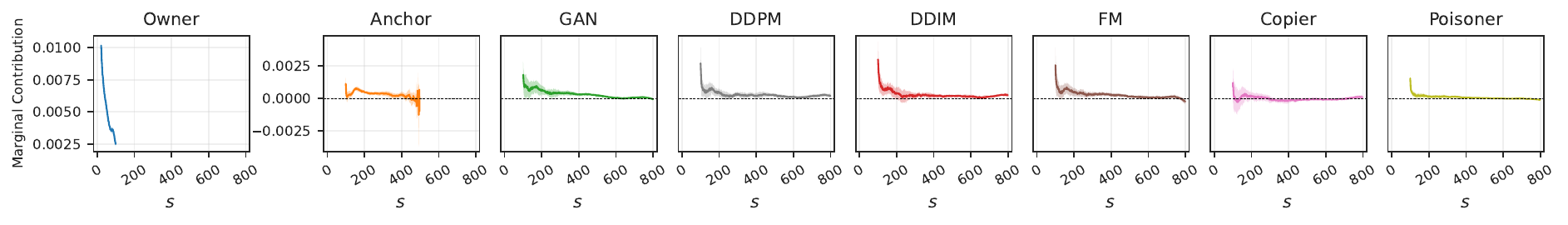}\vspace{-5pt}
  \includegraphics[width=\linewidth]{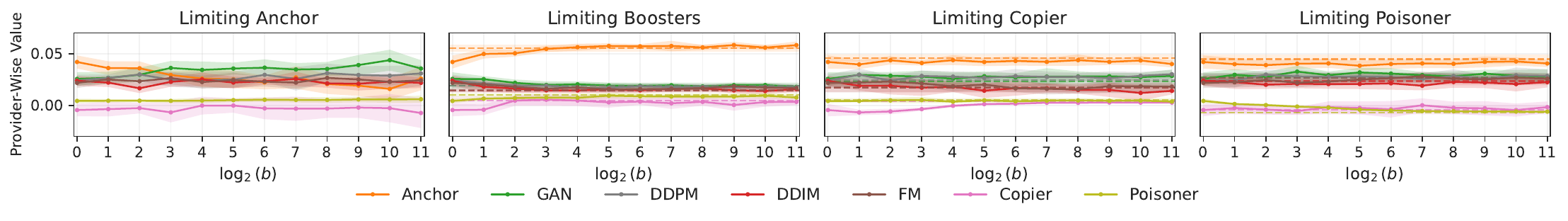}

  \caption{CIFAR10 results. 
  \textbf{(Top)} Provider-wise values.
  \textbf{(Middle)} Marginal contribution versus subset size.
  \textbf{(Bottom)} Sensitivity analysis.
  }
  \label{fig:cifar-results}
\end{figure*}

\begin{table}[h]
\centering
\caption{Provider-wise values on CIFAR10 (mean (sd)).}
\label{tab:values-cifar10}
\small
\setlength{\tabcolsep}{6pt}
\begin{tabular}{lcccccc}
\toprule
Provider & SV & WSV & PSV & PASV ($b{=}2$) & PASV ($b{=}8$) & PASV ($b{=}32$) \\
\midrule
Owner    & 0.0939 (0.0003) & 0.5080 (0.0000) & 0.5080 (0.0000) & 0.5080 (0.0000) & 0.5080 (0.0000) & 0.5080 (0.0000) \\
Anchor   & 0.0866 (0.0003) & 0.0186 (0.0001) & 0.0419 (0.0039) & 0.0324 (0.0040) & 0.0300 (0.0045) & 0.0235 (0.0040) \\
GAN      & 0.0835 (0.0002) & 0.0283 (0.0001) & 0.0256 (0.0040) & 0.0349 (0.0059) & 0.0358 (0.0033) & 0.0377 (0.0049) \\
DDPM     & 0.0977 (0.0004) & 0.0330 (0.0001) & 0.0242 (0.0034) & 0.0294 (0.0029) & 0.0274 (0.0045) & 0.0349 (0.0041) \\
DDIM     & 0.0864 (0.0003) & 0.0232 (0.0001) & 0.0238 (0.0030) & 0.0220 (0.0032) & 0.0204 (0.0030) & 0.0194 (0.0044) \\
FM       & 0.0910 (0.0003) & 0.0306 (0.0001) & 0.0218 (0.0031) & 0.0241 (0.0047) & 0.0255 (0.0063) & 0.0229 (0.0032) \\
Copier   & 0.0736 (0.0004) & -0.0002 (0.0001) & -0.0045 (0.0037) & -0.0066 (0.0020) & 0.0002 (0.0009) & 0.0017 (0.0007) \\
Poisoner & 0.0324 (0.0002) & 0.0034 (0.0001) & 0.0043 (0.0014) & 0.0009 (0.0010) & -0.0022 (0.0004) & -0.0031 (0.0005) \\
\bottomrule
\end{tabular}
\end{table}

\clearpage

\section{Experimental Details in Section \ref{sec:feature-attribution}}
\label{app:exp-fa}

\subsection{On-Manifold Utility with Conditional $k$-NN Imputation}
\label{app:exp-fa-utility}

Feature attribution evaluates how a fixed trained predictor changes as features are revealed, so we define the utility of a feature subset through a masking-based prediction score.
Let $f_y(x)$ denote the predicted probability assigned to class $y$ at input $x$.

For a subset of feature indices $S\subseteq[n]$ and a labeled test point $(x,y)$, the utility is computed by keeping the observed coordinates $x_S$ fixed and imputing the remaining coordinates $x_{S^c}$ from a conditional reference distribution rather than from the marginal \citep{janzing2020feature,sundararajan2020many,frye2020shapley}.
This on-manifold construction is used to avoid implausible masked inputs that can arise under marginal imputation when features are dependent.

Concretely, we approximate the conditional distribution of the unobserved coordinates given the observed ones, $p(x_{S^c}|x_S)$, using $k$-NN sampling in the subspace restricted to coordinates in $S$.
Given a test point $(x,y)$ and $S$, let $\{x^{(1)},\dots,x^{(k)}\}$ denote the $k$ nearest neighbors of $x$ among training inputs when distances are computed using only the coordinates in $S$.
For each neighbor $x^{(j)}$, we form a composite input $x_S \oplus x^{(j)}_{S^c}$, which keeps the coordinates in $S$ from $x$ and fills the remaining coordinates from $x^{(j)}$.
The per-point utility is defined as the conditional expectation
$$
u_S(x,y) = \mathbb{E}_{x'_{S^c}\sim p(x'_{S^c}|x_S)}\big[f_y(x_S \oplus x'_{S^c})\big],
$$
and is estimated using the $k$-NN conditional sampler by
$$
\widehat{u}_S(x,y) = \frac{1}{k}\sum_{j=1}^k f_y(x_S \oplus x^{(j)}_{S^c}).
$$
Finally, following the global aggregation used in Shapley-based feature attribution, we define the subset utility as
$$
U(S) = \mathbb{E}_{(x,y)\sim p(x,y)}\big[u_S(x,y)\big].
$$
In implementation, we estimate $U(S)$ by Monte Carlo using $n_{\text{eval}}$ test points $(x^{(t)},y^{(t)})$ sampled uniformly from the test split and the $k$-NN estimator $\widehat{u}_S(x,y)$, i.e.,
$$
\widehat{U}(S) = \frac{1}{n_{\text{eval}}}\sum_{t=1}^{n_{\text{eval}}} \widehat{u}_S\big(x^{(t)},y^{(t)}\big).
$$
Implementation details of the utility evaluation procedure are provided in Algorithm~\ref{alg:exp-fa-utility}.

\begin{algorithm}[t]
    \caption{
    Utility evaluation with conditional $k$NN imputation.
    }
    \label{alg:exp-fa-utility}
    \begin{algorithmic}
    \STATE {\bfseries Input:} training set $\mathcal{D}_{\text{train}}=\{(x'_i,y'_i)\}_{i=1}^{n_{\text{train}}}$; test set $\mathcal{D}_{\text{test}}=\{(x_i,y_i)\}_{i=1}^{n_{\text{test}}}$; trained predictor $f$; subset $S\subseteq[n]$; number of neighbors $k$; number of evaluation points $n_{\text{eval}}$.
    \STATE {\bf Initialize} $\widehat{U}(S)\leftarrow 0$
    \FOR{$t=1$ {\bfseries to} $n_{\text{eval}}$}
      \STATE Sample $(x,y)$ uniformly from $\mathcal{D}_{\text{test}}$
      \STATE Find $k$ nearest neighbors $\{x^{(1)},\ldots,x^{(k)}\}$ of $x$ among $\{x'_i\}_{i=1}^{n_{\text{train}}}$ restricted to coordinates in $S$
      \STATE {\bf Initialize} $\widehat{u}\leftarrow 0$
      \FOR{$j=1$ {\bfseries to} $k$}
        \STATE Form composite input $\tilde{x}^{(j)} \leftarrow x_S \oplus x^{(j)}_{S^c}$
        \STATE $\widehat{u}\leftarrow \widehat{u} + f_y(\tilde{x}^{(j)})$
      \ENDFOR
      \STATE $\widehat{U}(S)\leftarrow \widehat{U}(S) + \widehat{u}/k$
    \ENDFOR
    \STATE $\widehat{U}(S)\leftarrow \widehat{U}(S)/n_{\text{eval}}$
    \STATE \textbf{return} $\widehat{U}(S)$
    \end{algorithmic}
\end{algorithm}

\subsection{Detailed Setup}
\label{app:exp-fa-setup}

We provide further implementation details for the feature attribution experiments in Section~\ref{sec:feature-attribution}.
We used a fixed random seed for the train test split, and replicated value estimation across $10$ independent random seeds.
In the results, we report the mean and one standard deviation across these runs.

The experimental setup largely followed \citet{frye2020asymmetric}.
The Census Income dataset was preprocessed by dropping samples with missing values, yielding $45{,}222$ samples with $12$ features (intentionally excluding \texttt{fnlwgt} and \texttt{education-num}), followed by a $75/25$ train test split.
All categorical features were encoded using one hot encoding, while continuous features were standardized to have zero mean and unit variance.
An MLP classifier $f$ was trained with ReLU activations and $\ell_2$ regularization with coefficient $10^{-4}$.
Optimization used Adam with $(\beta_1,\beta_2)=(0.9,0.999)$ and learning rate $10^{-3}$.
Training ran for up to $200$ epochs with batch size $128$, using early stopping with a validation fraction of $0.25$ and a patience of $20$ epochs.
For the conditional sampler used to evaluate $U(S)$, we set $k=100$, $n_\text{eval}=1000$ and used Euclidean distance.
All values were estimated based on $N_{\mathrm{MC}}=3000$ Monte-Carlo samples, and for the general DAG setting, linear extensions were sampled using Algorithm \ref{alg:tilt-mcmc} with burn-in $B=10{,}000$ and thinning $\tau=1{,}000$.

\subsection{DAG Construction for Census Income Dataset via LLM Consultation}
\label{app:gpt-dag}

In this experiment, the detailed precedence DAG over the 12 Census Income features was obtained by starting from the coarse ordered-partition constraint of \citet{frye2020asymmetric} and consulting {\bf ChatGPT 5.2 Thinking} only to propose candidate relations based on common sense and domain knowledge.
Note that we are not making a causal claim providing this DAG, and the full conversation transcript is provided below for transparency.

\begin{qabox}[Prompt]
We consider the U.S. Census Income dataset for predicting whether annual income exceeds $50K. Features included in the dataset are as below:

- age: age in years.
- workclass: employment type (e.g., Private, Self-emp, Gov).
- education: highest education level attained (categorical; e.g., Bachelors, HS-grad, etc.).
- marital-status: marital status (e.g., Married-civ-spouse, Never-married, Divorced, etc.).
- occupation: job category (e.g., Tech-support, Craft-repair, Exec-managerial, etc.).
- relationship: household relationship/role (e.g., Husband, Wife, Own-child, Not-in-family, etc.).
- race: self-identified race category.
- sex: biological sex / gender as recorded (Male/Female).
- capital-gain: capital gains (numeric).
- capital-loss: capital losses (numeric).
- hours-per-week: usual hours worked per week (numeric).
- native-country: country of origin (categorical).

We treat each input variable as a player in a cooperative game and use a DAG over features to encode precedence constraints for feature attribution. For constructing a DAG, A common approach in existing papers is to partition variables into two blocks:
A (demographic): {age, race, sex, native-country}
B (socioeconomic): {workclass, education, marital-status, occupation, relationship, capital-gain, capital-loss, hours-per-week}
and assume a complete bipartite DAG A -> B (i.e., every variable in A has a directed edge into every variable in B), with no edges within A and no edges within B.

Please evaluate whether the DAG specified above is reasonable. If it is, simply say: "The specified DAG is reasonable." and ignore everything below. If it is not, propose a more realistic DAG that better reflects plausible relationships among these variables.
If you propose a new DAG, first provide a brief paragraph explaining your design principles for the DAG, and then provide the DAG as an edge list in the form (parent -> child).
\end{qabox}

\begin{qabox}[Response]
The specified bipartite DAG A -> B is not very realistic. While it's sensible that demographic variables often precede many socioeconomic outcomes, the assumption of no edges within B is a major miss: there are strong, plausible precedence relations like education -> occupation/workclass, marital-status -> relationship, and occupation/workclass -> hours-per-week, etc. Treating all B variables as mutually "parallel" can distort attributions by forcing effects that plausibly flow through education/occupation to instead be credited directly to demographics.

Design principles

I'm aiming for a DAG that (1) respects temporal/structural precedence (stable attributes and early-life factors precede later labor-market outcomes), (2) captures the most obvious within-socioeconomic dependencies (especially education -> occupation/workclass and marital-status -> relationship), (3) stays sparse and defensible (only edges that are broadly plausible across the population), and (4) avoids dubious causal claims (e.g., race <-> native-country) by not forcing edges where directionality is unclear.

Proposed DAG (edge list)

(age -> education)
(age -> marital-status)
(age -> occupation)
(age -> relationship)
(age -> capital-gain)
(age -> capital-loss)
(sex -> education)
(sex -> occupation)
(sex -> workclass)
(sex -> marital-status)
(sex -> relationship)
(race -> education)
(race -> occupation)
(race -> workclass)
(native-country -> education)
(native-country -> occupation)
(native-country -> workclass)
(education -> occupation)
(occupation -> workclass)
(occupation -> hours-per-week)
(workclass -> hours-per-week)
(education -> capital-gain)
(education -> capital-loss)
(occupation -> capital-gain)
(occupation -> capital-loss)
(marital-status -> relationship)

This keeps the intuitive "demographics often precede socioeconomic outcomes" idea, but also encodes key intra-B precedence constraints that the complete bipartite A -> B structure cannot represent.
\end{qabox}

\clearpage

\subsection{Additional Experiments}
\label{app:exp-fa-additional}

\subsubsection{Sweep Diagnostics}

As shown in Figure \ref{fig:base-value-income} (and in the right panel of Figure \ref{fig:value-heatmap-income}), we notice that varying the priority weights to promote or delay a given feature does not necessarily induce a monotone change in its estimated value.
Motivated by these non-monotone responses, we conduct a diagnostic analysis and focus on a representative case that exhibits a obvious U-shaped pattern under the sweep, \texttt{capital-gain}.
First, based on the general DAG in Figure~\ref{fig:dag-income-general}, we can anticipate what happens in each limiting case.
Clearly, since \texttt{capital-gain} is maximal in the full player set, in the $\lambda\to\infty$ scenario it is placed at the very end of the linear extension with probability one.
On the other hand, in the $\lambda\to0^{+}$ case, \texttt{capital-gain} is promoted as much as possible until it becomes the unique element of the maximal set. In particular, in the perspective of the backward sampling scheme, \{\texttt{age}, \texttt{sex}, \texttt{native-country}, \texttt{race}, \texttt{education}, \texttt{occupation}\} (and only these) always appear before it, so under the limiting distribution \texttt{capital-gain} occupies seventh position with probability one (which is also consistent with the rank pattern in Figure \ref{fig:rank-value-income}).
Meanwhile, we observed the following interesting relationship among the three players \texttt{capital-gain}, \texttt{capital-loss}, and \texttt{marital-status}:
\begin{enumerate}
\item \texttt{capital-gain} attains a \textbf{larger} marginal contribution when \texttt{capital-loss} has already been included in the model.
\item \texttt{capital-gain} attains a \textbf{smaller} marginal contribution when \texttt{marital-status} has already been included in the model.
\end{enumerate}
To support this, Table \ref{tab:cg-ordering} partitions sampled linear extensions by the relative ordering among \texttt{capital-gain}, \texttt{capital-loss}, and \texttt{marital-status}, and reports the corresponding mean marginal contribution of \texttt{capital-gain}.
The table shows a dependence among three features.
When \texttt{marital-status} precedes \texttt{capital-gain} while \texttt{capital-loss} remains absent, the marginal contribution of \texttt{capital-gain} is the smallest, and this happens frequently.
In contrast, when \texttt{capital-loss} precedes \texttt{capital-gain} and \texttt{marital-status} is still absent, \texttt{capital-gain} achieves the largest marginal contribution, although it occurs rarely.
The mixed case, where \texttt{capital-gain} appears after both \texttt{marital-status} and \texttt{capital-loss} (as in $\lambda\to\infty$) or before both (as in $\lambda\to0^{+}$) yield intermediate marginal contributions.
Noting that the weight average of the marginal contribution in Table \ref{tab:cg-ordering} is a value of \texttt{capital-gain} under $\lambda=1$, this clarifies the U-shape behavior of the sweeping result.
In summary, \texttt{capital-loss} acts as an amplifying context for \texttt{capital-gain}, whereas \texttt{marital-status} acts as a suppressing context, and the priority sweep modulates the value of \texttt{capital-gain} value primarily through shifting the frequencies of these contexts.

\begin{table}[t]
\centering
\caption{Average marginal contribution of \texttt{capital-gain} under different ordering with \texttt{marital-status} and \texttt{capital-loss};  The baseline sampling distribution ($\lambda=1$) is used. Entries report mean (sd) across replications.}
\label{tab:cg-ordering}
\setlength{\tabcolsep}{8pt}
\renewcommand{\arraystretch}{1.15}
\begin{tabular}{lcc}
\toprule
Case of Ordering & Proportion in Samples & Marginal Contribution \\
\midrule
\texttt{marital-status} $\rightarrow$ \texttt{capital-gain} $\rightarrow$ \texttt{capital-loss}
& 0.3873 (0.0087) & 0.01663 (0.00000) \\
$\{\texttt{marital-status}, \texttt{capital-loss}\} \rightarrow$ \texttt{capital-gain}
& 0.4620 (0.0087) & 0.01807 (0.00001) \\
\texttt{capital-gain} $\rightarrow \{\texttt{marital-status}, \texttt{capital-loss}\}$
& 0.1134 (0.0062) & 0.01835 (0.00002) \\
\texttt{capital-loss} $\rightarrow$ \texttt{capital-gain} $\rightarrow$ \texttt{marital-status}
& 0.0373 (0.0048) & 0.02040 (0.00004) \\
\bottomrule
\end{tabular}
\end{table}

\begin{figure}[h]
  \centering
  \includegraphics[width=\linewidth]{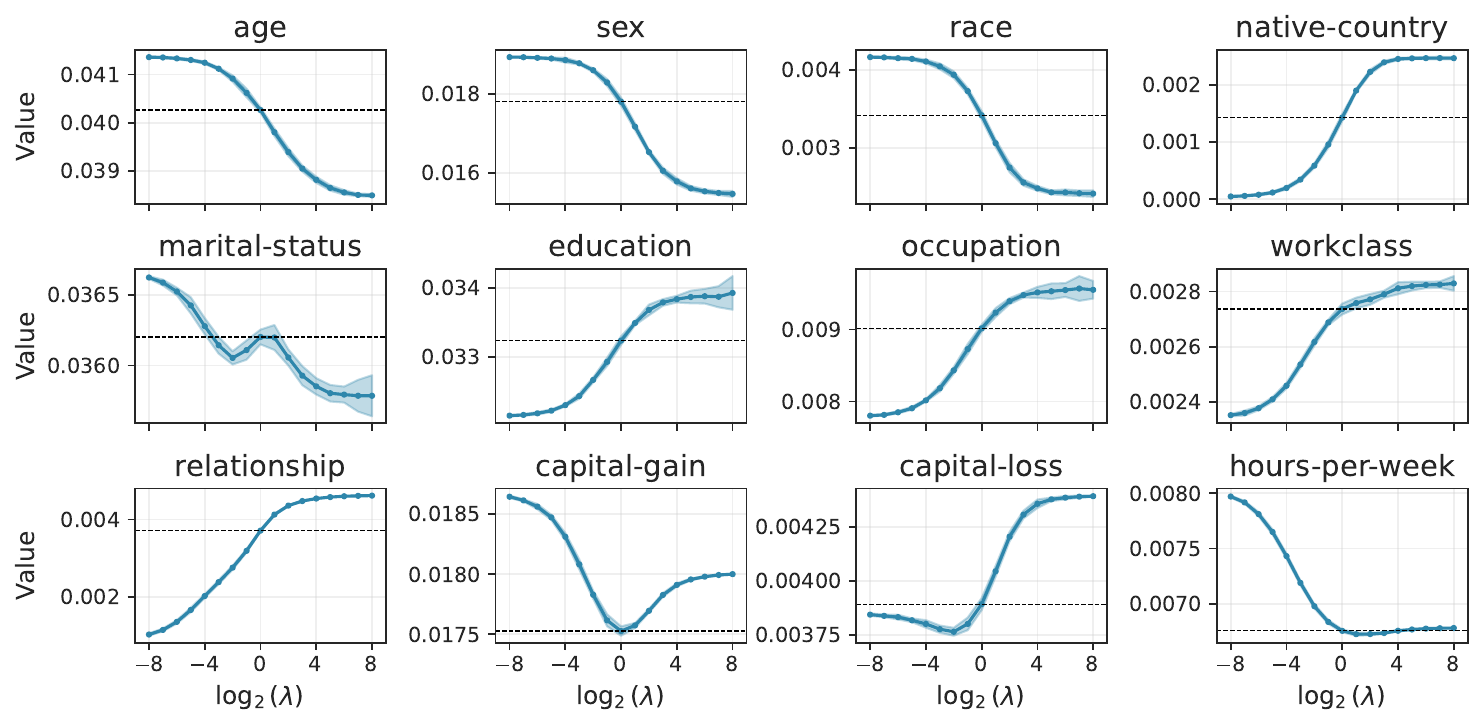}
  \caption{Change of feature value from priority sweep on the Census Income dataset.}
  \label{fig:base-value-income}
\end{figure}

\begin{figure}[h]
  \centering
  \includegraphics[width=\linewidth]{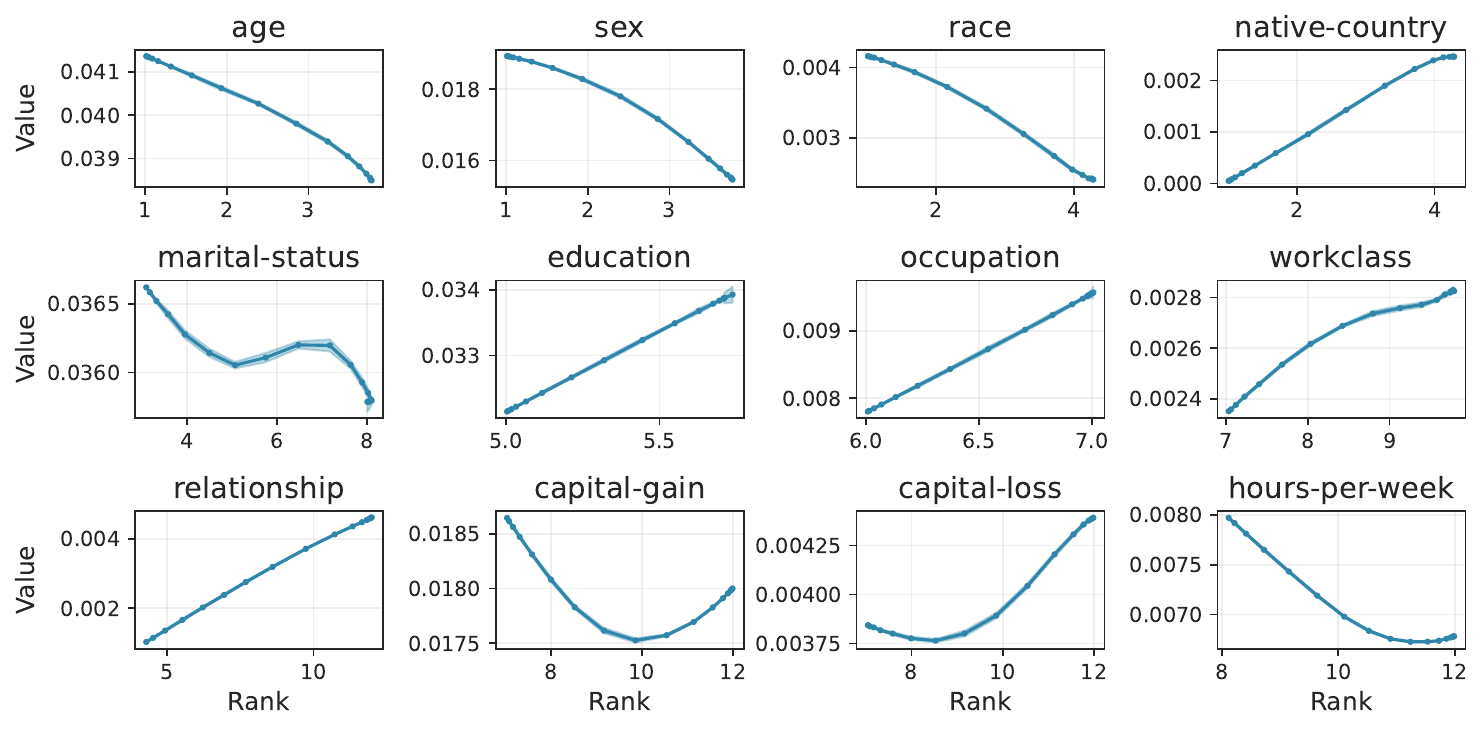}
  \caption{Feature values as a function of the feature's rank induced by priority sweep.}
  \label{fig:rank-value-income}
\end{figure}

\clearpage

\subsubsection{Sensitivity Analysis}
\label{app:exp-fa-dag-sensitivity}

The feature-attribution experiment in Section~\ref{sec:feature-attribution} already shows that the attribution vector changes when the coarse ordered-partition DAG is replaced by the more detailed DAG in Figure~\ref{fig:dag-income-general}.
Here we study a finer robustness question: how sensitive are the resulting PASV scores to local misspecification of the detailed DAG?
This question is important because a precedence DAG is a modeling input rather than an observed ground truth.
At the same time, complete insensitivity to the DAG would not be desirable: if a structural change substantially alters which feature can precede another, then a precedence-aware attribution method should reflect that semantic change.
Thus, the purpose of this experiment is to distinguish small structural perturbations that leave the attribution stable from larger perturbations that materially change the induced order distribution.

We use the same Census Income feature-attribution setting as in Section~\ref{sec:feature-attribution}.
The base DAG is the general DAG in Figure~\ref{fig:dag-income-general}, which has $12$ nodes and $26$ edges, and we set $\lambda\equiv1$.
Only the DAG edges are perturbed; all other experimental configurations are kept fixed.
Let $\psi$ and $\psi'$ denote the PASV vectors under the base and perturbed DAGs, respectively.
For pairwise order probabilities, let
$$
    Q_{ij}=\mathbb{P}(i\prec_\pi j)
    \quad\text{and}\quad
    Q'_{ij}=\mathbb{P}'(i\prec_\pi j),
$$
where the probabilities are estimated by Monte Carlo under the corresponding DAG.

We use Relative Attribution Shift (RAS) as the primary measure of attribution change:
\begin{equation}
    \mathrm{RAS}
    =
    \frac{\|\psi'-\psi\|_2}{\|\psi\|_2}.
    \label{eqn:app-ras}
\end{equation}
To interpret RAS, we also report four auxiliary quantities: the Pearson correlation between $\psi$ and $\psi'$, the top-4 overlap between the four highest-valued features under the two vectors, the order-distribution change
\begin{equation}
    \Delta_{\mathrm{ord}}
    =
    \frac{1}{n(n-1)}
    \sum_{i\neq j}
    |Q'_{ij}-Q_{ij}|,
    \label{eqn:app-order-distribution-change}
\end{equation}
and the fraction of node pairs whose deterministic relation changes.
For the latter, let $r_{\preceq}(i,j)\in\{i\preceq j,\ j\preceq i,\ i\parallel j\}$ denote the relation type of the unordered pair $\{i,j\}$ under the DAG.
We report
\begin{equation}
    \Delta_{\mathrm{rel}}
    =
    \frac{2}{n(n-1)}
    \sum_{1\le i<j\le n}
    \mathbbm{1}\{r_{\preceq'}(i,j)\neq r_{\preceq}(i,j)\}.
    \label{eqn:app-deterministic-relation-change}
\end{equation}

We consider four perturbation scenarios.
First, we delete a single effective edge, considering the $12$ deletions corresponding to the transitive reduction.
Second, we add a single edge, using $20$ randomly sampled valid additions among currently incomparable pairs that preserve acyclicity.
Third, we delete minimum edge cuts that disconnect the DAG into two equal components, yielding $6$ cases.
Fourth, we rank the $12$ effective single-edge deletions by their RAS, and for $K\in\{3,4,5\}$ either delete the top-$K$ most influential edges or keep only those top-$K$ edges while deleting the others.
The effective deleted edges and sampled valid additions used in the single-edge panels are listed below.

\begin{center}
\small
\setlength{\tabcolsep}{8pt}
\begin{tabular}{ll}
\toprule
Effective deletions & Effective deletions \\
\midrule
\texttt{marital-status}$\to$\texttt{relationship} & \texttt{workclass}$\to$\texttt{hours-per-week} \\
\texttt{education}$\to$\texttt{occupation} & \texttt{sex}$\to$\texttt{education} \\
\texttt{age}$\to$\texttt{marital-status} & \texttt{occupation}$\to$\texttt{capital-loss} \\
\texttt{sex}$\to$\texttt{marital-status} & \texttt{native-country}$\to$\texttt{education} \\
\texttt{age}$\to$\texttt{education} & \texttt{occupation}$\to$\texttt{capital-gain} \\
\texttt{occupation}$\to$\texttt{workclass} & \texttt{race}$\to$\texttt{education} \\
\bottomrule
\end{tabular}
\end{center}

\begin{center}
\small
\setlength{\tabcolsep}{8pt}
\begin{tabular}{ll}
\toprule
Valid additions & Valid additions \\
\midrule
\texttt{relationship}$\to$\texttt{education} & \texttt{relationship}$\to$\texttt{capital-gain} \\
\texttt{relationship}$\to$\texttt{occupation} & \texttt{capital-loss}$\to$\texttt{capital-gain} \\
\texttt{marital-status}$\to$\texttt{race} & \texttt{relationship}$\to$\texttt{hours-per-week} \\
\texttt{marital-status}$\to$\texttt{native-country} & \texttt{age}$\to$\texttt{race} \\
\texttt{sex}$\to$\texttt{age} & \texttt{marital-status}$\to$\texttt{capital-gain} \\
\texttt{hours-per-week}$\to$\texttt{marital-status} & \texttt{workclass}$\to$\texttt{capital-loss} \\
\texttt{race}$\to$\texttt{sex} & \texttt{workclass}$\to$\texttt{capital-gain} \\
\texttt{relationship}$\to$\texttt{workclass} & \texttt{capital-gain}$\to$\texttt{hours-per-week} \\
\texttt{native-country}$\to$\texttt{age} & \texttt{marital-status}$\to$\texttt{hours-per-week} \\
\texttt{marital-status}$\to$\texttt{occupation} & \texttt{race}$\to$\texttt{relationship} \\
\bottomrule
\end{tabular}
\end{center}

\begin{figure}[t]
  \centering
  \includegraphics[height=0.82\textheight,keepaspectratio]{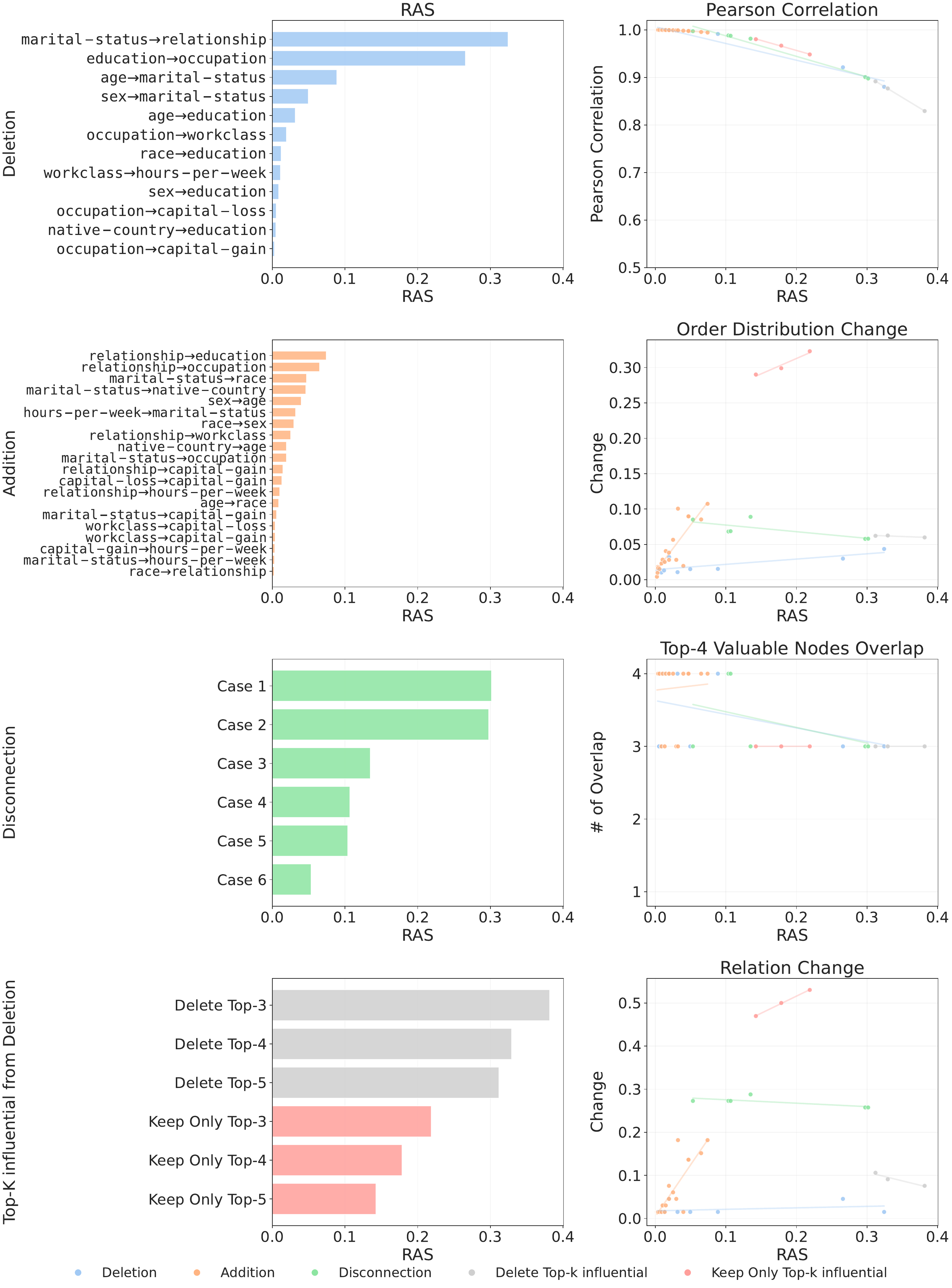}
  \caption{Sensitivity analysis under DAG misspecification in the Census Income feature-attribution experiment.
  The base DAG is Figure~\ref{fig:dag-income-general} with $\lambda\equiv1$.
  Rows correspond to single-edge deletion, single-edge addition, DAG disconnection, and deleting or keeping only the top-$K$ influential edges.
  The left column reports RAS for each perturbation, and the remaining columns plot stability metrics against RAS.}
  \label{fig:app-dag-misspecification}
\end{figure}

Figure~\ref{fig:app-dag-misspecification} shows that PASV is stable to minor perturbations but sensitive to structural changes that substantially modify the admissible order distribution.
Many single non-critical edge additions or deletions leave the attribution vector nearly unchanged, with RAS below $0.05$ and Pearson correlation above $0.97$.
In contrast, disconnecting the DAG or removing critical edges produces larger shifts.
The figure also shows that RAS is closely tied to changes in the pairwise order probabilities.
This supports the intended interpretation of PASV under DAG uncertainty: robustness is expected when a perturbation has little effect on the order distribution, whereas a semantically consequential change in the precedence structure should be visible in the attribution.

\end{document}